\newtheorem{theorem}{Theorem}
\newtheorem*{theorem*}{Theorem}
\newtheorem*{corollary*}{Corollary}
\newtheorem{assumption}{Assumption}
\newtheorem*{assumption*}{Assumption}
\newtheorem{definition}{Definition}
\newtheorem{proposition}{Proposition}
\newtheorem{lemma}{Lemma}
\newtheorem*{claim*}{Claim}
\newcounter{inlineenum}
\renewcommand{\theinlineenum}{\roman{inlineenum}}
\newenvironment{inlineenum}
  {\unskip\ignorespaces
  \setcounter{inlineenum}{0}%
   \renewcommand{\item}{\refstepcounter{inlineenum}{\textit{\theinlineenum})~}}
   }
  {\ignorespacesafterend}
\newtheorem{corollary}[theorem]{Corollary}
\DeclareMathOperator*{\essinf}{ess\,inf}
\DeclareFontFamily{U}{matha}{\hyphenchar\font45}
\DeclareFontShape{U}{matha}{m}{n}{
	<5> <6> <7> <8> <9> <10> gen * matha
	<10.95> matha10 <12> <14.4> <17.28> <20.74> <24.88> matha12
}{}
\DeclareSymbolFont{matha}{U}{matha}{m}{n}
\DeclareFontFamily{U}{mathx}{\hyphenchar\font45}
\DeclareFontShape{U}{mathx}{m}{n}{
	<5> <6> <7> <8> <9> <10>
	<10.95> <12> <14.4> <17.28> <20.74> <24.88>
	mathx10
}{}
\DeclareSymbolFont{mathx}{U}{mathx}{m}{n}
\DeclareMathDelimiter{\vvvert}{0}{matha}{"7E}{mathx}{"17}
\title{Distributionally Robust Policy Evaluation and Learning for Continuous Treatment with Observational Data}
\author[1]{Cheuk Hang Leung$^*$}
\author[2]{Yiyan Huang$^*$}
\author[1]{Yijun Li}
\author[1]{Qi Wu $^\mathcal{y}$}
\affil[1]{City University of Hong Kong}
\affil[2]{The Hong Kong Polytechnic University}
\date{}
\begin{document}
\maketitle
\makeatletter\def\Hy@Warning#1{}\makeatother
\def\thefootnote{*}\footnotetext{These authors contributed equally to this work.}
\def\thefootnote{$\mathcal{y}$}\footnotetext{The corresponding author (qiwu@cityu.edu.hk).}

\begin{abstract}
Using offline observational data for policy evaluation and learning allows decision-makers to evaluate and learn a policy that connects characteristics and interventions. Most existing literature has focused on either discrete treatment spaces or assumed no difference in the distributions between the policy-learning and policy-deployed environments. These restrict applications in many real-world scenarios where distribution shifts are present with continuous treatment. To overcome these challenges, this paper focuses on developing a distributionally robust policy under a continuous treatment setting. The proposed distributionally robust estimators are established using the Inverse Probability Weighting (IPW) method extended from the discrete one for policy evaluation and learning under continuous treatments. Specifically, we introduce a kernel function into the proposed IPW estimator to mitigate the exclusion of observations that can occur in the standard IPW method to continuous treatments. We then provide finite-sample analysis that guarantees the convergence of the proposed distributionally robust policy evaluation and learning estimators. The comprehensive experiments further verify the effectiveness of our approach when distribution shifts are present.
\end{abstract}

\section{Introduction}\label{sec:introduction}
Most decision-making problems necessitate learning an effective personalized policy based on individual features from observational data. This process, commonly referred to as offline policy evaluation/learning, has diverse applications across various domains, including healthcare \cite{tang2021model}, recommendation \cite{li2010contextual}, and finance \cite{qin2022neorl}. Many studies have investigated offline policy evaluation/learning in discrete treatment settings which assume that the deployment environment is identical to the environment generating the training data, i.e., that there are no distributional shifts. This assumption, however, is often unrealistic in many real-world applications \cite{huang2023towards}. For instance, an investment firm has developed an automated investment strategy for the US stock market based on extensive historical trading data. When attempting to apply this strategy directly to the UK stock market, it may lose the predictive power due to the substantial difference between financial market environments. Similarly, a pharmaceutical company has developed a strategy for individualized Warfarin dosage adjustment according to their recent research on older adults. This strategy may perform well in the original clinical trial population, yet it may falter when applied to a new population, such as young adults, due to significant differences in physical conditions.

To address the challenge of distribution shifts in policy evaluation and learning, the problem can be formulated as a Distributionally Robust Optimization (DRO) problem. In the DRO framework, the goal is to find the worst-case solution within a set of distributions under certain degrees of model uncertainties. The uncertainty set is assumed to contain the distributions due to potential distribution shifts, and it can be characterized by constraining certain moments of order \cite{delage2010distributionally, zymler2013distributionally} or by using divergence measures \cite{hu2013kullback, kuhn2019wasserstein, chen2019distributionally, gao2022wasserstein} to define appropriate deviations from a nominal distribution. The resulting solution provides robust, reliable, and conservative guarantees which can cope with the most adverse situations.
 
Furthermore, the objective function of the formulated DRO utilizes an inverse probability weighting (IPW) estimator \cite{wooldridge2007inverse} to estimate the expected potential reward under continuous treatment. Specifically, we extend the existing IPW estimator designed for discrete treatment settings to accommodate continuous treatments. Generally, the discrete-based IPW approach cannot be directly applied in continuous treatment settings, as it would reject most observed data. Moreover, although discretizing continuous treatments into categories is an intuitive and simple solution, it can lead to information loss and may fail to produce inferences that vary continuously with the treatment. We introduce a modified IPW approach incorporating a scaled kernel function with a bandwidth parameter, serving as a smooth nonparametric extension for computing histogram ``buckets". The proposed IPW estimator enables the distributionally robust policy evaluation and learning using observational datasets.

In summary, our framework addresses the challenges of policy evaluation and learning in continuous treatment settings in the presence of distribution shifts. The key contributions of our paper are threefold:
\begin{enumerate}
\item We formulate the DRO problem with an IPW approach for policy evaluation/learning under the continuous treatment setting, and convert it to its equivalent dual form. As the standard IPW approach is not directly applicable in this context, we develop a tractable kernel-based form to approximate the dual problem.
\item We establish estimators for policy evaluation/learning and investigate their asymptotic properties. Specifically, the established estimators of distributionally robust values exhibit asymptotic normality, and the finite-sample regret decays to zero asymptotically.
\item Through simulated and empirical studies, we demonstrate that the policy learned using our method provides robustness to distribution shifts compared to standard nonrobust policy learning methods.
\end{enumerate}

\subsection{Literature Review}\label{sec:literature}
Considerable research has focused on causality in discrete treatment settings. However, exploring causality under continuous treatment remains limited in many real-world applications. Existing research on continuous treatment settings primarily focuses on directly modeling the relationship among response, treatment, and covariates. Notable contributions include \cite{Schwab_Linhardt_Bauer_Buhmann_Karlen_2020}, who construct a multi-head neural network for this purpose; \cite{bica2020estimating}, who propose an end-to-end neural network based on generative adversarial networks (GANs); and \cite{bahadori2022end}, who introduce a novel algorithm within the entropy balancing framework to optimize accuracy through end-to-end optimization. Another approach modifies IPW-based and Doubly Robust-based estimators (e.g., \cite{chernozhukov2018double, huang2022robust}) from discrete treatment settings by incorporating kernel functions to mitigate the direct rejection of observed data, as demonstrated by \cite{su2019non} and \cite{colangelo2019double}.

Recent studies have focused on offline policy evaluation and learning. \cite{kitagawa2018should} establish finite sample regret bounds with a rate of $O_{P}\left(1/{\sqrt{N}}\right)$ for policy learning over a policy class with finite VC dimension. \cite{athey2021policy} extend this analysis to examine regret bounds from an asymptotic perspective. \cite{zhao2012estimating} and \cite{zhou2017residual} propose algorithms for policy learning and explore the statistical properties of learned policies and associated regret bounds. \cite{10.5555/3104482.3104620} utilize classic estimators for policy evaluation. \cite{kallus2018balanced} proposes a balance-based approach to reweight historical data and mimic datasets generated by evaluated or learned policies. \cite{zhou2023offline} exploit a cross-fitted approach for policy learning. The aforementioned studies primarily assume discrete treatment and do not account for distributional shifts. Nevertheless, distributional shifts are common since the studies are often conducted in different environments, highlighting the significance of studying distributionally robust policies. For instance, \cite{yang2023distributionally}, \cite{shen2024wasserstein}, \cite{mo2021learning}, and \cite{Faury_Tanielian_Dohmatob_Smirnova_Vasile_2020} primarily focus on shifts in covariates, whereas \cite{si2023distributionally} and \cite{kallus2022doubly} address shifts in the joint distribution of responses, features, and treatments.
Notably, to the best of our knowledge, studying policy evaluation and learning in the presence of distribution shifts under continuous treatment settings is still an open problem.

\section{Background}\label{sec:background}
\subsection{Notations and Assumptions}\label{sec:notations}
Throughout the paper, we denote $A\in\mathcal{A}\subset\mathbb{R}$, $X\in\mathcal{X}\subset\mathbb{R}^{d}$, and $Y\in\mathcal{Y}\subset\mathbb{R}$ as the continuous treatment (also known as action or intervention), the covariates, and the continuous response (also known as outcome), respectively. We write $Y(A)$ to mean the potential response variable under the treatment $A$. We also assume that $Y$ and $Y(A)$ are non-negative bounded variables, i.e., there exists $M>0$ such that $0\leq Y(a),\;Y\leq M$. Finally, we let $(X_{i},A_{i},Y_{i})_{i=1}^{N}$ be $N$ independent and identically distributed (i.i.d.) triples from a fixed underlying distribution, and the probability measure of the underlying distribution is denoted as $\mathbb{P}_{0}$.


Further, we adopt the Rubin potential outcome framework (e.g., \cite{rubin1974estimating, imbens2004nonparametric, imbens2015causal, Huang_Leung_Yan_Wu_Peng_Wang_Huang_2021, huang2024dignet, Li_Leung_Sun_Wang_Huang_Yan_Wu_Wang_Huang_2024}). Throughout the paper, we impose the following (\textit{causal}) assumptions that are standard in the causal inference literature: 
\begin{assumption}[\textit{Consistency}]\label{ass:consistency}
	If $A=a$, we have $Y=Y(a)$.
\end{assumption}
\begin{assumption}[\textit{Unconfoundedness}]\label{ass:unconfoundedness}
	$Y(a)\perp\!\!\!\!\perp A|X, \; \forall a$.
\end{assumption}
\begin{assumption}[\textit{Positivity}]\label{ass:positivity}
	There exists a positive constant $\epsilon>0$ such that $\underset{a\in\mathcal{A}}{\inf}\underset{x\in\mathcal{X}}{\essinf}f_{0}(a|x)\geq \epsilon>0$.
\end{assumption}

Additionally, we follow \cite{kallus2018policy, colangelo2019double} and impose differentiability assumptions on the probability density functions $f_{0}(y|a,x)$ and $f_{0}(a|x)$: $f_{0}(y|a,x)$ and $f_{0}(a|x)$ are three-times differentiable w.r.t. $a$ and bounded uniformly on $(y,a,x)\in(\mathcal{Y},\mathcal{A},\mathcal{X})$. All proofs of the theorems are in the Appendix\footnote{Available in the ``proof'' file of the Supplementary Material.}.
\subsection{Problem Setup}\label{sec:setup}
Our objective is to find a policy $\pi^{\ast}$ that maps $\mathcal{X}$ to $\mathcal{A}$ within a policy class $\Pi$ that maximizes the expected outcomes
, i.e.,
\begin{equation}
\begin{aligned}\label{eqt:policy learning}
\pi^{\ast}=\underset{\pi\in\Pi}{\arg\max}\;Q(\pi)\coloneqq\underset{\pi\in\Pi}{\arg\max}\;\mathbb{E}_{\mathbb{P}_{0}}[Y(\pi(X))].
\end{aligned}
\end{equation}
The learned policy obtained in Eqn. \eqref{eqt:policy learning} may not generalize well to a new environment with a distribution that differs from $\mathbb{P}_{0}$. As such, we can consider distributionally robust formulation of Eqn. \eqref{eqt:policy learning}:
\begin{equation}
\begin{gathered}\label{eqt:robust policy learning}
\pi_{\mathrm{DRO}}^{*}=\underset{\pi\in\Pi}{\arg\max}\;Q_{\mathrm{DRO}}(\pi),\quad\text{where}\\
\begin{aligned}
Q_{\mathrm{DRO}}(\pi)&=\underset{\mathbb{P}\in\mathcal{U}_{\mathbb{P}_{0}}(\eta)}{\inf}\;\mathbb{E}_{\mathbb{P}}[Y(\pi(X))],\\
\quad\mathcal{U}_{\mathbb{P}_{0}}(\eta)&=\{\mathbb{P}:\mathbb{D}(\mathbb{P}\|\mathbb{P}_{0})\leq\eta\}.
\end{aligned}
\end{gathered}
\end{equation}
Here, $\mathbb{D}(\cdot\|\cdot)$ denotes the distribution discrepancy. Throughout the paper, we choose it as the Kullback–Leibler (KL) divergence \cite{Kullback59, Kullback_1951}\footnote{Other measures such as the Wasserstein metric or other $\phi$-divergence measures can be utilized (e.g., see \cite{kuhn2019wasserstein} and \cite{husain2023distributionally}). However, these approaches typically involve solving multi-level optimization problems which can be challenging to analyze.}.
$\mathcal{U}_{\mathbb{P}_{0}}(\eta)$ is the \textit{ambiguity set} (also known as the uncertainty set) with an \textit{ambiguity radius} $\eta$. The ambiguity set contains all the possible  distributions $\mathbb{P}$ such that the discrepancy of $\mathbb{P}$ relative to $\mathbb{P}_{0}$ is at most $\eta$.

\section{Distributionally Robust Policy Evaluation}\label{sec:theory}
\subsection{The Estimation of \texorpdfstring{$Q_{\mathrm{DRO}}(\pi)$}{}}\label{sec:estimator of QDRO}

As proven in \cite{hu2013kullback}, Eqn. \eqref{eqt:robust policy learning} is equivalent to solving its Lagrangian dual, which is given as follows:
\begin{equation}
\begin{aligned}\label{eqt:unconstrained dual equivalent}
&-\underset{\alpha\geq 0}{\min}\Bigg\{\alpha\log\mathbb{E}\bigg[e^{\frac{-Y(\pi(X))}{\alpha}}\bigg]+\alpha\eta\Bigg\}=\underset{\alpha\geq 0}{\max}\Bigg\{-\alpha\log\mathbb{E}\bigg[e^{\frac{-Y(\pi(X))}{\alpha}}\bigg]-\alpha\eta\Bigg\}\coloneqq\underset{\alpha\geq 0}{\max}\;\phi(\pi,\alpha).
\end{aligned}
\end{equation}

Since $Y(\pi(X))$ in Eqn. \eqref{eqt:unconstrained dual equivalent} is inaccessible, we reformulate $\mathbb{E}[e^{\frac{-Y(\pi(X))}{\alpha}}]$ to an IPW form similar to that in \cite{horvitz1952generalization}. The result is given in Lemma \ref{lemma:expectation equal IPW}.

\begin{lemma}\label{lemma:expectation equal IPW}
Under Assumptions \ref{ass:consistency} - \ref{ass:positivity}, we have
\begin{equation}
\begin{aligned}\label{eqt:equivalent form delta}
\mathbb{E}\bigg[e^{\frac{-Y(\pi(X))}{\alpha}}\bigg]=\mathbb{E}\bigg[\frac{\delta(\pi(X)-A)}{f_{0}(A|X)}e^{\frac{-Y}{\alpha}}\bigg]
\end{aligned}
\end{equation}
for any $\alpha\geq 0$, where $\delta(\cdot)$ is the Dirac Delta function\footnote{$\delta(x)=\begin{cases}
\infty & x=0\\
0& \text{otherwise}
\end{cases}$ such that i) $\int_{\mathbb{R}}\delta(x)dx=1$ and ii) $\int_{\mathbb{R}}\delta(x)f(x)dx=f(0)$ for any arbitrary $f$ defined on $\mathbb{R}$.}.
\end{lemma}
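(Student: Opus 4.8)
The plan is to establish the identity by a direct change-of-measure argument built on the three causal assumptions, manipulating the Dirac Delta formally as the problem statement licenses (treating it as the limiting density that collapses the $A$-integral onto $A=\pi(X)$). First I would start from the right-hand side and condition on $X$, writing
\begin{equation*}
\mathbb{E}\!\left[\frac{\delta(\pi(X)-A)}{f_{0}(A|X)}e^{\frac{-Y}{\alpha}}\right]=\mathbb{E}\!\left[\,\mathbb{E}\!\left[\frac{\delta(\pi(X)-A)}{f_{0}(A|X)}e^{\frac{-Y}{\alpha}}\,\Big|\,X\right]\right],
\end{equation*}
so that inside the inner conditional expectation $\pi(X)$ is a constant. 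Then I would invoke Consistency (Assumption~\ref{ass:consistency}) to replace $e^{-Y/\alpha}$ by $e^{-Y(A)/\alpha}$ on the event $\{A=a\}$, and expand the inner conditional expectation as an integral against the joint conditional density $f_{0}(y,a|x)=f_{0}(y|a,x)f_{0}(a|x)$:
\begin{equation*}
\int_{\mathcal{A}}\int_{\mathcal{Y}}\frac{\delta(\pi(x)-a)}{f_{0}(a|x)}e^{\frac{-y}{\alpha}}f_{0}(y|a,x)f_{0}(a|x)\,dy\,da.
\end{equation*}

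Next I would cancel the $f_{0}(a|x)$ factors — this is where Positivity (Assumption~\ref{ass:positivity}) is needed, to guarantee the denominator is bounded away from zero so the ratio is well defined — and then apply the sifting property of the Dirac Delta (property ii) in the footnote) to carry out the $a$-integral, which sets $a=\pi(x)$ and yields $\int_{\mathcal{Y}}e^{-y/\alpha}f_{0}(y|\pi(x),x)\,dy=\mathbb{E}[e^{-Y(\pi(x))/\alpha}\mid A=\pi(x),X=x]$. At this point Unconfoundedness (Assumption~\ref{ass:unconfoundedness}) does the essential work: since $Y(\pi(x))\perp\!\!\!\!\perp A\mid X$, the conditioning on $A=\pi(x)$ is vacuous, so this equals $\mathbb{E}[e^{-Y(\pi(X))/\alpha}\mid X=x]$. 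Taking the outer expectation over $X$ recovers the left-hand side.

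The main obstacle — or at least the only delicate point — is the rigorous handling of the Dirac Delta, since $f_{0}(a|x)$ is a genuine density (not a mass function) and the integrand $\delta(\pi(x)-a)/f_{0}(a|x)$ is not an ordinary function. The clean way to make this precise, which I would adopt if a fully rigorous argument is wanted, is to replace $\delta$ by a kernel approximation $\frac{1}{h}K\!\big(\frac{\pi(x)-a}{h}\big)$, run the same conditioning-and-cancellation computation to get $\int \frac{1}{h}K\!\big(\frac{\pi(x)-a}{h}\big)\,\mathbb{E}[e^{-Y(a)/\alpha}\mid A=a,X=x]\,da$, and then let $h\to 0$; continuity of $a\mapsto \mathbb{E}[e^{-Y(a)/\alpha}\mid A=a,X=x]$ (which follows from the differentiability/boundedness assumptions on $f_{0}(y|a,x)$ imposed just before the Problem Setup) gives convergence to the value at $a=\pi(x)$, and dominated convergence — justified by Positivity bounding $1/f_{0}$ and the bound $0\le Y\le M$ bounding the exponential — lets us pass the limit through the outer $X$-expectation. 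This kernel-smoothing viewpoint is also exactly the device the paper announces it will use for the actual estimator, so presenting the proof this way makes the later development seamless; for the statement of Lemma~\ref{lemma:expectation equal IPW} itself, however, the formal Dirac-Delta manipulation via the sifting property suffices given the conventions fixed in the footnote.
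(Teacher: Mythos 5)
Your proposal is correct and follows essentially the same route as the paper's proof: condition on $X$, expand the inner conditional expectation as an integral against $f_{0}(y,a|X)$, apply the sifting property of $\delta$ to collapse onto $a=\pi(X)$, cancel the propensity densities (using Positivity), and invoke Consistency and Unconfoundedness to identify the result with $\mathbb{E}[e^{-Y(\pi(X))/\alpha}]$. The only addition is your optional kernel-approximation justification of the Dirac manipulation, which the paper does not spell out here but which is consistent with how it handles $K_h$ later.
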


\noindent Using Lemma \ref{lemma:expectation equal IPW}, the expectation in Eqn. \eqref{eqt:unconstrained dual equivalent} can be replaced according to Eqn. \eqref{eqt:equivalent form delta}. Note that the Dirac function $\delta(\cdot)$ is a theoretical generalized function and is often approximated by the scaled kernel function $K_{h}(\cdot)$\footnote{A bounded differentiable function $K(\cdot)$ (i.e., $|K(\cdot)|\leq M_{K}$) is said to be a second-order kernel function if it satisfies 
\begin{inlineenum}
\item $K(\cdot)$ is a symmetric function;
\item $\int_{-\infty}^{\infty} uK(u)du=0$;
\item $\int_{-\infty}^{\infty} K(u)du=1$. 
\end{inlineenum}
The scaled kernel function $K_{h}(\cdot)$ is defined such that $K_{h}(x)=\frac{1}{h}K\big(\frac{x}{h}\big)$, where $h$ is termed as the \textit{bandwidth} parameter. Note that $K_{h}(x)\overset{w}{\rightarrow}\delta(x)$ when $h\rightarrow 0$. Examples of kernels include Gaussian kernels or the Epanechnikov kernel.}. As a result, we can consider the following approximated form:
\noindent
\begin{gather}
Q_{\mathrm{DRO}}^{h}(\pi)=\underset{\alpha\geq 0}{\sup}\Bigg\{-\alpha\log\mathbb{E}\Bigg[\frac{e^{-\frac{Y}{\alpha}}K_{h}(\pi(X)-A)}{f_{0}(A|X)}\Bigg]-\alpha\eta\Bigg\}.\label{eqt:IPW form robust kernel}
\end{gather}
The above two quantities, $Q_{\text{DRO}}(\pi)$ and $Q_{\text{DRO}}^{h}(\pi)$, bring two important insights: (1) The optimal solutions of $Q_{\text{DRO}}(\pi)$ and $Q_{\text{DRO}}^{h}(\pi)$ are obtained by solving Eqns. \eqref{eqt:unconstrained dual equivalent} and \eqref{eqt:IPW form robust kernel} which are attainable for positive $\alpha$ due to the causal assumption. Further, the optimal solutions are finite for any $\pi$ (see Auxiliary Result 2 in Appendix for details); (2) $Q_{\text{DRO}}^{h}(\pi)\rightarrow Q_{\text{DRO}}(\pi)$ as $h\rightarrow 0$. These two insights, consequently, guarantee that the optimal solutions of $Q_{\text{DRO}}^{h}(\pi)$ also converge to the optimal solutions of $Q_{\text{DRO}}(\pi)$. Therefore, we can construct an estimator of the IPW-based distributionally robust value $Q_{\mathrm{DRO}}^{h}(\pi)$ to study the original distributionally robust value $Q_{\mathrm{DRO}}(\pi)$ in Eqn. \eqref{eqt:robust policy learning}. We define
\begin{subequations}
	\begin{align}
		&\bar{W}_{N}^{h}(\pi,\alpha)=\frac{1}{N}\underset{i=1}{\overset{N}{\sum}}\frac{K_{h}(\pi(X_{i})-A_{i})}{f_{0}(A_{i}|X_{i})}e^{\frac{-Y_{i}}{\alpha}} \label{eqt:W_bar_IPW}.\\
		&\hat{W}_{N}^{h}(\pi,\alpha)=\frac{\bar{W}_{N}^{h}(\pi,\alpha)}{S_{N}^{h}}=\frac{\bar{W}_{N}^{h}(\pi,\alpha)}{\frac{1}{N}\underset{i=1}{\overset{N}{\sum}}\frac{K_{h}(\pi(X_{i})-A_{i})}{f_{0}(A_{i}|X_{i})}}. \label{eqt:W_hat IPW and S_hat}
	\end{align}
\end{subequations}
It is known that the IPW-based estimator $\bar{W}_{N}^{h}(\pi,\alpha)$ in Eqn. \eqref{eqt:W_bar_IPW} suffers from high-variance \cite{swaminathan2015self,khan2023adaptive}. To address this challenge, we can use a normalized estimator $\hat{W}_{N}^{h}(\pi,\alpha)$ with a normalization factor $S_{N}^{h}$ in Eqn. \eqref{eqt:W_hat IPW and S_hat} to approximate $\bar{W}_{N}^{h}(\pi,\alpha)$. Note that $\mathbb{E}[S_{N}^{h}]=1$ and $S_{N}^{h} \rightarrow 1$ almost surely (see Auxiliary Result 1 in Appendix). Thus, $\hat{W}_{N}^{h}(\pi,\alpha)$ is asymptotically equivalent to $\bar{W}_{N}^{h}(\pi,\alpha)$. Consequently, we use the following $\hat{Q}_{\mathrm{DRO}}^{h}(\pi)$ as the estimator of $Q_{\mathrm{DRO}}^{h}(\pi)$ in Eqn. \eqref{eqt:IPW form robust kernel}:
\begin{equation}
\begin{aligned}
\hat{Q}_{\mathrm{DRO}}^{h}(\pi)&=\underset{\alpha\geq 0}{\max}\;\hat{\phi}_{N}^{h}(\pi,\alpha):=\underset{\alpha\geq 0}{\max}\big\{-\alpha\log\hat{W}_{N}^{h}(\pi,\alpha)-\alpha\eta\big\}.
\end{aligned}\label{eqt:Q_hat IPW}
\end{equation}
$\hat{Q}_{\mathrm{DRO}}^{h}(\pi)$ can be used to evaluate distributional robustness of a policy $\pi$. To summarize, we present the specific steps of obtaining $\hat{Q}_{\mathrm{DRO}}^{h}(\pi)$ in the following Algorithm \ref{alg:algorithm eval}.

\begin{algorithm}
\caption{Distributionally robust policy evaluation 
}
\label{alg:algorithm eval}
\begin{algorithmic}[1] 
\STATE \textbf{Input} observed dataset $(X_{i},A_{i},Y_{i})_{i=1}^{N}$, $h$, policy $\pi\in\Pi$. Initialize: $\alpha\in\mathbb{R}^{+}\cup{0}$.

\REPEAT
\STATE Compute $\hat{W}_{N}^{h}(\pi,\alpha)$ given in Eqn. \eqref{eqt:W_hat IPW and S_hat}.
\STATE Update $\alpha$: $\alpha\leftarrow\alpha-\frac{\frac{\partial\hat{\phi}_{N}^{h}}{\partial\alpha}}{\frac{\partial^{2}\hat{\phi}_{N}^{h}}{\partial\alpha^{2}}}$, where 
\begin{equation*}
\begin{aligned}
\frac{\partial\hat{\phi}_{N}^{h}}{\partial\alpha}&=-\eta-\log\hat{W}_{N}^{h}-\frac{\alpha\frac{\partial\hat{W}_{N}^{h}}{\partial\alpha}}{\hat{W}_{N}^{h}},\\
\frac{\partial^{2}\hat{\phi}_{N}^{h}}{\partial\alpha^{2}}&=-\frac{\frac{1}{N}\underset{i=1}{\overset{N}{\sum}}\frac{K_{h}(\pi(X_{i})-A_{i})}{f_{0}(A_{i}|X_{i})}Y_{i}^{2}e^{-\frac{Y_{i}}{\alpha}}}{\alpha^{3}S_{N}^{h}\hat{W}_{N}^{h}}+\frac{\alpha\big(\frac{\partial\hat{W}_{N}^{h}}{\partial\alpha}\big)^{2}}{(\hat{W}_{N}^{h})^{2}},\\
\frac{\partial\hat{W}_{N}^{h}}{\partial\alpha}&=\frac{\frac{1}{N}\underset{i=1}{\overset{N}{\sum}}\frac{K_{h}(\pi(X_{i})-A_{i})}{f_{0}(A_{i}|X_{i})}Y_{i}e^{-\frac{Y_{i}}{\alpha}}}{\alpha^{2}S_{N}^{h}}.
\end{aligned}
\end{equation*}
\UNTIL $\alpha$ converges
\STATE \textbf{Return}  $\hat{Q}_{\mathrm{DRO}}^{h}(\pi)\leftarrow\hat{\pi}_{N}^{h}(\pi,\alpha)$
\end{algorithmic}
\end{algorithm}

\subsection{The Statistical Property of \texorpdfstring{$\hat{Q}_{\mathrm{DRO}}^{h}(\pi)$}{}}
As $\hat{Q}_{\mathrm{DRO}}^{h}(\pi)$ is an estimator established using observed empirical samples, it is important to delve into the finite-sample statistical performance guarantee for the estimator $\hat{Q}_{\mathrm{DRO}}^{h}(\pi)$. To achieve this, we first discuss the theoretical property of $\hat{W}_{N}^{h}$ in Theorem \ref{lemma:asy_res_normalized and thm:asy result normalized}.
\begin{theorem}\label{lemma:asy_res_normalized and thm:asy result normalized}
Suppose that $N\rightarrow\infty$, $h\rightarrow 0$ such that $Nh\rightarrow\infty$ and $Nh^{5}\rightarrow C\in[0,\infty)$. Then we have
\begin{equation}
\begin{aligned}\label{eqt:W_IPW hat asy result B}
&\sqrt{Nh}\bigg(\hat{W}_{N}^{h}-\mathbb{E}[e^{-\frac{Y(\pi(X))}{\alpha}}]-B_{\pi}(\alpha)h^{2}\bigg)
\overset{d}{\rightarrow}\mathcal{N}(0,\mathbb{V}_{\pi}(\alpha)),\\
&\text{{\normalsize where}} \quad B_{\pi}(\alpha)=\frac{\big(\int u^{2}K(u)du\big)}{2}\times\\
&\quad \quad \quad \quad \mathbb{E}\bigg[\mathbb{E}\bigg[e^{\frac{-Y}{\alpha}}\frac{\partial_{aa}^{2}f_{0}(Y|\pi(X),X)}{f_{0}(Y|\pi(X),X)}\bigg|A=\pi(X),X\bigg]\bigg],
\end{aligned}
\end{equation}
\begin{gather}
\begin{aligned}\label{eqt:W_IPW hat asy result V}
\mathbb{V}_{\pi}(\alpha)&=\bigg(\int K(u)^{2}du\bigg)\times\\
&\;\Bigg\{\mathbb{E}\bigg[\mathbb{E}\bigg[\frac{e^{-\frac{2Y}{\alpha}}}{f_{0}(\pi(X)|X)}\bigg|A=\pi(X),X\bigg]\bigg]\\
&\;+\mathbb{E}\bigg[\frac{1}{f_{0}(\pi(X)|X)}\bigg](\mathbb{E}[e^{-\frac{Y(\pi(X))}{\alpha}}])^{2}\\
&\;-2\mathbb{E}\bigg[\mathbb{E}\bigg[\frac{e^{-\frac{Y}{\alpha}}}{f_{0}(\pi(X)|X)}\bigg|A=\pi(X),X\bigg]\bigg]\mathbb{E}[e^{-\frac{Y(\pi(X))}{\alpha}}]\Bigg\}.
\end{aligned}\raisetag{60pt}
\end{gather}
\end{theorem}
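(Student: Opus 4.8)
Write $W\coloneqq\mathbb{E}[e^{-Y(\pi(X))/\alpha}]$ and introduce the summands $Z_{i}\coloneqq\frac{K_{h}(\pi(X_{i})-A_{i})}{f_{0}(A_{i}|X_{i})}\big(e^{-Y_{i}/\alpha}-W\big)$, which — for a fixed sample size, hence a fixed bandwidth $h=h_{N}$ — form a row of a triangular array of i.i.d.\ variables. The exact identity $\hat{W}_{N}^{h}-W=\big(\tfrac1N\sum_{i}Z_{i}\big)/S_{N}^{h}$ holds, and Auxiliary Result~1 gives $\mathbb{E}[S_{N}^{h}]=1$ together with $S_{N}^{h}\overset{p}{\rightarrow}1$. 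So the plan is: (i) expand the first two moments of $Z_{i}$; (ii) run a Lyapunov central limit theorem on $\tfrac1N\sum_{i}Z_{i}$ at rate $\sqrt{Nh}$; (iii) divide by $S_{N}^{h}$ and finish with Slutsky.

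\textbf{Moments.} Conditioning first on $(A,X)$ and then on $X$ cancels the propensity denominator against the density $f_{0}(\cdot|X)$, so $\mathbb{E}[\bar{W}_{N}^{h}\mid X]=\int K_{h}(\pi(X)-a)g(a,X)\,da$ with $g(a,x)\coloneqq\mathbb{E}[e^{-Y/\alpha}\mid A=a,X=x]=\int e^{-y/\alpha}f_{0}(y|a,x)\,dy$. The substitution $u=(\pi(X)-a)/h$ turns this into $\int K(u)\,g(\pi(X)-hu,X)\,du$; a third-order Taylor expansion of $g$ in its first argument — legitimate by the assumed three-times differentiability and uniform boundedness of $f_{0}(y|a,x)$, which also control the remainder and justify differentiating under the integral sign — annihilates the linear term by symmetry of $K$, recovers $\mathbb{E}[g(\pi(X),X)]=W$ from the constant term (this is exactly the identification of Lemma~\ref{lemma:expectation equal IPW}, using Assumptions~\ref{ass:consistency}--\ref{ass:unconfoundedness}), and produces $\mathbb{E}[Z_{i}]=B_{\pi}(\alpha)h^{2}+o(h^{2})$ with $B_{\pi}(\alpha)$ as stated. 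The identical substitution on $\mathbb{E}[Z_{i}^{2}]=\mathbb{E}\big[\frac{K_{h}(\pi(X)-A)^{2}}{f_{0}(A|X)^{2}}(e^{-Y/\alpha}-W)^{2}\big]$ yields $\tfrac1h\big(\int K(u)^{2}du\big)\,\mathbb{E}\big[\mathbb{E}[(e^{-Y/\alpha}-W)^{2}\mid A=\pi(X),X]/f_{0}(\pi(X)|X)\big]+o(\tfrac1h)$; expanding $(e^{-Y/\alpha}-W)^{2}=e^{-2Y/\alpha}-2We^{-Y/\alpha}+W^{2}$ reproduces precisely the three terms of $\mathbb{V}_{\pi}(\alpha)$, so $h\,\mathrm{Var}(Z_{i})\to\mathbb{V}_{\pi}(\alpha)$. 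The self-normalization by $S_{N}^{h}$ is exactly what converts the naive single-term variance of $\bar{W}_{N}^{h}$ into this three-term $\mathbb{V}_{\pi}(\alpha)$, through the centered factor $e^{-Y/\alpha}-W$.

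\textbf{CLT and assembly.} Apply the Lyapunov condition to $\{h^{1/2}N^{-1/2}(Z_{i}-\mathbb{E}Z_{i})\}$: the scaled variance is $h\,\mathrm{Var}(Z_{i})\to\mathbb{V}_{\pi}(\alpha)>0$, and since $|e^{-Y/\alpha}-W|$ is bounded (as $\alpha>0$ and $0\le Y$) and Assumption~\ref{ass:positivity} bounds $1/f_{0}$ from below, the same substitution gives $\mathbb{E}|Z_{i}-\mathbb{E}Z_{i}|^{2+\delta}=O(h^{-(1+\delta)})$, so the Lyapunov ratio is $O\big((Nh)^{-\delta/2}\big)\to0$ because $Nh\to\infty$. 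Hence $\sqrt{Nh}\big(\tfrac1N\sum_{i}Z_{i}-\mathbb{E}Z_{i}\big)\overset{d}{\rightarrow}\mathcal{N}(0,\mathbb{V}_{\pi}(\alpha))$. Combining with $\mathbb{E}[Z_{i}]=B_{\pi}(\alpha)h^{2}+o(h^{2})$ and $\sqrt{Nh}\,h^{2}=\sqrt{Nh^{5}}\to\sqrt{C}<\infty$ gives $\sqrt{Nh}\big(\tfrac1N\sum_{i}Z_{i}-B_{\pi}(\alpha)h^{2}\big)\overset{d}{\rightarrow}\mathcal{N}(0,\mathbb{V}_{\pi}(\alpha))$. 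Finally, writing $\sqrt{Nh}\big(\hat{W}_{N}^{h}-W-B_{\pi}(\alpha)h^{2}\big)=\frac{1}{S_{N}^{h}}\sqrt{Nh}\big(\tfrac1N\sum_{i}Z_{i}-B_{\pi}(\alpha)h^{2}\big)+\sqrt{Nh^{5}}B_{\pi}(\alpha)\big(\tfrac{1}{S_{N}^{h}}-1\big)$, the first piece converges in distribution to $\mathcal{N}(0,\mathbb{V}_{\pi}(\alpha))$ by Slutsky (using $S_{N}^{h}\overset{p}{\rightarrow}1$) and the second is $o_{p}(1)$ since $\sqrt{Nh^{5}}$ is bounded; this is the asserted limit.

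\textbf{Main obstacle.} I expect the delicate part to be the bias expansion: pushing the Taylor expansion of $g$ to third order, bounding its remainder \emph{uniformly} in $(y,x)$ and over the kernel support, and legitimating the interchanges of integration, differentiation, and limit so that the $O(h^{2})$ term emerges as exactly $B_{\pi}(\alpha)$ — including checking that no boundary effect of $\mathcal{A}$ contaminates either $\mathbb{E}[S_{N}^{h}]=1$ or the clean $h^{2}$ order. This is precisely where the uniform-boundedness and three-times-differentiability hypotheses on $f_{0}(y|a,x)$ and $f_{0}(a|x)$ do the work; by comparison, the Lyapunov verification and the self-normalization bookkeeping are routine.
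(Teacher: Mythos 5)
Your proof is correct and follows essentially the same route as the paper: the same ratio decomposition $\hat{W}_{N}^{h}-W=(\tfrac1N\sum_i Z_i)/S_{N}^{h}$, the same kernel Taylor expansions yielding $B_{\pi}(\alpha)$ and the three-term $\mathbb{V}_{\pi}(\alpha)$ (your expansion of $(e^{-Y/\alpha}-W)^2$ is algebraically identical to the paper's I$-2\cdot$II$+$III decomposition), and the same Slutsky step via $S_{N}^{h}\overset{p}{\rightarrow}1$. Your explicit Lyapunov verification for the triangular array with $h=h_{N}$ is in fact a more careful treatment than the paper's "CLT for each fixed $h$", but it is a technical refinement rather than a different argument.
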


The estimator $\hat{W}_{N}^{h}$ is the key component of $\hat{Q}_{\mathrm{DRO}}^{h}(\pi)$, as shown in Eqn. \eqref{eqt:Q_hat IPW}. Consequently, based on the statistical property of $\hat{W}_{N}^{h}$, we can derive the asymptotic normality of $\hat{Q}_{\mathrm{DRO}}^{h}(\pi)$ in Theorem \ref{lemma2:asy_res_normalized and thm:asy result normalized}.
\begin{theorem}\label{lemma2:asy_res_normalized and thm:asy result normalized}
Suppose that $N\rightarrow\infty$, $h\rightarrow 0$ such that $Nh\rightarrow\infty$ and $Nh^{5}\rightarrow C\in[0,\infty)$. Furthermore, denote $\alpha_{\ast}(\pi)$ s.t. $\phi(\pi,\alpha_{\ast}(\pi))\geq \phi(\pi,\alpha)$ $\forall\;\alpha\geq 0$.
Then we have
\begin{equation*}
		\begin{aligned}\label{eqt:Q_IPW hat asy result}
			&\sqrt{Nh}\Biggl(\hat{Q}_{\mathrm{DRO}}^{h}(\pi)-Q_{\mathrm{DRO}}(\pi)+\frac{\alpha_{\ast}(\pi)B_{\pi}(\alpha_{\ast}(\pi))}{\mathbb{E}\bigg[e^{-\frac{Y(\pi(X))}{\alpha_{\ast}(\pi)}}\bigg]}h^{2}\Biggl)\overset{d}{\rightarrow}\mathcal{N}\left(0,\frac{\alpha_{\ast}^{2}(\pi)\mathbb{V}_{\pi}(\alpha_{\ast}(\pi))}{\bigg(\mathbb{E}\bigg[e^{-\frac{Y(\pi(X))}{\alpha_{\ast}(\pi)}}\bigg]\bigg)^{2}}\right).
		\end{aligned}
\end{equation*}
\end{theorem}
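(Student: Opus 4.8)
The plan is to treat $\hat{Q}_{\mathrm{DRO}}^{h}(\pi)$ as an M-estimator in the dual variable $\alpha$ and transfer the fluctuations of $\hat{W}_{N}^{h}$ to $\hat{Q}_{\mathrm{DRO}}^{h}(\pi)$ via an envelope (Danskin-type) argument combined with the delta method. Write $g(w,\alpha)=-\alpha\log w-\alpha\eta$, so that $\hat{\phi}_{N}^{h}(\pi,\alpha)=g(\hat{W}_{N}^{h}(\pi,\alpha),\alpha)$ and $\phi(\pi,\alpha)=g(\bar{W}(\alpha),\alpha)$ with $\bar{W}(\alpha)\coloneqq\mathbb{E}[e^{-Y(\pi(X))/\alpha}]$. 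Let $\hat{\alpha}_{N}$ denote the (interior, positive) maximizer defining $\hat{Q}_{\mathrm{DRO}}^{h}(\pi)$ and write $\alpha_{\ast}=\alpha_{\ast}(\pi)$; both are finite and positive by Auxiliary Result~2. Note $\hat{\phi}_{N}^{h}(\pi,\cdot)$ is twice continuously differentiable on $(0,\infty)$, with derivatives given explicitly in Algorithm~\ref{alg:algorithm eval}.

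\emph{Step 1 (consistency and rate of the dual maximizer).} I would first verify that $\phi(\pi,\cdot)$ is strictly concave on $(0,\infty)$ --- the map $\alpha\mapsto-\alpha\log\mathbb{E}[e^{-Y(\pi(X))/\alpha}]$ is the entropic certainty equivalent of $Y(\pi(X))$, hence concave in $\alpha$, and strictly so unless $Y(\pi(X))$ is a.s.\ constant --- so $\alpha_{\ast}$ is the unique maximizer, lies in a fixed compact interval $[\underline{\alpha},\bar{\alpha}]\subset(0,\infty)$ (using $0\le Y\le M$), $\partial_{\alpha}\phi(\pi,\alpha_{\ast})=0$, and $\partial_{\alpha\alpha}^{2}\phi(\pi,\alpha_{\ast})<0$. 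Next I would upgrade Theorem~\ref{lemma:asy_res_normalized and thm:asy result normalized} to uniform-in-$\alpha$ convergence on $[\underline{\alpha},\bar{\alpha}]$ for $\hat{W}_{N}^{h}$ and its first two $\alpha$-derivatives $\partial_{\alpha}\hat{W}_{N}^{h}$, $\partial_{\alpha\alpha}^{2}\hat{W}_{N}^{h}$ (each is a kernel-IPW estimator of a smooth bounded functional, so the bias--variance bounds behind Theorem~\ref{lemma:asy_res_normalized and thm:asy result normalized} apply, now made uniform in $\alpha$). This yields $\hat{\phi}_{N}^{h}(\pi,\cdot)\to\phi(\pi,\cdot)$ and $\partial_{\alpha}\hat{\phi}_{N}^{h}(\pi,\cdot)\to\partial_{\alpha}\phi(\pi,\cdot)$ uniformly on $[\underline{\alpha},\bar{\alpha}]$, hence $\hat{\alpha}_{N}\to\alpha_{\ast}$ in probability. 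Expanding the interior first-order condition $0=\partial_{\alpha}\hat{\phi}_{N}^{h}(\pi,\hat{\alpha}_{N})=\partial_{\alpha}\hat{\phi}_{N}^{h}(\pi,\alpha_{\ast})+\partial_{\alpha\alpha}^{2}\hat{\phi}_{N}^{h}(\pi,\tilde{\alpha})(\hat{\alpha}_{N}-\alpha_{\ast})$, and combining $\partial_{\alpha}\hat{\phi}_{N}^{h}(\pi,\alpha_{\ast})=O_{P}(h^{2}+(Nh)^{-1/2})$ (since $\partial_{\alpha}\phi(\pi,\alpha_{\ast})=0$) with $\partial_{\alpha\alpha}^{2}\hat{\phi}_{N}^{h}(\pi,\tilde{\alpha})\to\partial_{\alpha\alpha}^{2}\phi(\pi,\alpha_{\ast})<0$, gives $\hat{\alpha}_{N}-\alpha_{\ast}=O_{P}(h^{2}+(Nh)^{-1/2})$.

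\emph{Step 2 (envelope reduction) and Step 3 (delta method).} Decompose $\hat{Q}_{\mathrm{DRO}}^{h}(\pi)-Q_{\mathrm{DRO}}(\pi)=\big[\hat{\phi}_{N}^{h}(\pi,\hat{\alpha}_{N})-\hat{\phi}_{N}^{h}(\pi,\alpha_{\ast})\big]+\big[\hat{\phi}_{N}^{h}(\pi,\alpha_{\ast})-\phi(\pi,\alpha_{\ast})\big]$. Since $\hat{\alpha}_{N}$ is an interior maximizer, a second-order Taylor expansion gives $\hat{\phi}_{N}^{h}(\pi,\hat{\alpha}_{N})-\hat{\phi}_{N}^{h}(\pi,\alpha_{\ast})=-\tfrac12\partial_{\alpha\alpha}^{2}\hat{\phi}_{N}^{h}(\pi,\bar{\alpha})(\hat{\alpha}_{N}-\alpha_{\ast})^{2}=O_{P}\big((h^{2}+(Nh)^{-1/2})^{2}\big)$, which is $o_{P}((Nh)^{-1/2})$ under $Nh\to\infty$, $Nh^{5}\to C<\infty$ (since then $Nh^{9}\to 0$). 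So after scaling by $\sqrt{Nh}$ only the second bracket survives. For it, $\partial_{w}g(w,\alpha_{\ast})=-\alpha_{\ast}/w$ and $\bar{W}(\alpha_{\ast})\ge e^{-M/\alpha_{\ast}}>0$, so a first-order expansion in $w$ at $\bar{W}(\alpha_{\ast})$ yields $\hat{\phi}_{N}^{h}(\pi,\alpha_{\ast})-\phi(\pi,\alpha_{\ast})=-\frac{\alpha_{\ast}}{\bar{W}(\alpha_{\ast})}\big(\hat{W}_{N}^{h}(\pi,\alpha_{\ast})-\bar{W}(\alpha_{\ast})\big)+O_{P}\big((\hat{W}_{N}^{h}(\pi,\alpha_{\ast})-\bar{W}(\alpha_{\ast}))^{2}\big)$, with the remainder again $o_{P}((Nh)^{-1/2})$ by Theorem~\ref{lemma:asy_res_normalized and thm:asy result normalized}. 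Plugging $\hat{W}_{N}^{h}(\pi,\alpha_{\ast})-\bar{W}(\alpha_{\ast})=B_{\pi}(\alpha_{\ast})h^{2}+(Nh)^{-1/2}Z_{N}$ with $Z_{N}\overset{d}{\rightarrow}\mathcal{N}(0,\mathbb{V}_{\pi}(\alpha_{\ast}))$, moving the $h^{2}$-bias term to the left, and invoking Slutsky's theorem gives
\[
\sqrt{Nh}\Big(\hat{Q}_{\mathrm{DRO}}^{h}(\pi)-Q_{\mathrm{DRO}}(\pi)+\tfrac{\alpha_{\ast}B_{\pi}(\alpha_{\ast})}{\bar{W}(\alpha_{\ast})}h^{2}\Big)\overset{d}{\rightarrow}\mathcal{N}\Big(0,\tfrac{\alpha_{\ast}^{2}\mathbb{V}_{\pi}(\alpha_{\ast})}{\bar{W}(\alpha_{\ast})^{2}}\Big),
\]
which is the claim, since $\bar{W}(\alpha_{\ast})=\mathbb{E}[e^{-Y(\pi(X))/\alpha_{\ast}(\pi)}]$.

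The step I expect to be the main obstacle is Step~1, and within it the uniform-in-$\alpha$ control (with rates) of $\hat{W}_{N}^{h}$ and its first two $\alpha$-derivatives on $[\underline{\alpha},\bar{\alpha}]$: this does not follow directly from the pointwise statement of Theorem~\ref{lemma:asy_res_normalized and thm:asy result normalized} and requires the underlying bias and variance estimates to be made uniform in $\alpha$. This is feasible because $e^{-Y/\alpha}$, $\alpha^{-2}Ye^{-Y/\alpha}$ and $\alpha^{-3}Y^{2}e^{-Y/\alpha}$ are uniformly bounded and Lipschitz in $\alpha$ over $[\underline{\alpha},\bar{\alpha}]$ (using $0\le Y\le M$ and $\underline{\alpha}>0$), so the same arguments carry through, but it is where the real bookkeeping lies; verifying strict concavity and the interior, bounded location of $\alpha_{\ast}$ is comparatively routine.
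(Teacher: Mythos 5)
Your proposal is correct and reaches the stated limit, but it follows a genuinely different route from the paper. The paper proves weak convergence of the whole process $\alpha\mapsto\sqrt{Nh}\bigl(\hat{W}_{N}^{h}(\pi,\alpha)-\mathbb{E}[W^{h}(\pi;\alpha)]\bigr)$ in the Banach space $\mathcal{C}([\underline{\alpha},\bar{\alpha}])$ (using the Lipschitz-in-$\alpha$ control of $\hat{W}_{N}^{h}$), establishes Hadamard differentiability of the value functional $V(\psi)=\inf_{\alpha\in[\underline{\alpha},\bar{\alpha}]}\{\alpha\log\psi(\alpha)+\alpha\eta\}$ via Danskin's theorem, and then applies the functional delta method twice --- once to the stochastic fluctuation and once to the deterministic $h^{2}$ bias --- before checking that the optimum restricted to $[\underline{\alpha},\bar{\alpha}]$ coincides with the unconstrained one with probability tending to one. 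You instead run a finite-dimensional M-estimation argument: a rate $\hat{\alpha}_{N}-\alpha_{\ast}=O_{P}(h^{2}+(Nh)^{-1/2})$ from the first-order condition, an envelope decomposition showing the contribution of $\hat{\alpha}_{N}\neq\alpha_{\ast}$ is quadratic and hence $o_{P}((Nh)^{-1/2})$, and a scalar delta method for $w\mapsto-\alpha_{\ast}\log w$ at the single point $\alpha_{\ast}$. Your route is more elementary in that it only needs the pointwise CLT of Theorem~\ref{lemma:asy_res_normalized and thm:asy result normalized} at $\alpha_{\ast}$ plus uniform consistency of $\hat{W}_{N}^{h}$ and its first two $\alpha$-derivatives; the price is that you additionally need $\partial_{\alpha\alpha}^{2}\phi(\pi,\alpha_{\ast})<0$ strictly (which does hold here: under $\mathbb{V}(Y(\pi(X)))>0$ the map $\alpha\mapsto\alpha\log\mathbb{E}[e^{-Y(\pi(X))/\alpha}]$ has strictly positive second derivative everywhere on $(0,\infty)$, as it is the perspective of the strictly convex cumulant generating function), whereas the functional delta method needs no rate for $\hat{\alpha}_{N}$ and no Hessian nondegeneracy, only uniqueness of the minimizer. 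You are right that the genuine work in either approach is the uniform-in-$\alpha$ control on a compact $[\underline{\alpha},\bar{\alpha}]\subset(0,\infty)$; the paper supplies exactly this via the Lipschitz bound $|W^{h}(\pi,\alpha_{1})-W^{h}(\pi,\alpha_{2})|\leq L_{h}|\alpha_{1}-\alpha_{2}|$ and Auxiliary Result~1, and your observation that $e^{-Y/\alpha}$, $\alpha^{-2}Ye^{-Y/\alpha}$, $\alpha^{-3}Y^{2}e^{-Y/\alpha}$ are bounded and Lipschitz on $[\underline{\alpha},\bar{\alpha}]$ is the right ingredient to make your version of it go through.
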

A good choice of bandwidth is essential for effective policy learning and evaluation. We can use a rule-of-thumb bandwidth (see e.g., \cite{su2019non}), or select $h^{\ast}$ by minimizing the asymptotic mean squared error (AMSE) (e.g., \cite{kallus2018policy}) of $\hat{Q}_{\mathrm{DRO}}^{h}(\pi)$:
\begin{equation}
	\begin{aligned}\label{eqt:optimal bandwidth}
		h^{\ast}&\coloneqq\underset{}{\arg\min}\bigg[B_{\pi}(\alpha_{\ast}(\pi))^{2}h^{4}+\frac{\mathbb{V}_{\pi}(\alpha_{\ast}(\pi))}{Nh}\bigg]\\
		\Rightarrow h^{\ast}&=\bigg(\frac{\mathbb{V}_{\pi}(\alpha_{\ast}(\pi))}{4NB_{\pi}(\alpha_{\ast}(\pi))}\bigg)^{\frac{1}{5}}=\Theta(N^{-\frac{1}{5}}).
	\end{aligned}
\end{equation}
Empirically, we would follow the notions presented in \cite{kallus2018policy}, of which we choose the optimal bandwidth via a plug-in estimator.
\section{Distributionally Robust Policy Learning}
\subsection{The Estimation of \texorpdfstring{$\pi^{\ast}_{\mathrm{DRO}}$}{}}\label{sec:estimator of policy}
In the preceding section, we have established $\hat{Q}_{\mathrm{DRO}}^{h}(\pi)$ as an estimator for $Q_{\mathrm{DRO}}(\pi)$. Next, we aim to construct an estimator for the optimal policy $\pi^{\ast}_{\mathrm{DRO}}$. Specifically, we derive $\hat{\pi}_{\mathrm{DRO}}^{h}$ from $\hat{Q}_{\mathrm{DRO}}^{h}(\pi)$ such that
\begin{equation*}
\begin{aligned}\label{eqt:optimal policy estimator}
\hat{\pi}_{\mathrm{DRO}}^{h}&=\underset{\pi\in\Pi}{\arg\max}\;\hat{Q}_{\mathrm{DRO}}^{h}(\pi)=\underset{\pi\in\Pi}{\arg\max}\;\underset{\alpha\geq 0}{\max}\;\big\{-\alpha\log\hat{W}_{N}^{h}(\pi,\alpha)-\alpha\eta\big\}.
\end{aligned}
\end{equation*}
$\hat{\pi}_{\mathrm{DRO}}^{h}$ is the distributionally robust policy learned from $\hat{Q}_{\mathrm{DRO}}^{h}(\pi)$. To summarize, we present the specific steps of obtaining $\hat{\pi}_{\mathrm{DRO}}^{h}$ in Algorithm \ref{alg:algorithm policy}.

\begin{algorithm}
\caption{Distributionally robust policy learning 
}\label{alg:algorithm policy}
\begin{algorithmic}[1] 
\STATE\textbf{Input} observed dataset $(X_{i},A_{i},Y_{i})_{i=1}^{N}$, $h$. Initialize: $\pi\in\Pi$ and $\alpha\in\mathbb{R}^{+}\cup{0}$.
\REPEAT
\STATE Compute $\hat{W}_{N}^{h}(\pi,\alpha)$ given in Eqn. \eqref{eqt:W_hat IPW and S_hat}.
\STATE Solve $\underset{\pi\in\Pi}{\min}\;\hat{W}_{N}^{h}(\pi,\alpha)$ using any numerical methods. Update $\pi$: $\pi\leftarrow\underset{\pi\in\Pi}{\arg\min}\;\hat{W}_{N}^{h}(\pi,\alpha)$.
\STATE Solve $\underset{\alpha\geq 0}{\max}\;\hat{\phi}_{N}^{h}(\pi,\alpha)$ using any numerical methods where $\hat{\phi}_{N}^{h}(\pi,\alpha)$ is given in Eqn. \eqref{eqt:Q_hat IPW}. Update $\alpha$: $\alpha\leftarrow\underset{\alpha\geq 0}{\arg\max}\;\hat{\phi}_{N}^{h}(\pi,\alpha)$.
\UNTIL $\alpha$ converges
\STATE \textbf{Return}  $\hat{\pi}_{\mathrm{DRO}}^{h}\leftarrow\pi$
\end{algorithmic}
\end{algorithm}

\subsection{The Statistical Property of \texorpdfstring{$\hat{\pi}_{\mathrm{DRO}}^{h}$}{}}

An essential aspect of our study is examining the statistical performance guarantee of $\hat{\pi}_{\mathrm{DRO}}^{h}$, which enables researchers to assess the gap between the learned policy $\hat{\pi}_{\mathrm{DRO}}^{h}$ and the optimal distributionally robust policy $\pi_{\mathrm{DRO}}^{\ast}=\underset{\tilde{\pi}\in\Pi}{\max}\;Q_{\mathrm{DRO}}(\tilde{\pi})$. To achieve this, we use the distributionally robust regret defined in Definition \ref{def:robust policy learning regret} as the evaluation metric.

\begin{definition}\label{def:robust policy learning regret}
Let the optimal distributionally robust policy be $\pi_{\mathrm{DRO}}^{\ast}=\underset{\tilde{\pi}\in\Pi}{\arg\max}\;Q_{\mathrm{DRO}}(\tilde{\pi})$. The distributionally robust regret of a policy $\pi\in\Pi$, denoted by $R_{\mathrm{DRO}}(\pi)$, is then defined as
\begin{equation*}
\begin{aligned}\label{eqt:robust policy learning regret}
R_{\mathrm{DRO}}(\pi)&=\underset{\tilde{\pi}\in\Pi}{\max}\;\underset{\mathbb{P}\in\mathcal{U}_{\mathbb{P}_{0}}(\eta)}{\inf}\mathbb{E}_{\mathbb{P}}[Y(\tilde{\pi}(X))]-\underset{\mathbb{P}\in\mathcal{U}_{\mathbb{P}_{0}}(\eta)}{\inf}\mathbb{E}_{\mathbb{P}}[Y(\pi(X))]\\
&=\underset{\tilde{\pi}\in\Pi}{\max}Q_{\mathrm{DRO}}(\tilde{\pi})-Q_{\mathrm{DRO}}(\pi)=Q_{\mathrm{DRO}}(\pi_{\mathrm{DRO}}^{\ast})-Q_{\mathrm{DRO}}(\pi).
\end{aligned}
\end{equation*}
\end{definition}
Before studying $R_{\mathrm{DRO}}(\hat{\pi}_{\mathrm{DRO}}^{h})$, we will now introduce the required notions of the Rademacher complexity and the covering number of a functional class \cite{shalev2014understanding, mohri2018foundations, wainwright2019high}, which are stated in Definition \ref{def:Rademacher}.
\begin{definition}\label{def:Rademacher}
Let $\mathcal{F}$ be a family of real-valued functions $f$ where $f:\mathcal{Z}\rightarrow\mathbb{R}$. Given $Z_{1},\cdots,Z_{N}\in\mathcal{Z}$, the Rademacher complexity of $\mathcal{F}$ is defined as $\mathcal{R}_{N}(\mathcal{F})$ such that
\begin{equation*}
\begin{gathered}
\mathcal{R}_{N}(\mathcal{F})=\mathbb{E}_{Z}[\hat{\mathcal{R}}_{N}(\mathcal{F})]=\mathbb{E}_{Z,\sigma}\bigg[\underset{f\in\mathcal{F}}{\sup}\bigg|\frac{1}{N}\underset{i=1}{\overset{N}{\sum}}\sigma_{i}f(Z_{i})\bigg|\bigg],\\
\hat{\mathcal{R}}_{N}(\mathcal{F})\coloneqq\mathbb{E}_{\sigma}\bigg[\underset{f\in\mathcal{F}}{\sup}\bigg|\frac{1}{N}\underset{i=1}{\overset{N}{\sum}}\sigma_{i}f(Z_{i})\bigg|\bigg|Z_{1},\cdots,Z_{N}\bigg].
\end{gathered}
\end{equation*}
Here, $\sigma_{1},\;\cdots,\sigma_{N}$ are i.i.d. with the distribution $\mathbb{P}\{\sigma_{i}=1\}=\mathbb{P}\{\sigma_{i}=-1\}=\frac{1}{2}$. Additionally, consider a set $\{X_{1},\cdots,X_{N}\}$ in a metric space with metric $\|\cdot\|$. A set $\mathcal{A}_{\{X_{1},\cdots,X_{N}\}}\subset\mathcal{F}$ is said to be a $t$-covering of $\mathcal{F}$ if, for any $f\in\mathcal{F}$, there exists $\tilde{f}\in\mathcal{A}_{\{X_{1},\cdots,X_{N}\}}$ such that $\|(f(X_{1}),\cdots,f(X_{N}))-(\tilde{f}(X_{1}),\cdots,\tilde{f}(X_{N}))\|\leq t$. The size of the smallest $t$-covering, denoted by $\mathfrak{N}\big(t,\mathcal{F}(\{X_{1},\cdots,X_{N}\}),\|\cdot\|\big)$, is the $t$-\textit{covering number}.
\end{definition}
\noindent With Definition \ref{def:Rademacher}, the regret $R_{\mathrm{DRO}}(\pi)$ can be generally upper bounded as the following Theorem \ref{thm:statistical performance normalized}.
\begin{theorem}\label{thm:statistical performance normalized}
Suppose that the kernel function $K(x)$ is bounded where $|K(x)|\leq M_{K}$. Given $\delta>0$, $h>0$, and a policy class $\Pi$, denote 
\begin{equation*}
\begin{gathered}
\mathcal{F}_{\Pi}\coloneqq\Bigg\{\frac{K_{h}(\pi(X)-A)}{f_{0}(A|X)}:\pi\in\Pi\Bigg\},\\
\mathcal{F}_{\Pi,x}\coloneqq\Bigg\{\frac{K_{h}(\pi(X)-A)\mathbf{1}_{\{Y(\pi(X))\leq x\}}}{f_{0}(A|X)}:\pi\in\Pi,\;x\in[0,M]\Bigg\}.
\end{gathered}
\end{equation*}
Then, with probability $1-\delta$, we have
\begin{gather}
\begin{aligned}\label{eqt:statistical performance normalized Rademacher results}
R_{\mathrm{DRO}}(\hat{\pi}_{\mathrm{DRO}}^{h})\leq &\frac{4}{\epsilon}\mathcal{R}_{N}(\mathcal{F}_{\Pi,x})+\frac{4}{\epsilon}\mathcal{R}_{N}(\mathcal{F}_{\Pi})+\frac{4\sqrt{2}M_{K}\sqrt{\ln\big(\frac{2}{\delta}\big)}}{h\epsilon^{2}\sqrt{N}}+O(h^{2}).
\end{aligned}
\end{gather}
\end{theorem}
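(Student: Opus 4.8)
The plan is to decompose the regret into the gap between the true robust value $Q_{\mathrm{DRO}}$ and its kernel-smoothed surrogate $Q_{\mathrm{DRO}}^{h}$, and the statistical estimation error of $\hat{Q}_{\mathrm{DRO}}^{h}$ relative to $Q_{\mathrm{DRO}}^{h}$ over the policy class $\Pi$. Concretely, by the optimality of $\hat{\pi}_{\mathrm{DRO}}^{h}$ for $\hat{Q}_{\mathrm{DRO}}^{h}$ one has the standard bound
\begin{equation*}
R_{\mathrm{DRO}}(\hat{\pi}_{\mathrm{DRO}}^{h}) = Q_{\mathrm{DRO}}(\pi_{\mathrm{DRO}}^{\ast}) - Q_{\mathrm{DRO}}(\hat{\pi}_{\mathrm{DRO}}^{h}) \leq 2\sup_{\pi\in\Pi}\big|\hat{Q}_{\mathrm{DRO}}^{h}(\pi) - Q_{\mathrm{DRO}}^{h}(\pi)\big| + 2\sup_{\pi\in\Pi}\big|Q_{\mathrm{DRO}}^{h}(\pi) - Q_{\mathrm{DRO}}(\pi)\big|.
\end{equation*}
The second term is the deterministic kernel-approximation bias; using the bias expansion $Q_{\mathrm{DRO}}^{h}(\pi) = Q_{\mathrm{DRO}}(\pi) + O(h^{2})$ implicit in Theorem \ref{lemma2:asy_res_normalized and thm:asy result normalized} (uniformly in $\pi$ under the differentiability and boundedness assumptions), this contributes the $O(h^{2})$ term. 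The first term is where the Rademacher and covering-number machinery enters.

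For the estimation term, I would first reduce the dual $\max_{\alpha\ge 0}$ problem to a supremum over the weight functional: since $-\alpha\log(\cdot) - \alpha\eta$ is, for each fixed $\alpha$, a fixed monotone-in-$\hat W$ transformation with Lipschitz constant controlled by $\alpha/\inf_\pi \hat W$ (and $\alpha_\ast(\pi)$ is bounded above uniformly in $\pi$ by Auxiliary Result 2, while $\hat W_N^h$ and $W^h$ are bounded below by a constant $\propto \epsilon$ thanks to positivity), a $\sup_\alpha$-of-Lipschitz argument gives
\begin{equation*}
\sup_{\pi\in\Pi}\big|\hat{Q}_{\mathrm{DRO}}^{h}(\pi) - Q_{\mathrm{DRO}}^{h}(\pi)\big| \leq \frac{C}{\epsilon}\sup_{\pi\in\Pi,\,\alpha\in\mathcal{A}_0}\big|\hat{W}_{N}^{h}(\pi,\alpha) - \mathbb{E}[\hat W_N^h(\pi,\alpha)]\big| + (\text{bias in }\hat W),
\end{equation*}
and then I would further replace the normalized $\hat W_N^h = \bar W_N^h / S_N^h$ by $\bar W_N^h$ at the cost of controlling $|S_N^h - 1|$, which is itself a one-sided uniform deviation of the class $\mathcal{F}_\Pi$. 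The exponential factor $e^{-Y/\alpha}\in[0,1]$ and the indicator-based class $\mathcal{F}_{\Pi,x}$ appear here because, to handle $\sup_\alpha$ of $e^{-Y/\alpha}$ weights, one writes $e^{-Y/\alpha} = \int_0^M \frac{1}{\alpha} e^{-x/\alpha}\mathbf{1}_{\{Y \le x\}}\,dx$ (or a layer-cake / integration-by-parts identity), turning the $\alpha$-dependence into an integral over the threshold $x$ of indicator-weighted terms, so uniform control over $\alpha$ follows from uniform control of $\mathcal{F}_{\Pi,x}$ over $x\in[0,M]$.

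Then the core step is a symmetrization + bounded-differences argument: $\sup_{\pi}|\frac1N\sum_i g_\pi(Z_i) - \mathbb{E}g_\pi|$ for $g_\pi$ ranging over $\mathcal{F}_\Pi$ (resp. $\mathcal{F}_{\Pi,x}$) is bounded in expectation by $2\mathcal{R}_N(\mathcal{F}_\Pi)$ (resp. $2\mathcal{R}_N(\mathcal{F}_{\Pi,x})$) via the standard Rademacher symmetrization lemma, and concentrates around its mean by McDiarmid's inequality. The key quantitative input is that each summand is bounded by $\frac{M_K}{h\epsilon}$ (kernel bound $M_K/h$ divided by the density lower bound $\epsilon$), so the bounded-differences increment is $\frac{2M_K}{h\epsilon N}$ for $\mathcal{F}_\Pi$ and the normalization by $\epsilon$ again for the indicator version, yielding the $\frac{4\sqrt2 M_K \sqrt{\ln(2/\delta)}}{h\epsilon^2\sqrt N}$ term after a union bound over the two classes (with $\delta/2$ each) and tracking the $1/\epsilon$ factors from the reduction above. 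The factor $1/h$ is unavoidable and reflects that the kernel weights blow up as $h\to0$.

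\textbf{Main obstacle.} The hardest part will be making the reduction from $\hat Q_{\mathrm{DRO}}^h$ to the uniform deviation of $\hat W_N^h$ genuinely uniform in $\pi$ \emph{and} $\alpha$ simultaneously: I need a uniform (in $\pi$) upper bound on the dual optimizer $\alpha_\ast(\pi)$ and uniform lower bounds on $\hat W_N^h(\pi,\alpha)$ and its population counterpart on the relevant $\alpha$-range, so that the $-\alpha\log(\cdot)$ transformation is uniformly Lipschitz; the positivity assumption and the boundedness $0\le Y\le M$ are what rescue this, but the envelope growing like $1/h$ means the Lipschitz constant and hence all constants carry the $1/h$ factor, and one has to be careful that the $\sup_\alpha$ is attained on a compact set not shrinking to $\{0\}$. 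The handling of $e^{-Y/\alpha}$ through the threshold-integral representation so that $\mathcal{F}_{\Pi,x}$ (rather than an $\alpha$-indexed exponentially-weighted class) suffices is the other delicate bookkeeping step; everything after symmetrization is routine.
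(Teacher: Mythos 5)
Your overall architecture matches the paper's: the regret is bounded by twice a uniform deviation over $\Pi$, the deviation is split into an $O(h^{2})$ kernel bias plus an empirical-process term, the empirical process is controlled by symmetrization to $\mathcal{R}_{N}(\mathcal{F}_{\Pi,x})$ and $\mathcal{R}_{N}(\mathcal{F}_{\Pi})$ (the latter arising from the self-normalization factor $S_{N}^{h}$), and McDiarmid with envelope $M_{K}/(h\epsilon)$ produces the $\frac{4\sqrt{2}M_{K}\sqrt{\ln(2/\delta)}}{h\epsilon^{2}\sqrt{N}}$ tail. Your accounting of that last term is correct.

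However, there is a genuine gap in the step that converts $\sup_{\alpha\geq 0}$ of the dual objective difference into a uniform deviation of the indicator class. You propose either (a) a Lipschitz-of-$(-\alpha\log(\cdot))$ reduction with constant $\alpha/\inf\hat{W}$, or (b) the layer-cake identity $e^{-Y/\alpha}=e^{-M/\alpha}+\int_{0}^{M}\tfrac{1}{\alpha}e^{-x/\alpha}\mathbf{1}_{\{Y\leq x\}}\,dx$ followed by the same division by $\min(\mathbb{E}_{1},\mathbb{E}_{2})$. Both fail uniformly over $\alpha\geq 0$: since $\mathbb{E}[e^{-Y/\alpha}]$ can be as small as $e^{-M/\alpha}$, the resulting bound is of order $\alpha(e^{M/\alpha}-1)\cdot\sup_{x}|F_{1}(x)-F_{2}(x)|$, which blows up as $\alpha\to 0$. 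The paper avoids this entirely with a quantile-coupling argument (its Auxiliary Result 3): writing $\mathbb{E}_{\mathbb{P}_{j}}[e^{-Y/\alpha}]=\int_{0}^{1}e^{-\mathcal{Q}_{\mathbb{P}_{j}}(u)/\alpha}du$ and shifting the quantile function \emph{inside the exponent} by $\sup_{u}|\mathcal{Q}_{\mathbb{P}_{1}}(u)-\mathcal{Q}_{\mathbb{P}_{2}}(u)|$ yields
\begin{equation*}
\sup_{\alpha\geq 0}\alpha\bigl|\log\mathbb{E}_{\mathbb{P}_{1}}[e^{-Y/\alpha}]-\log\mathbb{E}_{\mathbb{P}_{2}}[e^{-Y/\alpha}]\bigr|\leq\sup_{u\in[0,1]}|\mathcal{Q}_{\mathbb{P}_{1}}(u)-\mathcal{Q}_{\mathbb{P}_{2}}(u)|\leq\frac{1}{c}\sup_{x}|F_{\mathbb{P}_{1}}(x)-F_{\mathbb{P}_{2}}(x)|,
\end{equation*}
with no dependence on $\alpha$ and a clean $1/\epsilon$ constant (using a density lower bound). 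This multiplicative-in-the-exponent step is the idea your proposal is missing; the additive bound on $|\mathbb{E}_{1}-\mathbb{E}_{2}|$ followed by a log-Lipschitz step cannot recover it. Your fallback of restricting to a compact interval $[\underline{\alpha},\bar{\alpha}]$ would require a high-probability localization of the empirical dual optimizer uniformly over $\pi\in\Pi$ (which the paper only establishes pointwise in $\pi$, in the proof of its Theorem 2) and would in any case degrade the constants to something like $\bar{\alpha}e^{M/\underline{\alpha}}$ rather than the stated $4/\epsilon$. A secondary, more minor discrepancy: your decomposition routes the bias through the population $Q_{\mathrm{DRO}}^{h}-Q_{\mathrm{DRO}}$, so your indicator class would involve the observed $Y$ rather than the potential outcome $Y(\pi(X))$ appearing in $\mathcal{F}_{\Pi,x}$; the paper instead inserts the weighted empirical measure of $Y(\pi(X))$ as the intermediate quantity and absorbs the residual $Y$-versus-$Y(\pi(X))$ gap into $O(h^{2})$ via an extra assumption stated only in its appendix.
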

\noindent The Rademacher complexities in Eqn. \eqref{eqt:statistical performance normalized Rademacher results} can be further bounded using covering numbers (see, for instance, \cite{shalev2014understanding}). Under certain conditions, such as when the square root of the metric entropy (i.e., the logarithm of the covering number) is summable, we can bound $\mathcal{R}_{N}(\mathcal{F}_{\Pi,x})$ and $\mathcal{R}_{N}(\mathcal{F}_{\Pi})$ by the covering number of $\Pi$. This result is presented in detail in Corollary \ref{cor:RDRO bound finite covering}.
\begin{corollary}\label{cor:RDRO bound finite covering}
If the kernel function $K(x)$ is Lipschitz continuous with constant $L_{K}>0$ (i.e., $|K(x)-K(y)|\leq L_{K}|x-y|$) and there exists a finite value $\kappa$ which equals
\begin{equation*}
\begin{aligned}
\mathbb{E}\Bigg[\int_{0}^{\frac{2M_{K}h}{L_{K}}}\sqrt{\log\;\mathfrak{N}\big(t,\Pi(\{X_{1},\cdots,X_{N}\}),\|\cdot\|_{\mathcal{L}_{2}(\mathbb{P}_{N})}\big)}dt\Bigg].
\end{aligned}
\end{equation*}
Then, for some constant $\mathcal{K}$, Eqn. \eqref{eqt:statistical performance normalized Rademacher results} becomes
\begin{equation}
\begin{aligned}\label{eqt:statistical performance normalized}
&R_{\mathrm{DRO}}(\hat{\pi}_{\mathrm{DRO}}^{h})\leq \frac{288L_{K}\kappa}{\sqrt{N}h^{2}\epsilon^{2}}+\frac{192M_{K}(\sqrt{\log\mathcal{K}}+2\sqrt{2})}{\sqrt{N}h\epsilon^{2}}+\frac{4M_{K}\sqrt{2\log\big(\frac{2}{\delta}\big)}}{\sqrt{N}h\epsilon^{2}}+O(h^{2}).
\end{aligned}
\end{equation}
\end{corollary}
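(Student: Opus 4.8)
The plan is to start from the general regret bound of Theorem~\ref{thm:statistical performance normalized} and to reduce the two Rademacher complexities $\mathcal{R}_{N}(\mathcal{F}_{\Pi})$ and $\mathcal{R}_{N}(\mathcal{F}_{\Pi,x})$ appearing in \eqref{eqt:statistical performance normalized Rademacher results} to the $\mathcal{L}_{2}(\mathbb{P}_{N})$ covering number of the policy class $\Pi$, using Dudley's chaining (entropy-integral) inequality. The bias term $O(h^{2})$ and the term $4\sqrt{2}M_{K}\sqrt{\ln(2/\delta)}/(h\epsilon^{2}\sqrt{N})$ in \eqref{eqt:statistical performance normalized Rademacher results} are carried over unchanged, so the whole argument comes down to two covering-number estimates that are then substituted back into that display and the constants collected.

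For $\mathcal{F}_{\Pi}$, the key observations are that (i) Lipschitz continuity of $K$ with constant $L_{K}$ makes the scaled kernel $K_{h}(u)=h^{-1}K(u/h)$ Lipschitz with constant $L_{K}/h^{2}$, and (ii) the positivity Assumption~\ref{ass:positivity} gives $f_{0}(A|X)\geq\epsilon$, so the map $\pi\mapsto K_{h}(\pi(\cdot)-\cdot)/f_{0}(\cdot|\cdot)$ is Lipschitz from $(\Pi,\|\cdot\|_{\mathcal{L}_{2}(\mathbb{P}_{N})})$ to $(\mathcal{F}_{\Pi},\|\cdot\|_{\mathcal{L}_{2}(\mathbb{P}_{N})})$ with constant $L_{K}/(h^{2}\epsilon)$; consequently $\mathfrak{N}(s,\mathcal{F}_{\Pi},\|\cdot\|_{\mathcal{L}_{2}(\mathbb{P}_{N})})\leq\mathfrak{N}(h^{2}\epsilon s/L_{K},\Pi,\|\cdot\|_{\mathcal{L}_{2}(\mathbb{P}_{N})})$. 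Since moreover $|K_{h}|\leq M_{K}/h$ and $f_{0}\geq\epsilon$ bound the elements of $\mathcal{F}_{\Pi}$ by $M_{K}/(h\epsilon)$, the entropy integral can be truncated at the diameter $2M_{K}/(h\epsilon)$. Applying Dudley's bound to $\hat{\mathcal{R}}_{N}(\mathcal{F}_{\Pi})$, changing variables $t=h^{2}\epsilon s/L_{K}$ (which carries the upper limit $2M_{K}/(h\epsilon)$ to $2M_{K}h/L_{K}$), and taking expectations over $(X_{1},\dots,X_{N})$ produces exactly the finite constant $\kappa$, giving $\mathcal{R}_{N}(\mathcal{F}_{\Pi})=O\!\big(L_{K}\kappa/(\sqrt{N}h^{2}\epsilon)\big)$; multiplying by $4/\epsilon$ yields the first term of \eqref{eqt:statistical performance normalized}.

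For $\mathcal{F}_{\Pi,x}$, each element is the product of the bounded, Lipschitz-indexed factor $K_{h}(\pi(X)-A)/f_{0}(A|X)$ (again bounded by $M_{K}/(h\epsilon)$) and the indicator $\mathbf{1}_{\{Y(\pi(X))\leq x\}}$, indexed by $\pi\in\Pi$ and $x\in[0,M]$. I would cover this product class by pairing a covering of the kernel/density factor (reduced to $\Pi$ as above) with a covering of the indicator family; for fixed $\pi$ the latter is monotone in $x$, hence a VC-subgraph class of bounded dimension, so its metric entropy is controlled, and combined with the uniform bound $M_{K}/(h\epsilon)$ on the whole product one obtains, after covering at the appropriate scale and invoking a Massart-type finite-class bound on the centers, $\mathcal{R}_{N}(\mathcal{F}_{\Pi,x})=O\!\big(M_{K}(\sqrt{\log\mathcal{K}}+1)/(\sqrt{N}h\epsilon)\big)$ for a suitable finite constant $\mathcal{K}$ built from the covering number of $\Pi$; this gives the second term of \eqref{eqt:statistical performance normalized}. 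Substituting both estimates together with the retained term into \eqref{eqt:statistical performance normalized Rademacher results} and collecting universal constants yields \eqref{eqt:statistical performance normalized}.

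The step I expect to be the main obstacle is controlling $\mathcal{R}_{N}(\mathcal{F}_{\Pi,x})$: the indicator $\mathbf{1}_{\{Y(\pi(X))\leq x\}}$ is not Lipschitz (indeed not continuous) in either $\pi$ or $x$, so it cannot be folded into the kernel-Lipschitz argument, and one must carefully decouple the bounded kernel/density factor from the indicator factor, tracking the precise dependence on the bandwidth $h$, and then bound the metric entropy of the resulting product/VC class so that the chaining integral converges and reproduces the stated $1/h$ (rather than $1/h^{2}$) rate together with the $\sqrt{\log\mathcal{K}}$ and $2\sqrt{2}$ constants. By contrast, the reduction for $\mathcal{R}_{N}(\mathcal{F}_{\Pi})$ is routine once the Lipschitz constant of $K_{h}$ and the positivity lower bound are in hand.
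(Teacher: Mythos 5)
Your proposal follows essentially the same route as the paper's proof: a Lipschitz reduction of the covering number of $\mathcal{F}_{\Pi}$ to that of $\Pi$ followed by Dudley's entropy integral with the change of variables producing $\kappa$, and for $\mathcal{F}_{\Pi,x}$ a product-covering decomposition that pairs the $\Pi$-cover of the kernel factor with a VC-type polynomial bound $\mathcal{K}(1/t)^{2}$ on the monotone indicator class, whose entropy integral yields the $2\sqrt{2}$ constant. The only minor bookkeeping discrepancy is that in the paper $\mathcal{R}_{N}(\mathcal{F}_{\Pi,x})$ contributes to \emph{both} the $\kappa/h^{2}$ term and the $\sqrt{\log\mathcal{K}}/h$ term (which is how the coefficient $288$ arises), rather than only to the second term as your sketch suggests; this does not affect the validity of the argument.
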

%
Note that the distributionally robust regret is independent of $\eta$, as it is unaffected by the expectation term in the dual problem. In conjunction with Eqn. \eqref{eqt:optimal bandwidth}, selecting $h=O(N^{-\frac{1}{5}})$ in Corollary \ref{cor:RDRO bound finite covering} ensures consistent learning of the optimal linear policy, as the distributionally robust regret $R_{\mathrm{DRO}}(\hat{\pi}_{\mathrm{DRO}}^{h})$ converges to zero when $N$ tends to infinity.

To conclude this section, we discuss the covering numbers of various policy classes. A common policy class is the linear policy class, defined as $\Pi=\{\pi:\mathcal{X}\rightarrow\mathcal{A}|\pi(X)=w^{\top}X,\;\|w\|_{p}\leq a,\;\|X\|_{q}\leq b,\;w,\;X\in\mathbb{R}^{d}\}$. For instance, when $p=\infty$ and $q=1$, we can demonstrate that
\begin{equation*}
\begin{aligned}
\mathfrak{N}\big(t,\Pi({X_{1},\cdots,X_{N}}),\|\cdot\|_{\mathcal{L}_{\infty}(\mathbb{P}_{N})}\big)\leq \Bigg(\frac{\underset{1\leq i\leq N}{\max}\|X_{i}\|_{1}}{t}+2\Bigg)^{d}.
\end{aligned}
\end{equation*}
Consequently, $\kappa$ in Eqn. \eqref{eqt:statistical performance normalized} is bounded above by $\sqrt{d}\bigg\{\frac{2\sqrt{2}M_{K}h}{L_{K}}+2\sqrt{\frac{2M_{K}h}{L_{K}}}\mathbb{E}\big[\underset{1\leq i\leq N}{\max}\|X_{i}\|_{1}\big]\bigg\}$, as per its definition. \cite{DBLP:journals/jmlr/Zhang02} provide the covering number of linear policy class for $2\leq p<\infty$.

We can extend the study from linear policy classes to classes containing non-linear policies such as neural networks or support vector machines (SVMs). For example, shallow neural networks can be represented as linear functions composed with Lipschitz activations. The covering number for the class can be bounded by the Lipschitz constant and the linear class \citep{DBLP:journals/jmlr/Zhang02, anthony1999neural}. Covering numbers for other classes can be found in sources such as \citep{bartlett2017spectrally}.

\section{Experiments}\label{sec:experiment}
In this section, we mainly investigate the robustness of the proposed policy $\pi_{\mathrm{DRO}}^{h}$ against distribution shift. Our analysis includes two parts: simulation and empirical studies. First, in the ``Simulation Experiment" subsection, we compare results under continuous treatments with those under discretized treatments, as well as outcomes with and without robustness. We evaluate these results in a distributionally robust manner to assess the policy's performance under varying conditions. Following this, in the ``Empirical Experiment" subsection, the experiments on Warfarin dataset compare the robustness performance of the robust and nonrobust policies. All experiments are run on a Dell 3640 with an Intel Xeon W-1290P 3.60GHz CPU\footnote{
The code for the experiments can be found at {\texttt{https://github.com/cleung87/Distributionally-Robust-\linebreak Policy-Evaluation-and-Learning-for-Continuous-Treatment-with-Observed-Data}}.}.

\subsection{Simulation Experiment}\label{sec:simulation}
\paragraph{Continuous v.s. Discrete.}
We begin by comparing our distributionally robust policy $\hat{\pi}_{\mathrm{DRO}}^{h}$ under continuous treatment with the distributionally robust policy $\hat{\pi}_{\mathrm{DRO}}^{dis\text{-}k}$, where the continuous treatment is discretized using the method proposed in \cite{si2023distributionally} into $k$ bins ($k \in {2, 3, 4}$) based on the discretized strategy in \cite{zhou2017residual}. To enable a fair comparison between these two forms, we consider a simple data generating process with known optimal values. Specifically, we assume: $X \sim \mathrm{Uniform}(0,1)$, $A|X \sim \mathrm{Uniform}(X, X+1)$, $Y=5+X/A+\epsilon$, $\epsilon \sim \mathrm{Uniform}(0,1)$. We define the policy class $\Pi$ as $\{\beta X: 1 \leq \beta \leq 3\}$, and set the ambiguity radius $\eta=0.05$. With these specifications, we compute the optimal distributionally robust value $Q^{*}=\underset{\pi \in \Pi}{\max}\;\underset{\mathbb{P} \in \mathcal{U}_{\mathbb{P}_{0}}(\eta)}{\inf}\;\mathbb{E}_{\mathbb{P}}[Y(\pi(X))]$, which evaluates to $6.41$ using numerical approaches. 
For the bandwidth parameter $h$,
we follow the approach of selecting the bandwidth as given by \cite{kallus2018policy} using a plug-in estimator based on Eqn. \eqref{eqt:optimal bandwidth}. 
We generate 100 different datasets, each consisting of 2500 training samples and 2500 test samples.

For policy learning on training data, both $\hat{\pi}_{\mathrm{DRO}}^{h}$ and $\hat{\pi}_{\mathrm{DRO}}^{dis\text{-}k}$ are learned using $\eta^{train}=0.05$. We then compute $\hat{Q}^{h}_{\mathrm{DRO}}(\hat{\pi}_{\mathrm{DRO}}^{h})$ and $\hat{Q}^{dis}_{\mathrm{DRO}}(\hat{\pi}_{\mathrm{DRO}}^{dis\text{-}k})$ on the test data and compare the results. For various $\eta^{test}\in\{0.05,0.1,0.2,0.3,0.4\}$, $\hat{Q}^{h}_{\mathrm{DRO}}(\hat{\pi}_{\mathrm{DRO}}^{h})$ is estimated according to Algorithm \ref{alg:algorithm eval}, while $\hat{Q}^{dis}_{\mathrm{DRO}}(\hat{\pi}_{\mathrm{DRO}}^{dis\text{-}k})$ is estimated by solving $\underset{\alpha\geq 0}{\max}\{-\alpha \log \hat{W}_{N}(\hat{\pi}_{\mathrm{DRO}}^{dis\text{-}k},\alpha)-\alpha\eta\}$. Here $\hat{W}_{N}(\pi,\alpha)=\bigg(\underset{i=1}{\overset{N}{\sum}}\frac{\mathbf{1}_{\{\pi(X_{i})=A_{i}\}}e^{-\frac{Y_{i}}{\alpha}}}{\hat{p}_{0}(A_{i}|X_{i})}\bigg)/\bigg(\underset{j=1}{\overset{N}{\sum}}\frac{\mathbf{1}_{\{\pi(X_{j})=A_{j}\}}}{\hat{p}_{0}(A_{j}|X_{j})}\bigg)$ and $\hat{p}_{0}(A|X)$ is the estimated probability of receiving treatment $A$ conditioning on $X$. The results given in Table \ref{tab:cont_vs_dis} indicate that the learned policy $\hat{\pi}_{\mathrm{DRO}}^{h}$ achieves the best robust performance when evaluated using the $\hat{Q}_{\mathrm{DRO}}^{h}$ metric (see the first row of Table \ref{tab:cont_vs_dis}). Furthermore, the mean value $6.24$ exhibits a significantly smaller gap with the optimal distributionally robust value of $6.41$ compared to the discrete-treatment policies evaluated using $\hat{Q}_{\mathrm{DRO}}^{dis}$.
\noindent

\begin{table}[htbp]
\centering
\setlength{\tabcolsep}{0.8mm}{
\begin{tabular}{cccccc}
\toprule
& \multicolumn{5}{c}{$\eta^{test}$}\\
 & $0.05$  & $0.1$   & $0.2$   & $0.3$   & $0.4$ \\
\midrule
& 6.24$\pm$0.32 & 6.19$\pm$0.33 & 6.11$\pm$0.36 & 6.04$\pm$0.38 & 5.99$\pm$0.40 \\
& 5.88$\pm$0.15 & 5.81$\pm$0.15 & 5.71$\pm$0.15 & 5.64$\pm$0.15 & 5.58$\pm$0.15 \\
& 5.85$\pm$0.12 & 5.79$\pm$0.12 & 5.70$\pm$0.12 & 5.63$\pm$0.12 & 5.58$\pm$0.12 \\
& 5.83$\pm$0.12 & 5.77$\pm$0.12 & 5.68$\pm$0.12 & 5.61$\pm$0.12 & 5.56$\pm$0.12 \\
\bottomrule
\end{tabular}%
}
\caption{Comparison of robustness performance (continuous v.s. discrete) with $\eta^{train}=0.05$ for policy learning and various $\eta^{test}$ for policy evaluation. When $\eta^{train}=\eta^{test}=0.05$, the optimal distributionally robust value is $Q^{*}=6.41$. The reported Mean $\pm$ Standard Error (the Standard Error is in $\%$) is computed over 100 runs. The first/second/third/fourth row records values produced by $\hat{Q}_{\mathrm{DRO}}^{h}(\hat{\pi}_{\mathrm{DRO}}^{h})$/$\hat{Q}^{dis}_{\mathrm{DRO}}(\hat{\pi}^{dis\text{-}2}_{\mathrm{DRO}})$/$\hat{Q}^{dis}_{\mathrm{DRO}}(\hat{\pi}^{dis\text{-}3}_{\mathrm{DRO}})$/$\hat{Q}^{dis}_{\mathrm{DRO}}(\hat{\pi}^{dis\text{-}4}_{\mathrm{DRO}})$.\label{tab:cont_vs_dis}}
\end{table}%

\paragraph{Robust v.s. Nonrobust.}
We then compare our distributionally robust policy $\hat{\pi}_{\mathrm{DRO}}$ with the non-robust policy $\hat{\pi}_{\mathrm{NRO}} \in \underset{\pi \in \Pi}{\arg \max}\;\hat{W}_{N}^{h}(\pi)$, where $\hat{W}_{N}^{h}(\pi)=\frac{\bar{W}_{N}^{h}(\pi)}{S_{N}^{h}}$ and $\bar{W}_{N}^{h}(\pi)=\frac{1}{N}\underset{i=1}{\overset{N}{\sum}}\frac{K_{h}(\pi(X_{i})-A_{i})}{f_{0}(A_{i}|X_{i})}Y_{i}$, as given in \cite{kallus2018policy}, with $\Pi=\{\zeta^{\top}X: \|\zeta\|_{\infty} \leq 2\}$. We follow \cite{kallus2018policy} to simulate i.i.d. data as follows: $X_k \sim \mathrm{Uniform}(-0.2, 0.2)$ for $k=1$ to 10, {\small $A|X \sim \mathcal{N}(\theta^{\top}X, 0.1) + X_{1} + 2X_{2} - 3X_{3}$}, and {\small $Y=5+\beta_{1}^{\top}X + \beta_{2}^{\top}XA + \beta_3A$}. Here, $\theta,\;\beta_{1},\;\beta_{2}\in\mathbb{R}^{10}$ such that $\theta^{\top}=\beta_{1}^{\top}=\beta_{2}^{\top}=\mathbf{1}^{10}\coloneqq[1,\cdots,1]^{\top}$, $\beta_3=1$. To induce sparsity, we randomly set three dimensions of the coefficients $\beta_{1}^{\top}$ and $\beta_{2}^{\top}$ and two dimensions of $\theta^{\top}$ to zero. For the bandwidth parameter $h$, 
we follow the approach of selecting the bandwidth as given by \cite{kallus2018policy} using a plug-in estimator based on Eqn. \eqref{eqt:optimal bandwidth}. 
We repeat the data generating process to create 100 different datasets, each consisting of $N_{train}$ ($N_{train} \in {500, 1000, 1500, 2000, 2500}$) training samples and $N_{test}=2000$ test samples.

For policy learning on the training data, both $\hat{\pi}_{\mathrm{DRO}}^{h}$ and $\hat{\pi}_{\mathrm{NRO}}$ are learned within a linear policy class, and $\hat{\pi}_{\mathrm{DRO}}$ is learned with $\eta^{train}=0.2$, denoted by $\hat{\pi}_{\mathrm{DRO}}^{h,\eta=0.2}$. For policy evaluation on the test data, in addition to the evaluation metric $\hat{Q}_{\mathrm{DRO}}^{h}(\pi)$ (Eqn. \eqref{eqt:Q_hat IPW}), we also introduce another metric $\hat{Q}_{pert}(\pi)$ based on a data perturbation strategy. For each of the total $100$ original datasets, we perturb each original test dataset $(X_{i},A_{i},Y_{i})_{i=1}^{N_{test}}$ to obtain a new dataset $(\tilde{X}_{i},\tilde{A}_{i},\tilde{Y}_{i})_{i=1}^{N_{test}}$ such that the new dataset lies within a KL-ball centred at the original test dataset with a radius $\eta^{test}$, introducing a distribution shift in the new dataset relative to the original test dataset. Then we can evaluate each policy using $\hat{Q}_{pert}(\pi)=\underset{1\leq j \leq 100}{\min}\{\frac{1}{N_{\mathrm{test}}}\sum_{i=1}^{N_{test}}\tilde{Y}_{i}^{(j)}(\pi(\tilde{X}_{i}^{(j)}))\}$.

The results presented in Table \ref{tab:dro_vs_nro_eta} and Table \ref{tab:dro_vs_nro_sample} demonstrate that $\hat{\pi}^{h,\eta=0.2}_{\mathrm{DRO}}$ exhibits superior robustness compared to the non-robust policy $\hat{\pi}_{\mathrm{NRO}}$. Specifically, $\hat{\pi}^{h,\eta=0.2}_{\mathrm{DRO}}$ showcases significantly lower sensitivity to data perturbations than $\hat{\pi}_{\mathrm{NRO}}$, consistently achieving higher reward in most cases. Moreover, in Table \ref{tab:dro_vs_nro_eta}, as the level of data perturbation $\eta^{test}$ increases from $0.05$ to $0.4$, $\hat{\pi}^{h,\eta=0.2}_{\mathrm{DRO}}$ shows more stable performance than $\hat{\pi}_{\mathrm{NRO}}$. Notably, in Table \ref{tab:dro_vs_nro_sample}, even with an increase in training sample size, $\hat{\pi}_{\mathrm{NRO}}$ shows no improvement when faced with a distribution shift $\eta^{test}$. In contrast, $\hat{\pi}^{h,\eta=0.2}_{\mathrm{DRO}}$ demonstrates significant improvement as the number of training samples increases.

\begin{table}[t]
\centering
\setlength{\tabcolsep}{0.7mm}{
\begin{tabular}{cccccc}
\toprule
& \multicolumn{5}{c}{$\eta^{test}$}\\
& $0.05$  & $0.1$ & $0.2$ & $0.3$ & $0.4$ \\
\midrule
& 5.66$\pm$12.06 & 5.60$\pm$11.97 & 5.52$\pm$11.85 & 5.45$\pm$11.77 & 5.40$\pm$11.70 \\
& 5.05$\pm$8.68 & 4.99$\pm$8.60 & 4.91$\pm$8.50 & 4.85$\pm$8.44 & 4.80$\pm$8.39 \\
& 5.48$\pm$15.46 & 5.47$\pm$15.46 & 5.46$\pm$15.46 & 5.45$\pm$15.47 & 5.44$\pm$15.47 \\
& 5.02$\pm$10.37 & 5.01$\pm$10.36 & 5.00$\pm$10.34 & 4.99$\pm$10.33 & 4.98$\pm$10.32 \\
\bottomrule
\end{tabular}%
}
\caption{Comparison of robustness performance (robust v.s. nonrobust) with $\eta^{train}=0.2$ and $N_{train}=2000$ for policy learning and various $\eta^{test}$ for policy evaluation. The reported Mean $\pm$ Standard Error (the Standard Error is in $\%$) is computed over 100 runs. The first/second/third/fourth row records values produced by {\small $\hat{Q}_{\mathrm{DRO}}^{h}(\hat{\pi}^{h,\eta=0.2}_{\mathrm{DRO}})$}/{\small$\hat{Q}_{\mathrm{DRO}}^{h}(\hat{\pi}_{\mathrm{NRO}})$}/{\small $\hat{Q}_{pert}(\hat{\pi}^{h,\eta=0.2}_{\mathrm{DRO}})$}/{\small $\hat{Q}_{pert}(\hat{\pi}_{\mathrm{NRO}})$}. \label{tab:dro_vs_nro_eta}}
\end{table}%

\begin{table}[t]
\centering
\setlength{\tabcolsep}{0.7mm}{
\begin{tabular}{cccccc}
\toprule
& \multicolumn{5}{c}{$N_{train}$}\\
& $500$   & $1000$  & $1500$  & $2000$  & $2500$ \\
\midrule
& 5.19$\pm$11.45 & 5.32$\pm$11.55 & 5.43$\pm$14.31 & 5.48$\pm$12.04 & 5.52$\pm$11.85 \\
& 4.85$\pm$8.19 & 4.79$\pm$8.10 & 4.83$\pm$7.95 & 4.84$\pm$7.86 & 4.91$\pm$8.50 \\
& 4.94$\pm$15.64 & 5.12$\pm$16.70 & 5.19$\pm$15.82 & 5.21$\pm$16.05 & 5.46$\pm$15.46 \\
& 4.95$\pm$10.46 & 4.99$\pm$10.16 & 5.02$\pm$10.34 & 5.00$\pm$10.35 & 5.00$\pm$10.34 \\
\bottomrule
\end{tabular}%
}
\caption{Comparison of robustness performance (robust vs. nonrobust) for various $N_{train}$. $\eta$ is chosen as $0.2$ for both policy learning and evaluation.
The reported Mean $\pm$ Standard Error (the Standard Error is in $\%$) is computed over 100 runs. The first/second/third/fourth row records values due to {\small $\hat{Q}_{\mathrm{DRO}}^{h}(\hat{\pi}^{h,\eta=0.2}_{\mathrm{DRO}})$}/{\small $\hat{Q}_{\mathrm{DRO}}^{h}(\hat{\pi}_{\mathrm{NRO}})$}/{\small $\hat{Q}_{pert}(\hat{\pi}^{h,\eta=0.2}_{\mathrm{DRO}})$}/{\small $\hat{Q}_{pert}(\hat{\pi}_{\mathrm{NRO}})$}.\label{tab:dro_vs_nro_sample}}
\end{table}%
\subsection{Empirical Experiment - The Warfarin Case Study}\label{sec:empirical Warfarin}

\paragraph{Description.} 
We follow \cite{kallus2018policy} to conduct a semi-synthetic study using the Warfarin dataset \cite{international2009estimation}. The dataset contains 5528 patients' medical records, including personal information (e.g., age, gender, race, height, weight), medical problems (e.g., comorbidities and diabetes), medical medication history (e.g., aspirin, atorvastatin, etc.), and their genotypes. The dataset also provides the suggested treatment dose (therapeutic dose).

\paragraph{Setting.} We employ a random forest regressor on the therapeutic dose and select 41-dimensional covariates based on the feature importance ranking. There are 3306 samples after dropping those with missing values. The observed dataset is generated as follows: $A|X \sim \mathcal{N}(\theta^{\top}X+1, 0.1)$ and $Y=5+\beta_{1}^{\top}X + \beta_{2}^{\top}XA + \epsilon$, where $\beta_{1},\;\beta_2,\;\theta\in\mathbb{R}^{p}$ (with $p=41$ in our setting), $\beta_{1}^{\top}=0.2 \cdot \mathbf{1}^{p}$, $\beta_{2}^{\top}=0.1 \cdot \mathbf{1}^{p}$, $\theta^{\top}=0.1 \cdot \mathbf{1}^{p}$. We also assume $\Pi=\{\zeta^{\top}X:\|\zeta\|_{\infty}\leq 2\}$. 
We again follow the approach of selecting the bandwidth as given by \cite{kallus2018policy} using a plug-in estimator based on Eqn. \eqref{eqt:optimal bandwidth}. 
To create distribution shifts, we split the training and test data based on patients' age information. We select 1983 patients aged 10-69 as the training set and 1323 patients older than 70 as the test set. We repeat this process 1000 times to create a total of 1000 semi-synthetic Warfarin datasets.

\paragraph{Results.} We learn a non-robust policy $\hat{\pi}_{\mathrm{NRO}}$ and robust policies $\hat{\pi}^{h,\eta}_{\mathrm{DRO}}$ for $\eta \in \{0.3, 0.4, 0.5, 0.6, 0.7\}$ on the training set, and we evaluate the six policies on test set based on the sample averaged potential outcome: {\small$\hat{Q}_{mean}(\pi)=\frac{1}{N}\sum_{i=1}^{N}Y_i(\pi(X_i))$}. Consequently, we obtain 1000 values of $\hat{Q}_{\mathcal{M}}(\pi)$ w.r.t. each of the six policies. We then report the mean, standard error, and the $5^{\mathrm{th}}/10^{\mathrm{th}}/15^{\mathrm{th}}/20^{\mathrm{th}}/25^{\mathrm{th}}/30^{\mathrm{th}}$ percentile of the total 1000 values in Table \ref{tab:Qmean}.
\begin{table}[htbp]
\centering
\setlength{\tabcolsep}{1mm}{
\begin{tabular}{lcccccccccccccc}
\toprule
& \multicolumn{8}{c}{percentile}\\
& Mean  & SE & $5^{\mathrm{th}}$ & $10^{\mathrm{th}}$ & $15^{\mathrm{th}}$ & $20^{\mathrm{th}}$ & $25^{\mathrm{th}}$ & $30^{\mathrm{th}}$  \\
\midrule
& 6.377 & 4.5 & 4.058 & 4.525 & 4.887 & 5.114 & 5.350 & 5.605  \\
& 6.372 & 4.3 & 4.054 & 4.648 & 5.020 & 5.298 & 5.533 & 5.703  \\
& 6.454 & 4.3 & 4.244 & 4.653 & 5.086 & 5.306 & 5.523 & 5.737  \\
& 6.409 & 4.5 & 4.112 & 4.552 & 4.884 & 5.212 & 5.449 & 5.671  \\
& 6.355 & 4.5 & 4.085 & 4.536 & 4.774 & 5.106 & 5.344 & 5.548  \\
& 6.350 & 4.4 & 4.092 & 4.613 & 4.908 & 5.240 & 5.434 & 5.620  \\
\bottomrule
\end{tabular}%
}
\caption{Comparison of the rewards of robust and nonrobust policies in Warfarin study. The reported result are computed over 1000 runs. Note that the SE in the table represents the Standard Error which is reported in $\%$. The first/second/third/fourth/fifth/sixth row records values produced by $\hat{Q}_{mean}(\hat{\pi}_{\mathrm{NRO}})$/$\hat{Q}_{mean}(\hat{\pi}^{h,\eta=0.3}_{\mathrm{DRO}})$/$\hat{Q}_{mean}(\hat{\pi}^{h,\eta=0.4}_{\mathrm{DRO}})$/ $\hat{Q}_{mean}(\hat{\pi}^{h,\eta=0.5}_{\mathrm{DRO}})$/$\hat{Q}_{mean}(\hat{\pi}^{h,\eta=0.6}_{\mathrm{DRO}})$/$\hat{Q}_{mean}(\hat{\pi}^{h,\eta=0.7}_{\mathrm{DRO}})$.\label{tab:Qmean}}
\end{table}%

Table \ref{tab:Qmean} demonstrates four important insights: (1) {\small $\hat{Q}_{mean}(\hat{\pi}^{h,\eta=0.3}_{\text{DRO}})$} exhibits a comparable mean value to {\small $\hat{Q}_{mean}(\hat{\pi}_{\text{NRO}})$}. (2) The expected reward initially increases, reaching the optimal (e.g., when $\eta^{train} \in \{0.4, 0.5\}$), and then decreases with larger $\eta$. 
This trend is reasonable, as a very small $\eta$ neglects the robustness effect and results in a relatively aggressive policy, while a very large $\eta$ results in an overly conservative policy
\footnote{Determining the optimal $\eta$ is beyond the scope of this study and is left for future work. Some useful guidances are provided. For example, \cite{pardo2018statistical} show that the distance $\mathbb{D}(\cdot|\cdot)$ is asymptotically $\chi^{2}$ distributed which enables us to select proper $\eta$.}. (3) The standard error of all robust policies is smaller than that of the non-robust policy. (4) From the percentile results, most robust policies outperform the non-robust policy in ``bad'' scenarios, underscoring the robustness of the proposed $\hat{\pi}_{\text{DRO}}^{h}$.

\section{Conclusion, Limitation and Future Work}\label{sec:conclusion}
\paragraph{Conclusion.} We investigate offline policy evaluation and learning under continuous treatment in the distributionally robust optimization (DRO) setting. We propose an estimator, $\hat{Q}_{\mathrm{DRO}}^{h}(\pi)$, for offline policy evaluation and obtain a distributionally robust policy, $\hat{\pi}_{\mathrm{DRO}}^{h}$, based on $\hat{Q}_{\mathrm{DRO}}^{h}(\pi)$. We study the asymptotic distribution and the statistical guarantee of $\hat{Q}_{\mathrm{DRO}}^{h}(\pi)$ and $\hat{\pi}_{\mathrm{DRO}}^{h}$. Experimental results demonstrate the superior performance of our approach.

\paragraph{Future Work and Limitations.} 
	The proposed framework can be applied in various fields where distribution shifts occur in the context of continuous-valued treatments. For instance, doctors may seek to determine a robust dosage that minimizes potential disease risks for target patients, while policymakers might aim to establish a robust credit-increasing strategy that maximizes potential consumption for target customers. Thus, applying our framework to real-world scenarios represents a significant next step. Additionally, several potential technical investigations can be further explored. First, selecting the divergence measures and determining the ambiguity radius for the distributional ambiguity set pose significant challenges in both the Operations Research and Machine Learning communities. Future research would benefit from establishing statistical guarantees for other metrics (e.g., Wasserstein metric) and offering guidance on setting the radius for policy evaluation and learning. Second, exploring methods for the generalized propensity score when it is unknown would be interesting. Third, expanding our framework to include the doubly robust estimator might improve the convergence rate of policy learning. 
Lastly, strictly limiting the policy class to linear functions may fail to capture complex relationships between covariates and treatment, leading to suboptimal results. Considering broader policy classes (e.g., nonlinear policy classes with infinite VC dimensions) is therefore essential.

\section{Acknowledgements}
Qi WU acknowledges the support from The CityU-JD Digits Joint Laboratory in Financial Technology and  Engineering and The Hong Kong Research Grants Council [General Research Fund 11219420/9043008]. The work described in this paper was partially supported by the InnoHK initiative, the  Government of the HKSAR, and the Laboratory for AI-Powered Financial Technologies.

\bibliographystyle{apalike}
\bibliography{template}

\clearpage
\appendix

\section{Proofs}\label{sec:proof}
Before presenting the proofs of our main results in the paper, we present the necessary auxiliary results in Section `Auxiliary Results'. The proofs of the main results are given in Section `Proofs of results in the main paper'.
\subsubsection{Auxiliary Results}\label{sec:Auxiliary Results}
We present the auxiliary results that are required when proving the results in the main paper.
\paragraph{Auxiliary Result 1 - Convergence Studies of $S_{N}^{h}$}\label{sec:auxiliary result 1}
\begin{proposition}\ \label{result:convergence S_N_h}
Given that $S_{N}^{h}=\frac{1}{N}\underset{i=1}{\overset{N}{\sum}}\frac{K_{h}(\pi(X_{i})-A_{i})}{f_{0}(A_{i}|X_{i})}$. We have:
\begin{enumerate}
\item if $N\rightarrow \infty$, $h\rightarrow 0$, $\frac{1}{Nh}\rightarrow 0$, then $S_{N}^{h}\overset{p}{\rightarrow} 1$;\label{result:convergence probability h}
\item if $N\rightarrow\infty$ and $h\rightarrow 0$ such that $Nh^{2}\rightarrow\infty$, then $S_{N}^{h}\;\overset{a.s.}{\longrightarrow}\;1$.\label{lemma:useful asy result}
\end{enumerate}
\end{proposition}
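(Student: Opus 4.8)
The plan is to treat $S_N^h$ as a kernel-smoothed average and apply standard nonparametric arguments: first compute its expectation and show it equals (or converges to) $1$, then control the stochastic fluctuation around the mean using either Chebyshev's inequality (for part \ref{result:convergence probability h}) or a Borel–Cantelli / strong-law argument (for part \ref{lemma:useful asy result}).

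First I would compute $\mathbb{E}[S_N^h]$. By i.i.d.-ness, $\mathbb{E}[S_N^h]=\mathbb{E}\big[\tfrac{K_h(\pi(X)-A)}{f_0(A|X)}\big]$. Conditioning on $X$ and writing the inner integral over $a$ against the density $f_0(a|X)$, the $f_0(a|X)$ cancels and we get $\mathbb{E}\big[\int K_h(\pi(X)-a)\,da\big]=\mathbb{E}\big[\int K(u)\,du\big]=1$, using the substitution $u=(\pi(X)-a)/h$ and the kernel normalization $\int K(u)\,du=1$. So $\mathbb{E}[S_N^h]=1$ exactly, for every $h>0$ — this is the claim stated in the main text. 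Next I would bound $\mathrm{Var}(S_N^h)=\tfrac1N\mathrm{Var}\big(\tfrac{K_h(\pi(X)-A)}{f_0(A|X)}\big)\le \tfrac1N\mathbb{E}\big[\tfrac{K_h(\pi(X)-A)^2}{f_0(A|X)^2}\big]$. The same conditioning trick gives $\mathbb{E}\big[\tfrac{K_h(\pi(X)-A)^2}{f_0(A|X)^2}\big]=\mathbb{E}\big[\int \tfrac{K_h(\pi(X)-a)^2}{f_0(a|X)}\,da\big]=\tfrac1h\,\mathbb{E}\big[\int K(u)^2 \tfrac{1}{f_0(\pi(X)-hu|X)}\,du\big]$; by Positivity (Assumption \ref{ass:positivity}) $f_0\ge\epsilon$, so this is at most $\tfrac{1}{\epsilon h}\int K(u)^2\,du = O(1/h)$. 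Hence $\mathrm{Var}(S_N^h) = O\big(\tfrac{1}{Nh}\big)$.

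For part \ref{result:convergence probability h}, since $\tfrac{1}{Nh}\to 0$, the variance vanishes and Chebyshev's inequality gives $\mathbb{P}(|S_N^h-1|>\varepsilon)\le \mathrm{Var}(S_N^h)/\varepsilon^2\to 0$, i.e. $S_N^h\overset{p}{\to}1$. For part \ref{lemma:useful asy result}, the condition $Nh^2\to\infty$ (equivalently $\tfrac{1}{N h^2}\to 0$, and note $\tfrac1{Nh}=\tfrac{h}{Nh^2}\to0$ as well) is designed to make the variance bound summable along a suitable subsequence. The cleanest route: either (i) impose enough regularity that $\sum_N \mathrm{Var}(S_N^h)<\infty$ along the sampled sequence of $N$ and bandwidths and invoke a strong law for triangular arrays, or (ii) restrict to a subsequence $N_k$ (e.g.\ $N_k\sim k^2$) along which $\sum_k \mathrm{Var}(S_{N_k}^{h})<\infty$, apply Borel–Cantelli to get a.s.\ convergence along $N_k$, and then control the gaps $\max_{N_k\le N<N_{k+1}}|S_N^h-S_{N_k}^h|$ via monotonicity/boundedness of the increments (each summand is bounded by $M_K/(h\epsilon)$, and the partial sums are close because consecutive $N_k$ are within a constant factor). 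Here one leans on $K$ bounded ($|K|\le M_K$) and $f_0\ge\epsilon$ so each term $\tfrac{K_h(\pi(X_i)-A_i)}{f_0(A_i|X_i)}$ lies in $[-M_K/(h\epsilon), M_K/(h\epsilon)]$, which combined with Bernstein/Hoeffding-type concentration and the variance bound $O(1/(Nh))$ yields exponential tail bounds $\mathbb{P}(|S_N^h-1|>\varepsilon)\le 2\exp(-c N h \varepsilon^2 / (1+\varepsilon))$ or similar; the condition $Nh^2\to\infty$ then makes these probabilities summable.

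The main obstacle I anticipate is the almost-sure part: getting genuine a.s.\ convergence (as opposed to convergence in probability) requires either a Bernstein-type concentration inequality giving summable tail probabilities under the precise rate $Nh^2\to\infty$, or a careful subsequence-plus-interpolation argument, since $S_N^h$ is not monotone in $N$ and the bandwidth $h=h_N$ changes with $N$. I would handle this by establishing an exponential tail bound of the form $\mathbb{P}(|S_N^h-1|\ge\varepsilon)\le 2\exp(-c\,\varepsilon^2 N h)$ for $N$ large (using boundedness of the summands by $M_K/(h\epsilon)$ and the variance bound, via Bernstein's inequality), and then noting that under $Nh^2\to\infty$ these bounds are summable in $N$ (possibly after passing to the natural integer sequence of sample sizes), so Borel–Cantelli delivers $S_N^h\overset{a.s.}{\to}1$. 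The expectation computation and the variance bound in parts are routine; the bookkeeping in the concentration/Borel–Cantelli step is where care is needed.
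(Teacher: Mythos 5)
Your proposal follows essentially the same route as the paper: compute $\mathbb{E}[S_N^h]$ and bound the second moment by $O(1/(Nh))$ via Positivity, apply Chebyshev for part (i), and use an exponential (Hoeffding/Bernstein) tail bound with Borel--Cantelli for part (ii); the paper's proof uses Hoeffding's inequality on the summands bounded by $M_K/(h\epsilon)$, which is exactly where its $Nh^2\to\infty$ condition originates. The one point of detail is that the paper keeps $\mathbb{E}[K_h(\pi(X)-A)/f_0(A|X)]=1+o(h^2)$ rather than exactly $1$, but this does not change the argument.
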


\begin{proof}[\textit{\textbf{Proof of Claim \ref{result:convergence S_N_h}.}}] Note that $\mathbb{P}\{|S_{N}^{h}-1|\geq\gamma\}\leq \frac{\mathbb{E}[|S_{N}^{h}-1|^{2}]}{\gamma^{2}}$. We consider the term $\mathbb{E}[|S_{N}^{h}-1|^{2}]$:
\begin{align}
&\mathbb{E}[|S_{N}^{h}-1|^{2}]=\mathbb{E}\bigg[\bigg|\frac{1}{N}\underset{i=1}{\overset{N}{\sum}}\bigg(\frac{K_{h}(\pi(X_{i})-A_{i})}{f_{0}(A_{i}|X_{i})}-1\bigg)\bigg|^{2}\bigg]\nonumber\\
&=\frac{1}{N^{2}}\underset{i=1}{\overset{N}{\sum}}\mathbb{E}\bigg[\bigg(\frac{K_{h}(\pi(X_{i})-A_{i})}{f_{0}(A_{i}|X_{i})}-1\bigg)^{2}\bigg]\\
&\quad+\frac{1}{N^2}\underset{\substack{i,j=1\\i\neq j}}{\overset{N}{\sum}}\mathbb{E}\bigg[\bigg(\frac{K_{h}(\pi(X_{i})-A_{i})}{f_{0}(A_{i}|X_{i})}-1\bigg)\times\bigg(\frac{K_{h}(\pi(X_{j})-A_{j})}{f_{0}(A_{j}|X_{j})}-1\bigg)\bigg]\nonumber\\
&=\frac{1}{N}\mathbb{E}\bigg[\bigg(\frac{K_{h}(\pi(X)-A)}{f_{0}(A|X)}-1\bigg)^{2}\bigg]+\frac{N(N-1)}{N^2}O(h^{4}).\label{eqt:normalizing factor study}
\end{align}
Note that
\begin{equation*}
\begin{aligned}
&\mathbb{E}\bigg[\bigg(\frac{K_{h}(\pi(X)-A)}{f_{0}(A|X)}-1\bigg)^{2}\bigg]=\mathbb{E}\bigg[\frac{K_{h}^{2}(\pi(X)-A)}{f_{0}^{2}(A|X)}\bigg]-2\mathbb{E}\bigg[\frac{K_{h}(\pi(X)-A)}{f_{0}(A|X)}\bigg]+1.
\end{aligned}
\end{equation*}
Furthermore, we have
{\small
\begin{equation*}
\begin{aligned}
&\mathbb{E}\bigg[\frac{K_{h}^{2}(\pi(X)-A)}{f_{0}^{2}(A|X)}\bigg]=\mathbb{E}\bigg[\mathbb{E}\bigg[\frac{K_{h}^{2}(\pi(X)-A)}{f_{0}^{2}(A|X)}\bigg|X\bigg]\bigg]=\mathbb{E}\bigg[\int K_{h}^{2}(\pi(X)-a)\tilde{g}_{0}(a,y,X)dyda\bigg]\\
=&\frac{1}{h}\mathbb{E}\bigg[\int K^{2}(u)\bigg\{\tilde{g}_{0}(\pi(X),y,X)+\partial_{a}\tilde{g}_{0}(\pi(X),y,X)uh+\frac{\partial_{aa}^{2}\tilde{g}_{0}(\pi(X),y,X)}{2}u^{2}h^{2}+O_{P}(h^{3})\bigg\}dydu\bigg]\\
=&\frac{1}{h}\mathbb{E}\left[\bigg(\int K^{2}(u)du\bigg)\bigg(\int\tilde{g}_{0}(\pi(X),y,X)dy\bigg)+\bigg(\int u^{2}K^{2}(u)du\bigg)\bigg(\frac{\int\partial_{aa}^{2}\tilde{g}_{0}(\pi(X),y,X)dy}{2}\bigg)h^{2}+O_{P}(h^{3})\right]\\
=&\frac{1}{h}\bigg(\int K^{2}(u)du\bigg)\mathbb{E}\bigg[\bigg(\int\tilde{g}_{0}(\pi(X),y,X)dy\bigg)\bigg]+\frac{1}{h}\bigg(\int u^{2}K^{2}(u)du\bigg)\mathbb{E}\bigg[\bigg(\frac{\int\partial_{aa}^{2}\tilde{g}_{0}(\pi(X),y,X)dy}{2}\bigg)\bigg]h^{2}+O(h^{2})
\end{aligned}
\end{equation*}
}\noindent
and
\begin{equation*}
\begin{aligned}
&\mathbb{E}\bigg[\frac{K_{h}(\pi(X)-A)}{f_{0}(A|X)}\bigg]=\mathbb{E}\bigg[\mathbb{E}\bigg[\frac{K_{h}(\pi(X)-A)}{f_{0}(A|X)}\bigg|X\bigg]\bigg]=\mathbb{E}\bigg[\int K_{h}(\pi(X)-a)g_{0}(a,y,X)dyda\bigg]\\
&=1+\bigg(\int u^{2}K(u)du\bigg)\mathbb{E}\bigg[\bigg(\frac{\int\partial_{aa}^{2}g_{0}(\pi(X),y,X)dy}{2}\bigg)\bigg]h^{2}+O(h^{3}).
\end{aligned}
\end{equation*}
Here, {\small $\tilde{g}_{0}(a,y,X)=\frac{f_{0}(y|a,X)}{f_{0}(a|X)}$} and {\small $g_{0}(a,y,X)=f_{0}(y|a,X)$} such that $\partial_{a}\tilde{g}_{0}(a,y,X)$, $\partial_{a}g_{0}(a,y,X)$, $\partial_{aa}^{2}\tilde{g}_{0}(a,y,X)$, and $\partial_{aa}^{2}\tilde{g}_{0}(a,y,X)$ are the first-order derivative of $\tilde{g}_{0}(a,y,X)$, the first-order derivative of $g_{0}(a,y,X)$, the second-order derivative of $\tilde{g}_{0}(a,y,X)$, and the second-order derivative of $g_{0}(a,y,X)$ respectively. Substituting the above two results into Eqn. \eqref{eqt:normalizing factor study}, we conclude that
\begin{equation*}
\begin{aligned}
&\mathbb{P}\{|S_{N}^{h}-1|\geq\gamma\}\leq \frac{\mathbb{E}[|S_{N}^{h}-1|^{2}]}{\gamma^{2}}\\
\leq&\frac{1}{\gamma^{2}Nh}\bigg(\int K^{2}(u)du\bigg)\mathbb{E}\bigg[\bigg(\int\tilde{g}_{0}(\pi(X),y,X)dy\bigg)\bigg]\\
&+\frac{h^{2}}{\gamma^{2}Nh}\bigg(\int u^{2}K^{2}(u)du\bigg)\mathbb{E}\bigg[\bigg(\frac{\int\partial_{aa}^{2}\tilde{g}_{0}(\pi(X),y,X)dy}{2}\bigg)\bigg]\\
&+O\bigg(\frac{h^{2}}{N\gamma^{2}}\bigg)-\frac{2}{N\gamma^{2}}\mathbb{E}\bigg[\bigg(\int g_{0}(\pi(X),y,X)dy\bigg)\bigg]\\
&-\frac{2}{N\gamma^{2}}\bigg(\int u^{2}K(u)du\bigg)\mathbb{E}\bigg[\bigg(\frac{\int\partial_{aa}^{2}g_{0}(\pi(X),y,X)dy}{2}\bigg)\bigg]h^{2}\\
&+O\bigg(\frac{h^{3}}{N\gamma^{2}}\bigg)+\frac{1}{N\gamma^{2}}+\frac{1}{\gamma^{2}}\bigg(1-\frac{1}{N}\bigg)O(h^{4})\rightarrow 0
\end{aligned}
\end{equation*}
according to the given conditions.

\textit{\textbf{Proof of Claim \ref{lemma:useful asy result}.}} We prove the assertion using Hoeffding's Inequality. Indeed, for each $h$, since $0\leq\frac{K_{h}(\pi(X)-A)}{f_{0}(A|X)}\leq\frac{M_{K}}{h\epsilon}$, by Hoeffding's Inequality, we then obtain
\begin{equation}
\begin{aligned}\label{eqt:density limit}
&\mathbb{P}\bigg\{\bigg|\frac{1}{N}\underset{i=1}{\overset{N}{\sum}}\frac{K_{h}(\pi(X_{i})-A_{i})}{f_{0}(A_{i}|X_{i})}-\mathbb{E}\bigg[\frac{K_{h}(\pi(X)-A)}{f_{0}(A|X)}\bigg]\bigg|\geq\gamma\bigg\}\leq 2\exp\bigg(-\frac{2Nh^{2}\epsilon^{2}\gamma^{2}}{M_{K}^{2}}\bigg).
\end{aligned}
\end{equation}
Usual derivations show that $\mathbb{E}\bigg[\frac{K_{h}(\pi(X)-A)}{f_{0}(A|X)}\bigg]=1+o(h^{2})$. Thus, with probability at least $1-\delta$, for $N\geq \frac{M_{K}^{2}}{2h^{2}\epsilon^{2}\gamma^{2}}\log\big(\frac{2}{\delta}\big)$, we have
\begin{equation}
\begin{aligned}\label{eqt:density neglect result}
&1+o(h^{2})-\gamma\leq\frac{1}{N}\underset{i=1}{\overset{N}{\sum}}\frac{K_{h}(\pi(X_{i})-A_{i})}{f_{0}(A_{i}|X_{i})}\leq 1+o(h^{2})+\gamma \\
&\text{or\quad$\Bigg(\bigg|\frac{1}{N}\underset{i=1}{\overset{N}{\sum}}\frac{K_{h}(\pi(X_{i})-A_{i})}{f_{0}(A_{i}|X_{i})}-1\bigg|\leq o(h^{2})+\gamma\Bigg)$}.
\end{aligned}
\end{equation}
Taking the infinite sum on both sides of Eqn. \eqref{eqt:density limit}, we also have
{\small
\begin{equation*}
\begin{aligned}
&\underset{N=1}{\overset{\infty}{\sum}}\mathbb{P}\bigg\{\bigg|\frac{1}{N}\underset{i=1}{\overset{N}{\sum}}\frac{K_{h}(\pi(X_{i})-A_{i})}{f_{0}(A_{i}|X_{i})}-\mathbb{E}\bigg[\frac{K_{h}(\pi(X)-A)}{f_{0}(A|X)}\bigg]\bigg|\geq\gamma\bigg\}\leq 2\underset{N=1}{\overset{\infty}{\sum}}e^{-\frac{2Nh^{2}\epsilon^{2}\gamma^{2}}{M_{K}^{2}}}=2\frac{e^{-\frac{2\epsilon^{2}h^{2}\gamma^{2}}{M_{K}^{2}}}}{1-e^{-\frac{2\epsilon^{2}h^{2}\gamma^{2}}{M_{K}^{2}}}}<\infty.
\end{aligned}
\end{equation*}
}\noindent
Hence, we can conclude that 
\begin{equation*}
\begin{aligned}
\frac{1}{N}\underset{i=1}{\overset{N}{\sum}}\frac{K_{h}(\pi(X_{i})-A_{i})}{f_{0}(A_{i}|X_{i})}-\mathbb{E}\bigg[\frac{K_{h}(\pi(X)-A)}{f_{0}(A|X)}\bigg]\;\overset{a.s.}{\rightarrow}\;0.
\end{aligned}
\end{equation*}
When $h\rightarrow 0$, we have
\begin{equation*}
\begin{aligned}
\frac{1}{N}\underset{i=1}{\overset{N}{\sum}}\frac{K_{h}(\pi(X_{i})-A_{i})}{f_{0}(A_{i}|X_{i})}\;\overset{a.s.}{\rightarrow}\;1.
\end{aligned}
\end{equation*}
\end{proof}
\paragraph{Auxiliary Result 2 - Optimal Solutions Studies of $Q_{\mathrm{DRO}}(\pi)$ and $Q_{\mathrm{DRO}}^{h}(\pi)$}\label{sec:auxiliary Result 2}\ \\
Define 
\begin{equation*}
\begin{aligned}
Q_{\mathrm{DRO}}(\pi)&=-\underset{\alpha\geq 0}{\min}\Bigg\{\alpha\log\mathbb{E}\bigg[e^{\frac{-Y(\pi(X))}{\alpha}}\bigg]+\alpha\eta\Bigg\}\quad \text{and}\\
Q_{\mathrm{DRO}}^{h}(\pi)&=-\underset{\alpha\geq 0}{\sup}\Bigg\{\alpha\log\mathbb{E}\Bigg[\frac{e^{-\frac{Y}{\alpha}}K_{h}(\pi(X)-A)}{f_{0}(A|X)}\Bigg]+\alpha\eta\Bigg\}.
\end{aligned}
\end{equation*} 
We therefore analyse the following four functions of $\alpha$:
\begin{equation*}
\begin{gathered}
\tilde{\phi}_{0}(\alpha)=\log\mathbb{E}[e^{-\alpha Y}],\quad\bar{\phi}_{0}(\alpha)=\log\mathbb{E}[Ze^{-\alpha Y}],\\
\tilde{\phi}_{1}(\alpha)=\alpha\log\mathbb{E}[e^{-\frac{Y}{\alpha}}]+\alpha\eta,\quad\bar{\phi}_{1}(\alpha)=\alpha\log\mathbb{E}[Ze^{-\frac{Y}{\alpha}}]+\alpha\eta,\\
\end{gathered}
\end{equation*}
where $Z>0$ and $Z=\frac{K_{h}(\pi(X)-A)}{f_{0}(A|X)}$. First, we study the convexity of $\tilde{\phi}_{0}(\alpha)$ and $\bar{\phi}_{0}(\alpha)$. The corresponding results are summarized in Proposition \ref{lemma:convexity lemma}.
\begin{proposition}\label{lemma:convexity lemma}
Suppose that $\mathbb{V}(Y)>0$. Then we have
\begin{enumerate}
\item the functions $\tilde{\phi}_{0}(\alpha)$ and $\tilde{\phi}_{1}(\alpha)$ are convex functions;\label{lemma:convexity study}
\item the functions $\bar{\phi}_{0}(\alpha)$ and $\bar{\phi}_{1}(\alpha)$ are convex functions.\label{lemma:final convexity result}
\end{enumerate} 
\end{proposition}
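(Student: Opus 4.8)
The plan is to prove both claims by a direct second-derivative computation, treating the two cases uniformly. Observe first that $\tilde\phi_0$ and $\bar\phi_0$ have the same shape: both are of the form $\psi(\alpha) = \log \mathbb{E}[W e^{-\alpha Y}]$ where $W$ is a fixed positive random variable ($W \equiv 1$ in the tilde case, $W = Z$ in the bar case). So it suffices to show $\psi$ is convex for an arbitrary positive weight $W$, and then deduce convexity of $\tilde\phi_1$ and $\bar\phi_1$ from that of $\psi$.

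For the first step, I would differentiate $\psi(\alpha) = \log \mathbb{E}[W e^{-\alpha Y}]$ twice under the integral sign (justified because $Y$ is bounded in $[0,M]$, so all the relevant moments are finite and dominated convergence applies). Writing $m_k(\alpha) = \mathbb{E}[W Y^k e^{-\alpha Y}]$, one gets $\psi'(\alpha) = -m_1/m_0$ and $\psi''(\alpha) = (m_2 m_0 - m_1^2)/m_0^2$. The numerator $m_2 m_0 - m_1^2 \ge 0$ is exactly the Cauchy--Schwarz inequality applied to the (nonnegative) random variables $\sqrt{W e^{-\alpha Y}}$ and $Y\sqrt{W e^{-\alpha Y}}$; equivalently, $\psi''(\alpha)$ is the variance of $Y$ under the tilted probability measure with density proportional to $W e^{-\alpha Y}$, hence nonnegative (and strictly positive since $\mathbb{V}(Y) > 0$ and $W > 0$ keep the tilted measure nondegenerate). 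This gives convexity of $\tilde\phi_0$ and $\bar\phi_0$ at once.

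For the second step, note $\tilde\phi_1(\alpha) = \alpha\,\psi(1/\alpha) + \alpha\eta$ with $W \equiv 1$, and similarly $\bar\phi_1(\alpha) = \alpha\,\psi(1/\alpha) + \alpha\eta$ with $W = Z$. The map $\alpha \mapsto \alpha\, g(1/\alpha)$ is the \emph{perspective} of the function $g$, and it is a standard fact that the perspective of a convex function is convex on $\{\alpha > 0\}$; since $\psi$ is convex by Step~1 and $\alpha\eta$ is linear, $\tilde\phi_1$ and $\bar\phi_1$ are convex. Alternatively, if one prefers to avoid invoking the perspective lemma, I would compute $\frac{d^2}{d\alpha^2}\big[\alpha\,\psi(1/\alpha)\big] = \alpha^{-3}\,\psi''(1/\alpha)$ directly and conclude from $\psi'' \ge 0$.

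The main obstacle is not conceptual but bookkeeping: making the differentiation-under-the-integral rigorous and correctly tracking that the weight $Z = K_h(\pi(X)-A)/f_0(A|X)$ is genuinely positive and integrable so that the tilted measure is well defined and nondegenerate. Positivity of $Z$ on the support is where Assumption~\ref{ass:positivity} (and the kernel being nonnegative, or at least the relevant expectation being positive) enters; boundedness of $Y$ handles all integrability concerns. Once those two points are in place, both claims follow from the single variance identity $\psi''(\alpha) = \mathrm{Var}_{\text{tilt}}(Y) \ge 0$ together with the perspective construction.
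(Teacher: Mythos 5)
Your proposal is correct, but it reaches the convexity of $\tilde\phi_0$ and $\bar\phi_0$ by a genuinely different route than the paper. The paper works directly from the definition of convexity: it writes $Z\exp(-(\lambda\alpha+(1-\lambda)\bar\alpha)Y)=\bigl(Ze^{-\alpha Y}\bigr)^{\lambda}\bigl(Ze^{-\bar\alpha Y}\bigr)^{1-\lambda}$ and applies H\"older's inequality with exponents $1/\lambda$ and $1/(1-\lambda)$, obtaining $\bar\phi_0(\lambda\alpha+(1-\lambda)\bar\alpha)\le\lambda\bar\phi_0(\alpha)+(1-\lambda)\bar\phi_0(\bar\alpha)$ with strictness read off from the equality case of H\"older together with $\mathbb{V}(Y)>0$. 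You instead differentiate twice and identify $\psi''(\alpha)=(m_2m_0-m_1^2)/m_0^2$ as the variance of $Y$ under the tilted measure $dQ\propto We^{-\alpha Y}\,dP$, concluding nonnegativity from Cauchy--Schwarz. Both arguments are standard proofs that a (weighted) cumulant generating function is convex; the paper's avoids any differentiation-under-the-integral bookkeeping and delivers strict convexity immediately, while yours yields the quantitative identity $\psi''=\mathrm{Var}_{\mathrm{tilt}}(Y)$, which is more informative and makes the role of $\mathbb{V}(Y)>0$ transparent (boundedness of $Y$ makes the interchange of derivative and expectation routine, as you note). For the passage to $\tilde\phi_1$ and $\bar\phi_1$ both you and the paper use the same perspective construction $\alpha\mapsto\alpha\psi(1/\alpha)$ plus linearity of $\alpha\eta$; your explicit computation $\frac{d^2}{d\alpha^2}\bigl[\alpha\psi(1/\alpha)\bigr]=\alpha^{-3}\psi''(1/\alpha)$ is a fine substitute for the perspective lemma the paper invokes without proof. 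One minor caution: for kernels with compact support, $Z$ is only nonnegative rather than strictly positive, so the tilted measure lives on $\{Z>0\}$; this still gives convexity, but strict convexity (which the paper uses downstream) additionally needs $Y$ to be nondegenerate on that event — a caveat you correctly flag and which the paper sidesteps by assuming $Z>0$ outright.
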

\begin{proof}
We only prove Claim \ref{lemma:final convexity result} of Proposition \ref{lemma:convexity lemma} since Claim \ref{lemma:convexity study} follows immediately after setting $Z=1$ in the proofs. For $\lambda\in[0,1]$, $\alpha$ and $\bar{\alpha}$ are two arbitrary values where $\alpha\neq \bar{\alpha}$, we have
\begin{equation*}
\begin{aligned}
&\bar{\phi}_{0}(\lambda\alpha+(1-\lambda)\bar{\alpha})=\log\mathbb{E}[Z\exp(-(\lambda\alpha+(1-\lambda)\bar{\alpha}) Y)]\\
=&\log\mathbb{E}[Z\exp(-(\lambda\alpha+(1-\lambda)\bar{\alpha}) Y)]\\
=&\log\mathbb{E}[Z^{\lambda}Z^{1-\lambda}\exp(-\lambda\alpha Y)\exp(-(1-\lambda)\bar{\alpha} Y)]\\
\overset{\star}{\leq}&\log(\mathbb{E}[(Z^{\lambda}e^{-\lambda\alpha Y})^{\frac{1}{\lambda}}])^{\lambda}(\mathbb{E}[(Z^{1-\lambda}e^{-(1-\lambda)\bar{\alpha} Y})^{\frac{1}{1-\lambda}}])^{1-\lambda}\\
=&\lambda\log(\mathbb{E}[Ze^{-\alpha Y}])+(1-\lambda)\log\mathbb{E}[Ze^{-\bar{\alpha} Y}]\\
=&\lambda\bar{\phi}_{0}(\alpha)+(1-\lambda)\bar{\phi}_{0}(\bar{\alpha}).
\end{aligned}
\end{equation*}
Again, $\star$ is due to H\"older's inequality, and the equality in $\overset{\star}{\leq}$ holds if and only if $Ze^{-\alpha Y}=k Ze^{-\bar{\alpha}Y}$ for some constant $k$. Since $\mathbb{V}(Y)>0$, we conclude that $\bar{\phi}_{0}(\alpha)$ is a strictly convex function.

\noindent Next, note that $\bar{\phi}_{1}(\alpha)=\alpha\bar{\phi}_{0}\big(\frac{1}{\alpha}\big)+\alpha\eta$. Since $\bar{\phi}_{0}(\alpha)$ is strictly convex function, we have $\alpha\bar{\phi}_{0}\big(\frac{1}{\alpha}\big)$ is strictly convex. Together with the fact that $\alpha\eta$ is convex, we conclude that $\bar{\phi}_{1}(\alpha)$ is strictly convex since it equals $\alpha\bar{\phi}_{0}\big(\frac{1}{\alpha}\big)+\alpha\eta$.
\end{proof}

\begin{proposition}\ \label{lemma:finite optimal study}
\begin{enumerate}
\item The optimal solution of 
\begin{equation*}
\begin{aligned}
\underset{\alpha\geq 0}{\min}\;\tilde{\phi}_{1}(\alpha)=\underset{\alpha\geq 0}{\min}\{\alpha\log\mathbb{E}[e^{-\frac{Y}{\alpha}}]+\alpha\eta\}
\end{aligned}
\end{equation*}
is finite; \label{lemma:prop of fun phi}
\item Denote $f_{0}(y|a,x)$ as the conditional density function of the variable $Y$ conditioning on $A$ and $X$. Suppose that $\partial_{a}^{i}f_{0}(y|a,X)$ is bounded uniformly for any $(y,a,X)$ where $0\leq i\leq N+1$. The optimal solution of $\underset{\alpha\geq 0}{\min}\;\bar{\phi}_{1}(\alpha)=\underset{\alpha\geq 0}{\min}\bigg\{\alpha\log\mathbb{E}\bigg[\frac{K_{h}(\pi(X)-A)}{f_{0}(A|X)}e^{-\frac{Y}{\alpha}}\bigg]+\alpha\eta\bigg\}$ is finite. \label{lemma:prop of fun phi2}
\end{enumerate}
\end{proposition}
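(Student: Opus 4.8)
The plan is to handle both claims with a single recipe. For each objective --- $\tilde{\phi}_{1}(\alpha)=\alpha\log\mathbb{E}[e^{-Y/\alpha}]+\alpha\eta$ in Claim \ref{lemma:prop of fun phi} and $\bar{\phi}_{1}(\alpha)=\alpha\log\mathbb{E}[Ze^{-Y/\alpha}]+\alpha\eta$ with $Z=K_{h}(\pi(X)-A)/f_{0}(A|X)$ in Claim \ref{lemma:prop of fun phi2} --- Proposition \ref{lemma:convexity lemma} already tells us the function is (strictly) convex, while $0\le Y\le M$ and the positivity assumption tell us it is real-valued on $(0,\infty)$ (the degenerate case $\mathbb{V}(Y)=0$ collapses the objective to an affine function and is subsumed by the coercivity check below, so assume $\mathbb{V}(Y)>0$ as in Proposition \ref{lemma:convexity lemma}). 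A real-valued convex function on $(0,\infty)$ cannot tend to $-\infty$ at the left endpoint --- this is immediate from the three-point convexity inequality with the left point sent to $0$ --- so it is automatically bounded below near $\alpha=0$; hence the only thing left to prove is coercivity at the right endpoint, i.e. that the objective tends to $+\infty$ as $\alpha\to\infty$. Once that holds, the infimum over $[0,\infty)$ is finite and attained at a finite $\alpha_{\ast}$, unique by strict convexity.

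For Claim \ref{lemma:prop of fun phi} I would exploit the elementary bound $e^{-M/\alpha}\le e^{-Y/\alpha}\le 1$, valid for all $\alpha>0$: taking expectations and then $\alpha\log(\cdot)$ gives $-M\le\alpha\log\mathbb{E}[e^{-Y/\alpha}]\le 0$. Consequently $\tilde{\phi}_{1}(\alpha)\ge -M+\alpha\eta$, which since $\eta>0$ forces $\tilde{\phi}_{1}(\alpha)\to+\infty$ as $\alpha\to\infty$ (and also re-confirms the uniform lower bound $-M$ on $(0,\infty)$). Combined with the convexity from Proposition \ref{lemma:convexity lemma}, Claim \ref{lemma:convexity study}, this settles Claim \ref{lemma:prop of fun phi}.

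For Claim \ref{lemma:prop of fun phi2} the same estimate applied to $Z>0$ gives $e^{-M/\alpha}\mathbb{E}[Z]\le\mathbb{E}[Ze^{-Y/\alpha}]\le\mathbb{E}[Z]$, hence $-M+\alpha\log\mathbb{E}[Z]\le\alpha\log\mathbb{E}[Ze^{-Y/\alpha}]\le\alpha\log\mathbb{E}[Z]$; the logarithms make sense because $0<Z\le M_{K}/(h\epsilon)$ by Assumption \ref{ass:positivity} and $|K|\le M_{K}$, so $\mathbb{E}[Z]\in(0,\infty)$. The decisive input is an expansion of $\mathbb{E}[Z]$: reusing verbatim the kernel Taylor expansion from the proof of Proposition \ref{result:convergence S_N_h} --- which is precisely what the uniform boundedness of the $a$-derivatives of $f_{0}(y|a,X)$ licenses --- one obtains $\mathbb{E}[Z]=1+O(h^{2})$, so $\log\mathbb{E}[Z]\to 0$ as $h\to 0$ and therefore $\log\mathbb{E}[Z]+\eta>0$ for $h$ sufficiently small. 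Then $\bar{\phi}_{1}(\alpha)\ge -M+\alpha(\log\mathbb{E}[Z]+\eta)\to+\infty$, so $\bar{\phi}_{1}$ is coercive, and with the strict convexity of Proposition \ref{lemma:convexity lemma}, Claim \ref{lemma:final convexity result}, the minimiser is finite and unique.

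The one genuinely delicate point is this last coercivity step. Kernel weighting only makes $\mathbb{E}[Z]$ equal to $1$ up to an $O(h^{2})$ error whose sign is not controlled, and if that error pushed $\log\mathbb{E}[Z]$ below $-\eta$, the term $\alpha\log\mathbb{E}[Z]$ would swamp $\alpha\eta$ and drive $\bar{\phi}_{1}$ to $-\infty$, so that no finite minimiser would exist. The argument must therefore quantify ``$h$ small enough'' against the fixed radius $\eta$, and it is for this quantitative control --- not for the convexity, which is already in hand --- that the differentiability hypothesis on $f_{0}$ and the expansion from Auxiliary Result 1 are indispensable.
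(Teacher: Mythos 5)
Your proposal is correct, but it reaches the conclusion by a genuinely different route from the paper. The paper works with the derivative: it computes $\partial\bar{\phi}_{1}/\partial\alpha=\eta+\log\mathbb{E}[Ze^{-Y/\alpha}]+\mathbb{E}[\tfrac{ZY}{\alpha}e^{-Y/\alpha}]/\mathbb{E}[Ze^{-Y/\alpha}]$ and shows, via a fairly heavy Taylor-expansion analysis of the ratio term, that the derivative tends to $-\infty$ as $\alpha\to 0$ and is positive as $\alpha\to\infty$, so the strictly convex objective first decreases and then increases and the minimiser is finite. You instead bound the function values directly: the sandwich $e^{-M/\alpha}\mathbb{E}[Z]\le\mathbb{E}[Ze^{-Y/\alpha}]\le\mathbb{E}[Z]$ gives coercivity $\bar{\phi}_{1}(\alpha)\ge -M+\alpha(\eta+\log\mathbb{E}[Z])$, and the three-point convexity inequality rules out divergence to $-\infty$ near $\alpha=0$; this bypasses the entire endpoint derivative computation and is considerably more elementary. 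Your version also makes explicit a condition the paper leaves tacit: the assertion that $\partial\bar{\phi}_{1}/\partial\alpha>0$ for large $\alpha$ silently requires $\eta+\log\mathbb{E}[Z]>0$, i.e.\ $h$ small enough relative to $\eta$ given $\mathbb{E}[Z]=1+O(h^{2})$, and you correctly flag this as the one place where the differentiability hypothesis on $f_{0}$ is really doing work. What the paper's derivative analysis buys in exchange is strict positivity of the minimiser (the derivative blowing down to $-\infty$ at $\alpha=0$ forces an interior optimum), which the surrounding text relies on (``attainable for positive $\alpha$'') and which the proof of the asymptotic-normality theorem uses when it places $\alpha_{\ast}(\pi)$ inside a compact interval $[\underline{\alpha},\bar{\alpha}]$ with $\underline{\alpha}>0$; your argument establishes only finiteness, which is all the proposition literally claims but leaves open the possibility that the optimum sits at $\alpha=0$.
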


\begin{proof}
We prove Claim \ref{lemma:prop of fun phi2} first.

\textit{\textbf{Proof of Claim \ref{lemma:prop of fun phi2}}} We now consider the asymptotic properties of $\frac{\partial \bar{\phi}_{1}(\alpha)}{\partial \alpha}$. Note that
\begin{equation*}
\begin{aligned}
\frac{\partial \bar{\phi}_{1}(\alpha)}{\partial \alpha}&=\eta+\log\mathbb{E}[Ze^{-\frac{Y}{\alpha}}]+\frac{\mathbb{E}\big[\frac{ZY}{\alpha}e^{-\frac{Y}{\alpha}}\big]}{\mathbb{E}[Ze^{-\frac{Y}{\alpha}}]}.
\end{aligned}
\end{equation*}
When $\alpha\rightarrow \infty$, since $Y$ is bounded, $\log\mathbb{E}[Ze^{-\frac{Y}{\alpha}}]\rightarrow \log\mathbb{E}[Z]$. Further, we can also show that $\frac{\mathbb{E}\big[\frac{ZY}{\alpha}e^{-\frac{Y}{\alpha}}\big]}{\mathbb{E}[Ze^{-\frac{Y}{\alpha}}]}\rightarrow 0$. Hence, $\frac{\partial \bar{\phi}_{1}(\alpha)}{\partial \alpha}>0$. We then study the case when $\alpha\rightarrow 0$. First, denote $f_{Y}(\cdot)$ be the density of variable $Y$. Since $0 \leq Y\leq M$ and it is a continuous variable, $f_{Y}(\cdot)$ is continuous on a compact interval $[0,M]$ such that $\underset{x\in[0,M]}{\max}f_{Y}(x)$ and $\underset{x\in[0,M]}{\min}f_{Y}(x)$ are finite. Denote $\bar{b}=\underset{x\in[0,M]}{\max}f_{Y}(x)$ and $\underline{b}=\underset{x\in[0,M]}{\min}f_{Y}(x)$. We have
\begin{equation*}
\begin{aligned}
&\mathbb{E}\big[Z e^{-\frac{Y}{\alpha}}\big]\lesssim\mathbb{E}\big[e^{-\frac{Y}{\alpha}}\big]=\int_{0}^{M} e^{-\frac{y}{\alpha}}f(y)dy\leq \bar{b}\int_{0}^{M} e^{-\frac{y}{\alpha}}dy\leq \bar{b}\int_{0}^{\infty} e^{-\frac{y}{\alpha}}dy=\bar{b}\alpha\\
\Rightarrow &\log\mathbb{E}\big[Z e^{-\frac{Y}{\alpha}}\big]\lesssim\log\bar{b}\alpha\Rightarrow \underset{\alpha\rightarrow 0}{\lim\sup}\log\mathbb{E}\big[Ze^{-\frac{Y}{\alpha}}\big]=-\infty.
\end{aligned}
\end{equation*}
Second, since $Z=\frac{K_{h}(\pi(X)-A)}{f_{0}(A|X)}$ and $\partial_{a}^{i}f_{0}(y|a,X)$ is bounded uniformly for any $(y,a,X)$ where $0\leq i\leq N+1$ (i.e., $\underline{b}^{i}\leq \underset{y,a,X}{\inf}|\partial_{a}^{i}f_{0}(y|a,X)|\leq \partial_{a}^{i}f_{0}(y|a,X)\leq \underset{y,a,X}{\sup}|\partial_{a}^{i}f_{0}(y|a,X)|\leq \bar{b}^{i}$), we have
\begin{equation*}
\begin{aligned}
\mathbb{E}\bigg[\frac{ZY}{\alpha}e^{-\frac{Y}{\alpha}}\bigg]&=\underset{i=0}{\overset{N}{\sum}}\frac{\mathbb{E}\bigg[\int_{0}^{M}\frac{y}{\alpha}e^{-\frac{y}{\alpha}}\partial_{a}^{i}f_{0}(y|\pi(X),X)dy\bigg]}{i!}\bigg(\int_{\mathbb{R}} u^{i}K(u)du\bigg)h^{i}\\
&\quad+\frac{\mathbb{E}\bigg[\int_{0}^{M}\frac{y}{\alpha}e^{-\frac{y}{\alpha}}\partial_{a}^{N+1}f_{0}(y|\theta(X),X)dy\bigg]}{(N+1)!}\bigg(\int_{\mathbb{R}} u^{N+1}K(u)du\bigg)h^{N+1}\\
&\quad\leq\underset{i=0}{\overset{N+1}{\sum}}\frac{1}{i!}\bigg(\int_{0}^{M}\frac{y}{\alpha}e^{-\frac{y}{\alpha}}\bar{b}^{i}dy\bigg)\bigg(\int_{\mathbb{R}} u^{i}K(u)du\bigg)h^{i}\\
&\quad\leq\underset{i=0}{\overset{N+1}{\sum}}\frac{\bar{b}^{i}\bigg(\int_{\mathbb{R}} u^{i}K(u)du\bigg)h^{i}}{i!}\bigg(\int_{0}^{\infty}\frac{y}{\alpha}e^{-\frac{y}{\alpha}}dy\bigg)\leq\underset{i=0}{\overset{N+1}{\sum}}\frac{\bar{b}^{i}\bigg(\int_{\mathbb{R}} u^{i}K(u)du\bigg)h^{i}}{i!}\alpha.
\end{aligned}
\end{equation*}
Similarly, we can write
\begin{equation*}
\begin{aligned}
&\mathbb{E}\bigg[Ze^{-\frac{Y}{\alpha}}\bigg]=\underset{i=0}{\overset{N+1}{\sum}}\frac{1}{i!}\mathbb{E}\bigg[\int_{0}^{M}e^{-\frac{y}{\alpha}}\partial_{a}^{i}f_{0}(y|\tilde{\theta}(X),X)dy\bigg]\bigg(\int_{\mathbb{R}} u^{i}K(u)du\bigg)h^{i},
\end{aligned}
\end{equation*}
where $\tilde{\theta}(X)=\begin{cases}
\pi(X)&\text{if $0\leq i\leq N$}\\
\theta(X)&\text{if $i=N+1$}
\end{cases}$. Finally, we have
\begin{equation*}
\begin{aligned}
\frac{\mathbb{E}\big[\frac{ZY}{\alpha}e^{-\frac{Y}{\alpha}}\big]}{\mathbb{E}\big[Ze^{-\frac{Y}{\alpha}}\big]}&\leq\frac{\underset{i=0}{\overset{N+1}{\sum}}\frac{\bar{b}^{i}\bigg(\int_{\mathbb{R}} u^{i}K(u)du\bigg)h^{i}}{i!}\alpha}{\underset{i=0}{\overset{N+1}{\sum}}\frac{\bigg(\int_{\mathbb{R}} u^{i}K(u)du\bigg)h^{i}}{i!}\mathbb{E}\big[\int_{0}^{M}\frac{y}{\alpha}e^{-\frac{y}{\alpha}}\partial_{a}^{i}f_{0}(y|\tilde{\theta}(X),X)dy\big]}\\
&=\frac{\underset{i=0}{\overset{N+1}{\sum}}\frac{\bar{b}^{i}\bigg(\int_{\mathbb{R}} u^{i}K(u)du\bigg)h^{i}}{i!}}{\underset{i=0}{\overset{N+1}{\sum}}\frac{\bigg(\int_{\mathbb{R}} u^{i}K(u)du\bigg)h^{i}}{i!}\mathbb{E}\big[\int_{0}^{\infty}\mathbf{1}_{\{z\leq\frac{M}{\alpha}\}}ze^{-z}\partial_{a}^{i}f_{0}(z\alpha|\tilde{\theta}(X),X)dz\big]}.
\end{aligned}
\end{equation*}
Note that given $X$, $\mathbf{1}_{\{z\leq\frac{M}{\alpha}\}}ze^{-z}\partial_{a}^{i}f_{0}(z\alpha|\tilde{\theta}(X),X)\rightarrow ze^{-z}\partial_{a}^{i}f_{0}(0|\tilde{\theta}(X),X)$ when $\alpha\rightarrow 0$. Since 
\begin{equation*}
\begin{aligned}
\int_{0}^{\infty} ze^{-z}\partial_{a}^{i}f_{0}(0|\tilde{\theta}(X),X)dz=\partial_{a}^{i}f_{0}(0|\tilde{\theta}(X),X),
\end{aligned}
\end{equation*}
we can conclude that 
\begin{equation*}
\begin{aligned}
\underset{\alpha\rightarrow 0}{\lim}\mathbb{E}\big[\int_{0}^{\infty}\mathbf{1}_{\{z\leq\frac{M}{\alpha}\}}ze^{-z}\partial_{a}^{i}f_{0}(z\alpha|\tilde{\theta}(X),X)dz\big]\text{ is finite}
\end{aligned}
\end{equation*}
using the Lebesgue convergence theorem. Denote 
\begin{equation*}
\begin{aligned}
\underset{\alpha\rightarrow 0}{\lim}\mathbb{E}\big[\int_{0}^{\infty}\mathbf{1}_{\{z\leq\frac{M}{\alpha}\}}ze^{-z}\partial_{a}^{i}f_{0}(z\alpha|\tilde{\theta}(X),X)dz\big]=c_{i}.
\end{aligned}
\end{equation*}
Thus, we have
\begin{equation*}
\begin{aligned}
&\underset{\alpha\rightarrow 0}{\lim}\frac{\mathbb{E}\big[\frac{ZY}{\alpha}e^{-\frac{Y}{\alpha}}\big]}{\mathbb{E}\big[Ze^{-\frac{Y}{\alpha}}\big]}\leq\underset{\alpha\geq 0}{\lim\sup}\frac{\mathbb{E}\big[\frac{ZY}{\alpha}e^{-\frac{Y}{\alpha}}\big]}{\mathbb{E}\big[Ze^{-\frac{Y}{\alpha}}\big]}\\
\leq&\underset{\alpha\geq 0}{\lim\sup}\frac{\underset{i=0}{\overset{N+1}{\sum}}\frac{\bar{b}^{i}\bigg(\int_{\mathbb{R}} u^{i}K(u)du\bigg)h^{i}}{i!}\alpha}{\underset{i=0}{\overset{N+1}{\sum}}\frac{\bigg(\int_{\mathbb{R}} u^{i}K(u)du\bigg)h^{i}}{i!}\mathbb{E}\big[\int_{0}^{M}\frac{y}{\alpha}e^{-\frac{y}{\alpha}}\partial_{a}^{i}f_{0}(y|\tilde{\theta}(X),X)dy\big]}
\rightarrow\frac{\underset{i=0}{\overset{N+1}{\sum}}\frac{\bar{b}^{i}\bigg(\int_{\mathbb{R}} u^{i}K(u)du\bigg)h^{i}}{i!}}{\underset{i=0}{\overset{N+1}{\sum}}\frac{1}{i!}c_{i}\bigg(\int_{\mathbb{R}} u^{i}K(u)du\bigg)h^{i}},
\end{aligned}
\end{equation*}
which is a finite quantity. As such, we can conclude that $\underset{\alpha\rightarrow 0}{\lim}\frac{\partial\bar{\phi}_{1}(\alpha)}{\partial\alpha}\leq \underset{\alpha\rightarrow 0}{\lim\sup}\frac{\partial\bar{\phi}_{1}(\alpha)}{\partial\alpha}\rightarrow-\infty$. 

\noindent From the above analysis, we know that $\bar{\phi}_{1}(\alpha)$ is a strictly convex function such that it decreases first and then increases. The optimal point is finite.

\textbf{\textit{Proof of Claim \ref{lemma:prop of fun phi}}} The arguments are almost the same when we replace $Z$ with 1 when presenting the proof of Claim \ref{lemma:prop of fun phi2}. The only difference is bounding the quantity $\frac{\mathbb{E}[\frac{Y}{\alpha}e^{-\frac{Y}{\alpha}}]}{\mathbb{E}[e^{-\frac{Y}{\alpha}}]}$:
\begin{equation*}
\begin{aligned}
&\frac{\mathbb{E}\big[\frac{Y}{\alpha}e^{-\frac{Y}{\alpha}}\big]}{\mathbb{E}[e^{-\frac{Y}{\alpha}}]}=\frac{\int_{0}^{M}\frac{y}{\alpha}e^{-\frac{y}{\alpha}}f_{Y}(y)dy}{\int_{0}^{M}e^{-\frac{y}{\alpha}}f_{Y}(y)dy}\leq \frac{\bar{b}\int_{0}^{M}\frac{y}{\alpha}e^{-\frac{y}{\alpha}}dy}{\underline{b}\int_{0}^{M}e^{-\frac{y}{\alpha}}dy}=\frac{\bar{b}\alpha\int_{0}^{\frac{M}{\alpha}}ze^{-z}dz}{\underline{b}\alpha (1-e^{-\frac{M}{\alpha}})}\leq\frac{\bar{b}}{\underline{b}(1-e^{-\frac{M}{\alpha}})}.
\end{aligned}
\end{equation*}
Hence, we conclude that $\underset{\alpha\rightarrow 0}{\lim\sup}\frac{\mathbb{E}\big[\frac{Y}{\alpha}e^{-\frac{Y}{\alpha}}\big]}{\mathbb{E}[e^{-\frac{Y}{\alpha}}]}\leq\underset{\alpha\rightarrow 0}{\lim\sup}\frac{\bar{b}}{\underline{b}(1-e^{-\frac{M}{\alpha}})}=\frac{\bar{b}}{\underline{b}}$. We therefore conclude that $\underset{\alpha\rightarrow 0}{\lim}\frac{\partial\tilde{\phi}_{1}(\alpha)}{\partial\alpha}\leq \underset{\alpha\rightarrow 0}{\lim\sup}\frac{\partial\tilde{\phi}_{1}(\alpha)}{\partial\alpha}\rightarrow-\infty$.

\noindent From the above analysis, we know that $\tilde{\phi}_{1}(\alpha)$ is a strictly convex function such that it decreases first and then increases. Thus, the optimal solution of $\tilde{\phi}_{1}(\alpha)$ is finite.
\end{proof}

From Proposition \ref{lemma:finite optimal study}, we conclude that the optimal solutions of $Q_{\mathrm{DRO}}(\pi)$ and $Q_{\mathrm{DRO}}^{h}(\pi)$ are finite. This result is useful in proving Theorem \ref{lemma:asy_res_normalized and thm:asy result normalized}.
\paragraph{Auxiliary Result 3 - Uniform boundedness of two probability measures}\label{sec:auxiliary result 3}\ \\
\begin{proposition}\label{lemma:QDRO and quantile}
For any probability measures $\mathbb{P}_{1}$ and $\mathbb{P}_{2}$ on the continuous variable $Y$, we have
\begin{equation*}
\begin{aligned}
&\bigg|\underset{\alpha\geq 0}{\sup}\{-\alpha\log\mathbb{E}_{\mathbb{P}_{1}}[e^{-\frac{Y}{\alpha}}]-\alpha\eta\}-\underset{\alpha\geq 0}{\sup}\{-\alpha\log\mathbb{E}_{\mathbb{P}_{2}}[e^{-\frac{Y}{\alpha}}]-\alpha\eta\}\bigg|\\
&\leq \underset{\alpha\geq 0}{\sup}\alpha\bigg|\log\mathbb{E}_{\mathbb{P}_{1}}[e^{-\frac{Y}{\alpha}}]-\log\mathbb{E}_{\mathbb{P}_{2}}[e^{-\frac{Y}{\alpha}}]\bigg|\leq \underset{t\in[0,1]}{\sup}|\mathcal{Q}_{\mathbb{P}_{1}}(t)-\mathcal{Q}_{\mathbb{P}_{2}}(t)|.
\end{aligned}
\end{equation*}
Here, $\mathcal{Q}_{\mathbb{P}}(t)=\underset{}{\inf}\{x\in\mathbb{R}:F_{\mathbb{P}}(x)\geq t\}$ is the $t$-quantile of the probability measure $\mathbb{P}$ where $F_{\mathbb{P}}(x)$ is the cumulative density function (CDF) of the probability measure $\mathbb{P}$. Additionally, suppose that one of the probability measure (e.g., $\mathbb{P}_{1}$) has a probability density $f_{\mathbb{P}_{1}}(\cdot)$ which is bounded below by a constant $c>0$. The we have 
\begin{equation*}
\begin{aligned}
\underset{t\in[0,1]}{\sup}|\mathcal{Q}_{\mathbb{P}_{1}}(t)-\mathcal{Q}_{\mathbb{P}_{2}}(t)|\leq\frac{1}{c}\underset{x\in[0,M]}{\sup}|F_{\mathbb{P}_{1}}(x)-F_{\mathbb{P}_{2}}(x)|.
\end{aligned}
\end{equation*}
\end{proposition}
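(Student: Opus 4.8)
The plan is to break the asserted chain of inequalities into three links and dispatch each with an elementary estimate, using the quantile coupling as the bridge between the exponential–moment functionals and the cumulative distribution functions.

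For the first link I would write $\psi_j(\alpha) := -\alpha\log\mathbb{E}_{\mathbb{P}_j}[e^{-Y/\alpha}] - \alpha\eta$ and note that the penalty $\alpha\eta$ cancels in $\psi_1(\alpha) - \psi_2(\alpha)$, leaving $-\alpha\big(\log\mathbb{E}_{\mathbb{P}_1}[e^{-Y/\alpha}] - \log\mathbb{E}_{\mathbb{P}_2}[e^{-Y/\alpha}]\big)$; the elementary bound $|\sup_\alpha f - \sup_\alpha g| \le \sup_\alpha |f - g|$ then yields the first inequality at once. The boundary value $\alpha = 0$ is handled by the usual convention $-\alpha\log\mathbb{E}[e^{-Y/\alpha}] \to \essinf Y$, under which everything below stays continuous, so it suffices to work with $\alpha > 0$.

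For the second link the key is the representation $\mathbb{E}_{\mathbb{P}_j}[e^{-Y/\alpha}] = \int_0^1 e^{-\mathcal{Q}_{\mathbb{P}_j}(t)/\alpha}\,dt$, valid for any law since $\mathcal{Q}_{\mathbb{P}_j}(U) \sim \mathbb{P}_j$ when $U \sim \mathrm{Unif}(0,1)$. Setting $\Delta := \sup_{t\in[0,1]}|\mathcal{Q}_{\mathbb{P}_1}(t) - \mathcal{Q}_{\mathbb{P}_2}(t)|$, the pointwise bound $\mathcal{Q}_{\mathbb{P}_2}(t) \ge \mathcal{Q}_{\mathbb{P}_1}(t) - \Delta$ gives $\int_0^1 e^{-\mathcal{Q}_{\mathbb{P}_2}(t)/\alpha}\,dt \le e^{\Delta/\alpha}\int_0^1 e^{-\mathcal{Q}_{\mathbb{P}_1}(t)/\alpha}\,dt$; taking logs, rearranging, and applying the symmetric inequality then produces $\big|\log\mathbb{E}_{\mathbb{P}_1}[e^{-Y/\alpha}] - \log\mathbb{E}_{\mathbb{P}_2}[e^{-Y/\alpha}]\big| \le \Delta/\alpha$ for each $\alpha > 0$; multiplying by $\alpha$ and taking the supremum finishes this link.

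For the third link I would use the density lower bound: since $f_{\mathbb{P}_1} \ge c$ on $[0,M]$, $F_{\mathbb{P}_1}$ satisfies $F_{\mathbb{P}_1}(y) - F_{\mathbb{P}_1}(x) \ge c(y - x)$ for $x \le y$ in $[0,M]$, so it is a strictly increasing bijection of $[0,M]$ onto $[0,1]$ whose inverse $\mathcal{Q}_{\mathbb{P}_1}$ is $\tfrac1c$-Lipschitz and satisfies $\mathcal{Q}_{\mathbb{P}_1}(F_{\mathbb{P}_1}(x)) = x$ on $[0,M]$. Fixing $t \in (0,1)$ and putting $x_t := \mathcal{Q}_{\mathbb{P}_2}(t) \in [0,M]$ (the inclusion holds because $Y$ is supported in $[0,M]$), non-atomicity of $\mathbb{P}_2$ gives $F_{\mathbb{P}_2}(x_t) = t$, hence with $D := \sup_{x\in[0,M]}|F_{\mathbb{P}_1}(x) - F_{\mathbb{P}_2}(x)|$,
\[
|\mathcal{Q}_{\mathbb{P}_2}(t) - \mathcal{Q}_{\mathbb{P}_1}(t)| = \big|\mathcal{Q}_{\mathbb{P}_1}(F_{\mathbb{P}_1}(x_t)) - \mathcal{Q}_{\mathbb{P}_1}(t)\big| \le \tfrac1c\big|F_{\mathbb{P}_1}(x_t) - F_{\mathbb{P}_2}(x_t)\big| \le \tfrac{D}{c},
\]
and taking the supremum over $t$ gives the last inequality. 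The sup-of-differences bound and the quantile change of variables are routine; the part I expect to require genuine care is this third link, where the argument leans on $f_{\mathbb{P}_1} \ge c$ to make $F_{\mathbb{P}_1}$ a bi-Lipschitz homeomorphism of $[0,M]$ onto $[0,1]$ (so $\mathcal{Q}_{\mathbb{P}_1}\circ F_{\mathbb{P}_1} = \mathrm{id}$) while on the $\mathbb{P}_2$ side only non-atomicity — not a density bound — is available (so $F_{\mathbb{P}_2}\circ\mathcal{Q}_{\mathbb{P}_2} = \mathrm{id}$ on $(0,1)$); correctly matching these two one-sided facts and checking the endpoints $t \in \{0,1\}$ is the delicate point.
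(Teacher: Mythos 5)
Your first two links coincide with the paper's argument: the bound $|\sup_\alpha f-\sup_\alpha g|\le\sup_\alpha|f-g|$ for the first inequality, and the quantile coupling $\mathbb{E}_{\mathbb{P}_j}[e^{-Y/\alpha}]=\int_0^1 e^{-\mathcal{Q}_{\mathbb{P}_j}(t)/\alpha}\,dt$ with the two-sided bound $e^{\pm\Delta/\alpha}$ for the second. Both are carried out correctly and in essentially the same way as the paper.

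The third link is where you diverge, and where there is a genuine gap. Your chain $|\mathcal{Q}_{\mathbb{P}_2}(t)-\mathcal{Q}_{\mathbb{P}_1}(t)|=|\mathcal{Q}_{\mathbb{P}_1}(F_{\mathbb{P}_1}(x_t))-\mathcal{Q}_{\mathbb{P}_1}(t)|\le\frac1c|F_{\mathbb{P}_1}(x_t)-t|$ needs the identity $F_{\mathbb{P}_2}(x_t)=t$ to convert $|F_{\mathbb{P}_1}(x_t)-t|$ into $|F_{\mathbb{P}_1}(x_t)-F_{\mathbb{P}_2}(x_t)|\le D$, and you obtain that identity by assuming $\mathbb{P}_2$ is non-atomic. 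The paper deliberately does \emph{not} assume this: its proof only uses right-continuity of $F_{\mathbb{P}_2}$, i.e. $F_{\mathbb{P}_2}(x_2-)\le t\le F_{\mathbb{P}_2}(x_2)$, and handles the two cases $x_1\ge x_2$ and $x_1<x_2$ separately (the latter via a sequence $x^{(n)}\uparrow x_2$). This generality is not cosmetic: in the proof of Theorem 3 the proposition is invoked with $\mathbb{P}_2=\mathbb{P}_N^h$, a weighted empirical measure, which is purely atomic, so $F_{\mathbb{P}_2}(\mathcal{Q}_{\mathbb{P}_2}(t))=t$ fails there and your argument as written does not cover the case the paper actually needs.

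The fix is short and keeps your Lipschitz-inverse structure: from $F_{\mathbb{P}_2}(x_t-)\le t\le F_{\mathbb{P}_2}(x_t)$ one still gets $|F_{\mathbb{P}_1}(x_t)-t|\le D$, since if $F_{\mathbb{P}_1}(x_t)<t$ then $t-F_{\mathbb{P}_1}(x_t)\le F_{\mathbb{P}_2}(x_t)-F_{\mathbb{P}_1}(x_t)\le D$, while if $F_{\mathbb{P}_1}(x_t)\ge t$ then $F_{\mathbb{P}_1}(x_t)-t\le F_{\mathbb{P}_1}(x_t)-F_{\mathbb{P}_2}(x_t-)=\lim_{y\uparrow x_t}\big(F_{\mathbb{P}_1}(x_t)-F_{\mathbb{P}_1}(y)+F_{\mathbb{P}_1}(y)-F_{\mathbb{P}_2}(y)\big)\le D$ by continuity of $F_{\mathbb{P}_1}$. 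With that patch your route is actually cleaner than the paper's case analysis; without it, the claim is proved only for atomless $\mathbb{P}_2$. Your flagged concern about the endpoint $t=0$ (where $\mathcal{Q}_{\mathbb{P}}(0)=-\infty$ under the stated definition) is shared by the paper's own proof and is a convention issue rather than a substantive one.
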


\begin{proof}
The proof can be found in \cite{si2023distributionally}. We restate here for completeness. Note that $|\underset{x}{\sup}f_{1}(x)-\underset{x}{\sup}f_{2}(x)|\leq\underset{x}{\sup}|f_{1}(x)-f_{2}(x)|$. Recall that, given a variable $X$ with CDF $F_{\mathbb{P}}(\cdot)$ and a quantile function $\mathcal{Q}_{\mathbb{P}}(t)=\inf\{x:F_{\mathbb{P}}(x)\geq t\}$, we have $\mathcal{Q}_{\mathbb{P}}(U) \overset{d}{=}X$ under $\mathbb{P}$ where $U$ is a uniform random variable under measure $\mathbb{P}$. Hence, we have 
\begin{equation*}
\begin{aligned}
&\bigg|\underset{\alpha\geq 0}{\sup}\{-\alpha\log\mathbb{E}_{\mathbb{P}_{1}}[e^{-\frac{Y}{\alpha}}]-\alpha\eta\}-\underset{\alpha\geq 0}{\sup}\{-\alpha\log\mathbb{E}_{\mathbb{P}_{2}}[e^{-\frac{Y}{\alpha}}]-\alpha\eta\}\bigg|\\
\leq&\underset{\alpha\geq 0}{\sup}\bigg|\alpha\log\mathbb{E}_{\mathbb{P}_{1}}[e^{-\frac{Y}{\alpha}}]-\alpha\log\mathbb{E}_{\mathbb{P}_{2}}[e^{-\frac{Y}{\alpha}}]\bigg|=\underset{\alpha\geq 0}{\sup}\bigg|\alpha\log\mathbb{E}_{\mathbb{P}}\bigg[e^{-\frac{\mathcal{Q}_{\mathbb{P}_{1}}(U)}{\alpha}}\bigg]-\alpha\log\mathbb{E}_{\mathbb{P}}\bigg[e^{-\frac{\mathcal{Q}_{\mathbb{P}_{2}}(U)}{\alpha}}\bigg]\bigg|\\
=&\underset{\alpha\geq 0}{\sup}\;\alpha\bigg|\log\Bigg(\int_{u\in[0,1]} e^{-\frac{\mathcal{Q}_{\mathbb{P}_{1}}(u)}{\alpha}}du\Bigg)-\log\Bigg(\int_{u\in[0,1]} e^{-\frac{\mathcal{Q}_{\mathbb{P}_{2}}(u)}{\alpha}}du\Bigg)\bigg|.
\end{aligned}
\end{equation*}
Consider the term $\log\Bigg(\int_{u\in[0,1]} e^{-\frac{\mathcal{Q}_{\mathbb{P}_{1}}(u)}{\alpha}}du\Bigg)-\log\Bigg(\int_{u\in[0,1]} e^{-\frac{\mathcal{Q}_{\mathbb{P}_{2}}(u)}{\alpha}}du\Bigg)$. Since
{\small
\begin{equation*}
\begin{aligned}
&\log\Bigg(\int_{u\in[0,1]} e^{-\frac{\mathcal{Q}_{\mathbb{P}_{1}}(u)}{\alpha}}du\Bigg)-\log\Bigg(\int_{u\in[0,1]} e^{-\frac{\mathcal{Q}_{\mathbb{P}_{2}}(u)}{\alpha}}du\Bigg)\\
\leq&\log\Bigg(\int_{u\in[0,1]} e^{\frac{\underset{u\in[0,1]}{\sup}|\mathcal{Q}_{\mathbb{P}_{1}}(u)-\mathcal{Q}_{\mathbb{P}_{2}}(u)|}{\alpha}}e^{-\frac{\mathcal{Q}_{\mathbb{P}_{2}}(u)}{\alpha}}du\Bigg)-\log\Bigg(\int_{u\in[0,1]} e^{-\frac{\mathcal{Q}_{\mathbb{P}_{2}}(u)}{\alpha}}du\Bigg)=\frac{\underset{u\in[0,1]}{\sup}|\mathcal{Q}_{\mathbb{P}_{1}}(u)-\mathcal{Q}_{\mathbb{P}_{2}}(u)|}{\alpha}
\end{aligned}
\end{equation*}
}\noindent
and
{\small
\begin{equation*}
\begin{aligned}
&\log\Bigg(\int_{u\in[0,1]} e^{-\frac{\mathcal{Q}_{\mathbb{P}_{1}}(u)}{\alpha}}du\Bigg)-\log\Bigg(\int_{u\in[0,1]} e^{-\frac{\mathcal{Q}_{\mathbb{P}_{2}}(u)}{\alpha}}du\Bigg)\\
\geq&\log\Bigg(\int_{u\in[0,1]} e^{-\frac{\mathcal{Q}_{\mathbb{P}_{1}}(u)}{\alpha}}du\Bigg)-\log\Bigg(\int_{u\in[0,1]} e^{\frac{\underset{u\in[0,1]}{\sup}|\mathcal{Q}_{\mathbb{P}_{1}}(u)-\mathcal{Q}_{\mathbb{P}_{2}}(u)|}{\alpha}}e^{-\frac{\mathcal{Q}_{\mathbb{P}_{1}}(u)}{\alpha}}du\Bigg)=-\frac{\underset{u\in[0,1]}{\sup}|\mathcal{Q}_{\mathbb{P}_{1}}(u)-\mathcal{Q}_{\mathbb{P}_{2}}(u)|}{\alpha},
\end{aligned}
\end{equation*}
}\noindent
we have
\begin{equation*}
\begin{aligned}
&\bigg|\underset{\alpha\geq 0}{\sup}\{-\alpha\log\mathbb{E}_{\mathbb{P}_{1}}[e^{-\frac{Y}{\alpha}}]-\alpha\eta\}-\underset{\alpha\geq 0}{\sup}\{-\alpha\log\mathbb{E}_{\mathbb{P}_{2}}[e^{-\frac{Y}{\alpha}}]-\alpha\eta\}\bigg|\\
\leq&\underset{\alpha\geq 0}{\sup}\;\alpha\bigg|\log\Bigg(\int_{u\in[0,1]} e^{-\frac{\mathcal{Q}_{\mathbb{P}_{1}}(u)}{\alpha}}du\Bigg)-\log\Bigg(\int_{u\in[0,1]} e^{-\frac{\mathcal{Q}_{\mathbb{P}_{2}}(u)}{\alpha}}du\Bigg)\bigg|\\
\leq&\underset{\alpha\geq 0}{\sup}\;\alpha \frac{\underset{u\in[0,1]}{\sup}|\mathcal{Q}_{\mathbb{P}_{1}}(u)-\mathcal{Q}_{\mathbb{P}_{2}}(u)|}{\alpha}=\underset{\alpha\geq 0}{\sup}\;\underset{u\in[0,1]}{\sup}|\mathcal{Q}_{\mathbb{P}_{1}}(u)-\mathcal{Q}_{\mathbb{P}_{2}}(u)|=\underset{u\in[0,1]}{\sup}|\mathcal{Q}_{\mathbb{P}_{1}}(u)-\mathcal{Q}_{\mathbb{P}_{2}}(u)|.
\end{aligned}
\end{equation*}
Next, we further bound $\underset{u\in[0,1]}{\sup}|\mathcal{Q}_{\mathbb{P}_{1}}(u)-\mathcal{Q}_{\mathbb{P}_{2}}(u)|$. Write $x_{1}=\mathcal{Q}_{\mathbb{P}_{1}}(t)$ and $x_{2}=\mathcal{Q}_{\mathbb{P}_{2}}(t)$. Since the measure $\mathbb{P}_{1}$ is continuous, we have $F_{\mathbb{P}_{1}}(x_{1})=t$ where $F_{\mathbb{P}_{1}}(\cdot)$ is the CDF of measure $\mathbb{P}_{1}$. Simultaneously, we do not impose the continuity of measure $\mathbb{P}_{2}$, but the CDF of measure $\mathbb{P}_{2}$ (denoted as $F_{\mathbb{P}_{2}}(\cdot)$) must be right continuous with left limit. As a result, we have $F_{\mathbb{P}_{2}}(x_{2})\geq t$ and $F_{\mathbb{P}_{2}}(x_{2}-)\leq t$. We now consider two cases: 1) $x_{1}\geq x_{2}$ and 2) $x_{1}< x_{2}$.

\noindent For the case $x_{1}\geq x_{2}$, by the mean value theorem, we have
\begin{align*}
&\begin{aligned}
&F_{\mathbb{P}_{1}}(x_{1})-F_{\mathbb{P}_{1}}(x_{2})=F_{\mathbb{P}_{1}}^{'}(\theta)(x_{1}-x_{2})\\
&=f_{\mathbb{P}_{1}}(\theta)(x_{1}-x_{2}) \quad \text{Here, $\theta$ lie in between $F_{\mathbb{P}_{1}}^{-1}(t)$ and $F_{\mathbb{P}_{2}}^{-1}(t)$}
\end{aligned}\nonumber\\
&\begin{aligned}
\Rightarrow (x_{1}-x_{2})&=\frac{1}{f_{\mathbb{P}_{1}}(\theta)} (F_{\mathbb{P}_{1}}(x_{1})-F_{\mathbb{P}_{1}}(x_{2}))\\
\Rightarrow (x_{1}-x_{2})&\leq\frac{1}{c} (F_{\mathbb{P}_{1}}(x_{1})-F_{\mathbb{P}_{1}}(x_{2}))\\
&=\frac{1}{c} (F_{\mathbb{P}_{1}}(x_{1})-F_{\mathbb{P}_{2}}(x_{2})+F_{\mathbb{P}_{2}}(x_{2})-F_{\mathbb{P}_{1}}(x_{2}))\\
&\leq\frac{1}{c} (t-t+F_{\mathbb{P}_{2}}(x_{2})-F_{\mathbb{P}_{1}}(x_{2}))\\
&\leq\frac{1}{c} \;\underset{x\in[0,M]}{\sup}|F_{\mathbb{P}_{2}}(x)-F_{\mathbb{P}_{1}}(x)|.
\end{aligned}
\end{align*}
For the case $x_{1}<x_{2}$, let $(x^{(n)})_{n=1}^{\infty}$ be a sequence such that $x^{(n)}\uparrow x_{2}$. For any $\epsilon>0$, we have
\begin{equation*}
\begin{aligned}
|F_{\mathbb{P}_{1}}(x^{(n)})-F_{\mathbb{P}_{1}}(x_{2})|\leq\epsilon\quad\forall\;n\geq N.
\end{aligned}
\end{equation*}
Particularly, choosing $n=N$ gives $|F_{\mathbb{P}_{1}}(x^{(N)})-F_{\mathbb{P}_{1}}(x_{2})|\leq\epsilon$. Besides, we have $F_{\mathbb{P}_{2}}(x^{(N)})\leq t=F_{\mathbb{P}_{1}}(x_{1})$. Consequently, we have
\begin{align*}
&\begin{aligned}
x_{2}-x_{1}&\leq\frac{1}{c}(F_{\mathbb{P}_{1}}(x_{2})-F_{\mathbb{P}_{1}}(x_{1}))\\
&=\frac{1}{c}\left(F_{\mathbb{P}_{1}}(x_{2})-F_{\mathbb{P}_{1}}(x^{(N)})+F_{\mathbb{P}_{1}}(x^{(N)})-F_{\mathbb{P}_{2}}(x^{(N)})+F_{\mathbb{P}_{2}}(x^{(N)})-F_{\mathbb{P}_{1}}(x_{1})\right)
\end{aligned}\\
&\begin{aligned}
\Rightarrow x_{2}-x_{1}&\leq\frac{(|F_{\mathbb{P}_{1}}(x_{2})-F_{\mathbb{P}_{1}}(x^{(N)})|+|F_{\mathbb{P}_{1}}(x^{(N)})-F_{\mathbb{P}_{2}}(x^{(N)})|+0)}{c}\\
&\leq\frac{1}{c}(\epsilon+\underset{x\in[0,M]}{\sup}|F_{\mathbb{P}_{1}}(x)-F_{\mathbb{P}_{2}}(x)|).
\end{aligned}
\end{align*}
Since $\epsilon$ is arbitrary, we conclude that $x_{2}-x_{1}\leq \frac{1}{c}\underset{x\in[0,M]}{\sup}|F_{\mathbb{P}_{1}}(x)-F_{\mathbb{P}_{2}}(x)|$. Combining the two cases, we can conclude that
\begin{equation*}
\begin{aligned}
&|\mathcal{Q}_{\mathbb{P}_{1}}(u)-\mathcal{Q}_{\mathbb{P}_{2}}(u)|=|x_{2}-x_{1}|\leq \frac{1}{c}\underset{x\in[0,M]}{\sup}|F_{\mathbb{P}_{1}}(x)-F_{\mathbb{P}_{2}}(x)|\\
&\Rightarrow \underset{u\in[0,1]}{\sup}|\mathcal{Q}_{\mathbb{P}_{1}}(u)-\mathcal{Q}_{\mathbb{P}_{2}}(u)|\leq \frac{1}{c}\underset{x\in[0,M]}{\sup}|F_{\mathbb{P}_{1}}(x)-F_{\mathbb{P}_{2}}(x)|.
\end{aligned}
\end{equation*}
\end{proof}
Proposition \ref{lemma:QDRO and quantile} is useful in proving Theorem \ref{thm:statistical performance normalized}.
\subsection{Proofs of results in the main paper}\label{sec:main paper}
\subsubsection{Proof of Lemma \ref{lemma:expectation equal IPW}}\label{sec:Lemma main}
\begin{proof}
Consider the quantity 
$\mathbb{E}\Bigg[\frac{e^{\frac{-Y}{\alpha}}\delta(\pi(X)-A)}{f_{0}(A|X)}\Bigg]$.
We have
\begin{equation*}
\begin{aligned}
&\mathbb{E}\Bigg[\frac{e^{\frac{-Y}{\alpha}}\delta(\pi(X)-A)}{f_{0}(A|X)}\Bigg]=\mathbb{E}\Bigg[\mathbb{E}\Bigg[\frac{e^{\frac{-Y}{\alpha}}\delta(\pi(X)-A)}{f_{0}(A|X)}\Bigg|X\Bigg]\Bigg]\\
=&\mathbb{E}\Bigg[\int\mathbb{E}\Bigg[\frac{e^{\frac{-Y}{\alpha}}\delta(\pi(X)-a)}{f_{0}(a|X)}\Bigg|Y=y,A=a,X\Bigg]f_{0}(y,a|X)\;dady\Bigg]=\mathbb{E}\Bigg[\int e^{\frac{-y}{\alpha}}\frac{f_{0}(y,\pi(X)|X)}{f_{0}(\pi(X)|X)}\;dy\Bigg]\\
=&\mathbb{E}\Bigg[\frac{\mathbb{E}\big[e^{\frac{-Y}{\alpha}}|A=\pi(X),X\big]}{f_{0}(\pi(X)|X)}f_{0}(\pi(X)|X)\Bigg]=\mathbb{E}\bigg[\mathbb{E}\bigg[e^{\frac{-Y}{\alpha}}|A=\pi(X),X\bigg]\bigg]=\mathbb{E}\bigg[e^{\frac{-Y(\pi(X))}{\alpha}}\bigg].
\end{aligned}
\end{equation*}
\end{proof}

\subsubsection{Proof of Theorem \ref{lemma:asy_res_normalized and thm:asy result normalized}}\label{sec:Theorem main 1}
We restate the Theorem here:
\begin{theorem*}
Suppose that $N\rightarrow\infty$, $h\rightarrow 0$ such that $Nh\rightarrow\infty$ and $Nh^{5}\rightarrow C\in[0,\infty)$. We have
\begin{equation*}
\begin{aligned}
&\sqrt{Nh}\bigg(\hat{W}_{N}^{h}-\mathbb{E}[e^{-\frac{Y(\pi(X))}{\alpha}}]-B_{\pi}(\alpha)h^{2}\bigg)
\overset{d}{\rightarrow}\mathcal{N}(0,\mathbb{V}_{\pi}(\alpha)),
\end{aligned}
\end{equation*}
where
\begin{equation*}
\begin{aligned}
B_{\pi}(\alpha)&=\frac{\big(\int u^{2}K(u)du\big)}{2}\times \mathbb{E}\bigg[\mathbb{E}\bigg[e^{\frac{-Y}{\alpha}}\frac{\partial_{aa}^{2}f_{0}(Y|\pi(X),X)}{f_{0}(Y|\pi(X),X)}\bigg|A=\pi(X),X\bigg]\bigg]\\
\mathbb{V}_{\pi}(\alpha)&=\bigg(\int K(u)^{2}du\bigg)\times\\
&\;\Bigg\{\mathbb{E}\bigg[\mathbb{E}\bigg[\frac{e^{-\frac{2Y}{\alpha}}}{f_{0}(\pi(X)|X)}\bigg|A=\pi(X),X\bigg]\bigg]\\
&\;+\mathbb{E}\bigg[\frac{1}{f_{0}(\pi(X)|X)}\bigg](\mathbb{E}[e^{-\frac{Y(\pi(X))}{\alpha}}])^{2}-2\mathbb{E}\bigg[\mathbb{E}\bigg[\frac{e^{-\frac{Y}{\alpha}}}{f_{0}(\pi(X)|X)}\bigg|A=\pi(X),X\bigg]\bigg]\mathbb{E}[e^{-\frac{Y(\pi(X))}{\alpha}}]\Bigg\}.
\end{aligned}
\end{equation*}
\end{theorem*}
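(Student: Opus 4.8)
The plan is to reduce the normalized estimator to a self-normalized sum of an i.i.d.\ triangular array, analyze that array, and then remove the normalization by Slutsky. Write $c:=\mathbb{E}[e^{-Y(\pi(X))/\alpha}]$ and observe the exact identity $\bar{W}_{N}^{h}-c\,S_{N}^{h}=\frac{1}{N}\sum_{i=1}^{N}\zeta_{i}$ with $\zeta_{i}:=\frac{K_{h}(\pi(X_{i})-A_{i})}{f_{0}(A_{i}|X_{i})}\big(e^{-Y_{i}/\alpha}-c\big)$, which for each $N$ is a sum of i.i.d.\ terms (the dependence on $N$ enters only through $h=h_{N}$). I would (i) compute the first two moments of $\zeta_{1}$, (ii) establish a central limit theorem for $\frac{1}{N}\sum_{i}(\zeta_{i}-\mathbb{E}\zeta_{i})$, and (iii) transfer the result to $\hat{W}_{N}^{h}$ using $S_{N}^{h}\overset{p}{\rightarrow}1$ from Auxiliary Result~1 (valid here since $Nh\to\infty$ forces $1/(Nh)\to0$).

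For step (i), the standard device is to condition on $X$, substitute $a=\pi(X)-uh$, and Taylor-expand $f_{0}(y\mid a,X)$ in $a$ about $\pi(X)$ to third order, which is legitimate under the assumed uniform three-times differentiability of $f_{0}(y|a,x)$, with $Y\in[0,M]$ keeping all $y$-integrals finite. This should give $\mathbb{E}\big[\tfrac{K_{h}(\pi(X)-A)}{f_{0}(A|X)}e^{-Y/\alpha}\big]=c+B_{\pi}(\alpha)h^{2}+O(h^{3})$: the $h^{1}$ term vanishes by symmetry of $K$, and the $h^{2}$ coefficient becomes exactly $B_{\pi}(\alpha)$ once $\int e^{-y/\alpha}\partial_{aa}^{2}f_{0}(y|\pi(X),X)\,dy$ is rewritten as $\mathbb{E}\big[e^{-Y/\alpha}\tfrac{\partial_{aa}^{2}f_{0}(Y|\pi(X),X)}{f_{0}(Y|\pi(X),X)}\big|A=\pi(X),X\big]$, while the same expansion gives $\mathbb{E}\big[\tfrac{K_{h}(\pi(X)-A)}{f_{0}(A|X)}\big]=1+O(h^{3})$ because $\int\partial_{aa}^{2}f_{0}(y|a,X)\,dy=\partial_{aa}^{2}\!\int f_{0}(y|a,X)\,dy=0$. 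Subtracting, $\mathbb{E}[\zeta_{1}]=B_{\pi}(\alpha)h^{2}+O(h^{3})$. For the variance I would apply the same substitution to $K_{h}^{2}(\cdot)$ to get $h\,\mathbb{E}[\zeta_{1}^{2}]\to(\int K^{2})\,\mathbb{E}\big[\tfrac{1}{f_{0}(\pi(X)|X)}\mathbb{E}[(e^{-Y/\alpha}-c)^{2}\mid A=\pi(X),X]\big]$, and expanding $(e^{-Y/\alpha}-c)^{2}=e^{-2Y/\alpha}-2ce^{-Y/\alpha}+c^{2}$ this limit is precisely $\mathbb{V}_{\pi}(\alpha)$; since $h(\mathbb{E}\zeta_{1})^{2}=O(h^{5})\to0$, we also get $h\,\mathbb{V}(\zeta_{1})\to\mathbb{V}_{\pi}(\alpha)$.

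For step (ii), $\sqrt{Nh}\big(\tfrac{1}{N}\sum_{i}\zeta_{i}-\mathbb{E}\zeta_{1}\big)$ has variance $h\,\mathbb{V}(\zeta_{1})\to\mathbb{V}_{\pi}(\alpha)$, and I would check Lyapunov's condition with third moments: $|\zeta_{i}|\le\frac{M_{K}}{h\epsilon}(1+c)$ together with a further change of variables gives $\mathbb{E}|\zeta_{1}-\mathbb{E}\zeta_{1}|^{3}=O(h^{-2})$, so the Lyapunov ratio is $O\!\big(N h^{-2}/(N h^{-1})^{3/2}\big)=O((Nh)^{-1/2})\to0$. Combining with step (i), $\sqrt{Nh}\big(\bar{W}_{N}^{h}-c\,S_{N}^{h}-B_{\pi}(\alpha)h^{2}\big)\overset{d}{\rightarrow}\mathcal{N}(0,\mathbb{V}_{\pi}(\alpha))$, the bias remainder contributing $\sqrt{Nh}\,O(h^{3})=O(h\sqrt{Nh^{5}})=o(1)$ under $Nh^{5}\to C$. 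Finally, for step (iii), I would write
\[
\hat{W}_{N}^{h}-c-B_{\pi}(\alpha)h^{2}=\frac{\big(\bar{W}_{N}^{h}-c\,S_{N}^{h}-B_{\pi}(\alpha)h^{2}\big)+B_{\pi}(\alpha)h^{2}\big(1-S_{N}^{h}\big)}{S_{N}^{h}},
\]
multiply by $\sqrt{Nh}$, note that $\sqrt{Nh}\,B_{\pi}(\alpha)h^{2}(1-S_{N}^{h})=B_{\pi}(\alpha)\sqrt{Nh^{5}}\,(1-S_{N}^{h})=o_{p}(1)$ (bounded $\sqrt{Nh^{5}}$ times $S_{N}^{h}\overset{p}{\rightarrow}1$), and apply Slutsky's theorem together with $1/S_{N}^{h}\overset{p}{\rightarrow}1$. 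The main obstacle is the bookkeeping in step (i): ensuring the $h^{0}$ and $h^{1}$ contributions cancel between the two expectations, identifying the surviving $h^{2}$ coefficient with $B_{\pi}(\alpha)$, and bounding the Taylor remainders uniformly in $(y,X)$ so that, after integration, they are genuinely $O(h^{3})$ rather than merely $o(1)$ — this is exactly where the uniform differentiability assumptions on $f_{0}$ are used.
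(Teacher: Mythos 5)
Your proposal is correct and follows essentially the same route as the paper's proof: an i.i.d.\ decomposition of the numerator, a change of variables plus Taylor expansion to extract the bias $B_{\pi}(\alpha)h^{2}$ and the limit variance $\mathbb{V}_{\pi}(\alpha)$, and Slutsky's theorem with $S_{N}^{h}\overset{p}{\rightarrow}1$ to remove the normalization; centering at $c$ rather than at $\mathbb{E}[W^{h}(\pi;\alpha)]$ is an immaterial algebraic variation whose extra term $B_{\pi}(\alpha)h^{2}(1-S_{N}^{h})$ you correctly dispose of. Your explicit Lyapunov verification for the triangular array is in fact a more careful rendering of the paper's informal ``apply the CLT for each $h$'' step, since $h=h_{N}$ varies with $N$.
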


\begin{proof}[\textbf{Proof of Theorem \ref{lemma:asy_res_normalized and thm:asy result normalized}}]
Denote {\small $W^{h}(\pi;\alpha)=\frac{e^{\frac{-Y}{\alpha}}K_{h}(\pi(X)-A)}{f_{0}(A|X)}$}.
Considering {\small $\hat{W}_{N}^{h}-\mathbb{E}[W^{h}(\pi;\alpha)]$}, we have
\begin{equation*}
\begin{aligned}
&\hat{W}_{N}^{h}-\mathbb{E}[W^{h}(\pi;\alpha)]=\frac{\frac{1}{N}\underset{i=1}{\overset{N}{\sum}}\bigg\{\frac{e^{\frac{-Y_{i}}{\alpha}}K_{h}(\pi(X_{i})-A_{i})}{f_{0}(A_{i}|X_{i})}-\frac{K_{h}(\pi(X_{i})-A_{i})}{f_{0}(A_{i}|X_{i})}\mathbb{E}[W^{h}(\pi;\alpha)]\bigg\}}{\frac{1}{N}\underset{i=1}{\overset{N}{\sum}}\frac{K_{h}(\pi(X_{i})-A_{i})}{f_{0}(A_{i}|X_{i})}}.
\end{aligned}
\end{equation*}
By the Central Limit Theorem, for each $h$, we have
\begin{equation*}
\begin{aligned}
&\begin{aligned}
&\sqrt{N}\Bigg(\frac{1}{N}\underset{i=1}{\overset{N}{\sum}}\bigg\{\frac{e^{\frac{-Y}{\alpha}}K_{h}(\pi(X_{i})-A_{i})}{f_{0}(A_{i}|X_{i})}\\
&\quad\quad\quad\quad\quad\quad-\frac{K_{h}(\pi(X_{i})-A_{i})\mathbb{E}[W^{h}(\pi;\alpha)]}{f_{0}(A_{i}|X_{i})}\bigg\}\sqrt{h}\\
&\quad\quad\quad-\mathbb{E}\Bigg[W^{h}(\pi;\alpha)- \frac{K_{h}(\pi(X)-A)}{f_{0}(A|X)}\mathbb{E}[W^{h}(\pi;\alpha)]\Bigg]\sqrt{h}\Bigg)
\end{aligned}\\
&\overset{d}{\rightarrow}\mathcal{N}\Bigg(0,h\mathbb{V}\bigg(W^{h}(\pi;\alpha)- \frac{K_{h}(\pi(X_{i})-A_{i})}{f_{0}(A_{i}|X_{i})}\mathbb{E}[W^{h}(\pi;\alpha)]\bigg)\Bigg)\\
&\begin{aligned}
\Rightarrow &\sqrt{Nh}\Bigg(\frac{1}{N}\underset{i=1}{\overset{N}{\sum}}\bigg\{\frac{e^{\frac{-Y_{i}}{\alpha}}K_{h}(\pi(X_{i})-A_{i})}{f_{0}(A_{i}|X_{i})}\\
&\quad\quad\quad\quad-\frac{\mathbb{E}[W^{h}(\pi;\alpha)]K_{h}(\pi(X_{i})-A_{i})}{f_{0}(A_{i}|X_{i})}\bigg\}\Bigg)
\end{aligned}\\
&\overset{d}{\rightarrow}\mathcal{N}(\text{Mean}^{h}, \text{Variance}^{h}),
\end{aligned}
\end{equation*}
where
\begin{equation*}
\begin{gathered}
\text{Mean}^{h}=\mathbb{E}\Bigg[W^{h}(\pi;\alpha)- \frac{K_{h}(\pi(X)-A)}{f_{0}(A|X)}\mathbb{E}[W^{h}(\pi;\alpha)]\Bigg]\sqrt{h}\\
\text{Variance}^{h}=h\mathbb{V}\bigg(W^{h}(\pi;\alpha)- \frac{K_{h}(\pi(X)-A)}{f_{0}(A|X)}\mathbb{E}[W^{h}(\pi;\alpha)]\bigg).
\end{gathered}
\end{equation*}
From Proposition \ref{result:convergence probability h}, we know that $\frac{1}{N}\underset{i=1}{\overset{N}{\sum}}\frac{K_{h}(\pi(X_{i})-A_{i})}{f_{0}(A_{i}|X_{i})}\overset{p}{\rightarrow}1$. Therefore by Slutsky's Theorem, we conclude that
\begin{align}\label{eqt:convergence in distribution normalized version}
&\sqrt{Nh}(\hat{W}_{N}^{h}-\mathbb{E}[W^{h}(\pi;\alpha)])\nonumber\\
&=\frac{\sqrt{Nh}\bigg(\frac{1}{N}\underset{i=1}{\overset{N}{\sum}}\{\frac{e^{\frac{-Y_{i}}{\alpha}}K_{h}(\pi(X_{i})-A_{i})}{f_{0}(A_{i}|X_{i})}- \frac{\mathbb{E}[W^{h}(\pi;\alpha)]K_{h}(\pi(X_{i})-A_{i})}{f_{0}(A_{i}|X_{i})}\}\bigg)}{\frac{1}{N}\underset{i=1}{\overset{N}{\sum}}\frac{K_{h}(\pi(X_{i})-A_{i})}{f_{0}(A_{i}|X_{i})}}\nonumber\\
&\overset{d}{\rightarrow}\mathcal{N}(\text{Mean}^{h}, \text{Variance}^{h}).
\end{align}
We now study the quantities 
\begin{equation*}
\begin{gathered}
\mathbb{E}[W^{h}(\pi;\alpha)],\\
\mathbb{E}\Bigg[W^{h}(\pi;\alpha)- \frac{K_{h}(\pi(X)-A)}{f_{0}(A|X)}\mathbb{E}[W^{h}(\pi;\alpha)]\Bigg]\sqrt{h},\quad\text{and}\\
h\mathbb{V}\bigg(W^{h}(\pi;\alpha)- \frac{K_{h}(\pi(X)-A)}{f_{0}(A|X)}\mathbb{E}[W^{h}(\pi;\alpha)]\bigg)
\end{gathered}
\end{equation*}
accordingly. 

\noindent For the quantity $\mathbb{E}[W^{h}(\pi;\alpha)]$, we have
\begin{align}
&\begin{aligned}
\mathbb{E}[W^{h}(\pi;\alpha)]&=\mathbb{E}\bigg[\frac{e^{-\frac{Y}{\alpha}}K_{h}(\pi(X)-A)}{f_{0}(A|X)}\bigg]=\mathbb{E}\bigg[\mathbb{E}\bigg[\frac{e^{-\frac{Y}{\alpha}}K_{h}(\pi(X)-A)}{f_{0}(A|X)}\bigg|X\bigg]\bigg]
\end{aligned}\nonumber\\
=&\mathbb{E}[e^{-\frac{Y(\pi(X))}{\alpha}}]+\nonumber\\
&\underbrace{\frac{h^{2}\big(\int K(u)u^{2}du\big)}{2}\mathbb{E}\bigg[\mathbb{E}\bigg[e^{-\frac{Y}{\alpha}}\frac{\partial_{aa}^{2}f_{0}(Y|\pi(X),X)}{f_{0}(Y|\pi(X),X)}\bigg|A=\pi(X),X\bigg]\bigg]}_{\coloneqq B_{\pi}(\alpha)}+O(h^{3}).\label{eqt:score IPW result}
\end{align}
Therefore, the quantity $\sqrt{Nh}(\hat{W}_{N}^{h}-\mathbb{E}[W^{h}(\pi;\alpha)])$ in Eqn. \eqref{eqt:convergence in distribution normalized version} becomes
\begin{equation*}
\begin{aligned}
&\sqrt{Nh}\bigg(\hat{W}_{N}^{h}-\mathbb{E}[e^{-\frac{Y(\pi(X))}{\alpha}}]-B_{\pi}(\alpha)h^{2}\bigg),\quad\text{where}\\
&B_{\pi}(\alpha)=\frac{\big(\int K(u)u^{2}du\big)}{2}\times\mathbb{E}\bigg[\mathbb{E}\bigg[e^{-\frac{Y}{\alpha}}\frac{\partial_{aa}^{2}f_{0}(Y|\pi(X),X)}{f_{0}(Y|\pi(X),X)}\bigg|A=\pi(X),X\bigg]\bigg].
\end{aligned}
\end{equation*}
For the quantity {\small $\mathbb{E}\bigg[W^{h}(\pi;\alpha)- \frac{K_{h}(\pi(X)-A)}{f_{0}(A|X)}\mathbb{E}[W^{h}(\pi;\alpha)]\bigg]\sqrt{h}$}, we know that
\begin{equation*}
\begin{aligned}
&\mathbb{E}\bigg[\frac{K_{h}(\pi(X)-A)}{f_{0}(A|X)}\bigg]=\mathbb{E}\bigg[\int\mathbb{E}\bigg[\frac{K_{h}(\pi(X)-A)}{f_{0}(A|X)}\bigg|Y=y,A=a,X\bigg]f_{0}(y,a|X)dyda\bigg]\\
=&\mathbb{E}\bigg[\int f_{0}(y|\pi(X),X)dy\bigg]+\frac{\big(\int u^{2}K(u)du\big)h^{2}}{2}\mathbb{E}\bigg[\int\partial_{aa}^{2}f_{0}(y|\pi(X),X)dy\bigg]+O(h^{3})\\
=&1+O(h^{2}).
\end{aligned}
\end{equation*}
Hence, we have
\begin{equation*}
\begin{aligned}
&\mathbb{E}\Bigg[W^{h}(\pi;\alpha)- \frac{K_{h}(\pi(X)-A)}{f_{0}(A|X)}\mathbb{E}[W^{h}(\pi;\alpha)]\Bigg]\sqrt{h}=\Bigg(\mathbb{E}[W^{h}(\pi;\alpha)]- \mathbb{E}\Bigg[\frac{K_{h}(\pi(X)-A)}{f_{0}(A|X)}\Bigg]\mathbb{E}[W^{h}(\pi;\alpha)]\Bigg)\sqrt{h}\\
&=\mathbb{E}[W^{h}(\pi;\alpha)](1-(1+O(h^{2})))\sqrt{h}=\Bigg(\mathbb{E}[e^{-\frac{Y(\pi(X))}{\alpha}}]+B_{\pi}(\alpha)h^{2}+O(h^{3})\Bigg)O(h^{\frac{5}{2}})\overset{}{\rightarrow}0.
\end{aligned}
\end{equation*}
Finally, we study {\small $h\mathbb{V}\bigg(W^{h}(\pi;\alpha)- \frac{K_{h}(\pi(X)-A)}{f_{0}(A|X)}\mathbb{E}[W^{h}(\pi;\alpha)]\bigg)$}. Notice that
{\small
\begin{equation*}
\begin{aligned}
&h\mathbb{V}\bigg(W^{h}(\pi;\alpha)- \frac{K_{h}(\pi(X)-A)}{f_{0}(A|X)}\mathbb{E}[W^{h}(\pi;\alpha)]\bigg)\\
=&h\mathbb{E}\bigg[\bigg\{W^{h}(\pi;\alpha)- \frac{K_{h}(\pi(X)-A)}{f_{0}(A|X)}\mathbb{E}[W^{h}(\pi;\alpha)]\bigg\}^{2}\bigg]\\
&-h\bigg(\mathbb{E}\bigg[W^{h}(\pi;\alpha)- \frac{K_{h}(\pi(X_{i})-A_{i})}{f_{0}(A_{i}|X_{i})}\mathbb{E}[W^{h}(\pi;\alpha)]\bigg]\bigg)^{2}\\
=&h\mathbb{E}\bigg[\bigg\{W^{h}(\pi;\alpha)- \frac{K_{h}(\pi(X)-A)}{f_{0}(A|X)}\mathbb{E}[W^{h}(\pi;\alpha)]\bigg\}^{2}\bigg]\\
&-h\bigg(\Bigg(\mathbb{E}\bigg[e^{-\frac{Y(\pi(X))}{\alpha}}\bigg]+B_{\pi}(\alpha)h^{2}+O(h^{3})\Bigg)O(h^{\frac{5}{2}})\bigg)^{2}\\
\rightarrow &h\mathbb{E}\bigg[\bigg\{W^{h}(\pi;\alpha)- \frac{K_{h}(\pi(X)-A)}{f_{0}(A|X)}\mathbb{E}[W^{h}(\pi;\alpha)]\bigg\}^{2}\bigg].
\end{aligned}
\end{equation*}
}\noindent
Next, we move to consider the quantity {\small $h\mathbb{E}\biggl[\biggl(W^{h}(\pi;\alpha)-\frac{K_{h}(\pi(X)-A)}{f_{0}(A|X)}\mathbb{E}[W^{h}(\pi;\alpha)]\biggl)^{2}\biggl]$}. Note that
\begin{equation}
\begin{aligned}\label{eqt:decomposition of variance}
&h\mathbb{E}\biggl[\biggl(W^{h}(\pi;\alpha)-\frac{K_{h}(\pi(X)-A)}{f_{0}(A|X)}\mathbb{E}[W^{h}(\pi;\alpha)]\biggl)^{2}\biggl]\\
=&\underbrace{h\mathbb{E}[(W^{h}(\pi;\alpha))^{2}]}_{\text{I}}\\
&-2\underbrace{h\mathbb{E}\biggl[W^{h}(\pi;\alpha)\frac{K_{h}(\pi(X)-A)}{f_{0}(A|X)}\biggl]\mathbb{E}[W^{h}(\pi;\alpha)]}_{\text{II}}\\
&+\underbrace{h\mathbb{E}\biggl[\frac{K_{h}^{2}(\pi(X)-A)}{f_{0}^{2}(A|X)}\biggl]\big(\mathbb{E}[W^{h}(\pi;\alpha)]\big)^{2}}_{\text{III}}.
\end{aligned}
\end{equation}
We consider each quantity sequentially. For the quantity \text{I}, we have
\begin{equation*}
\begin{aligned}
&\begin{aligned}
\text{I}&=h\mathbb{E}[(W^{h}(\pi;\alpha))^{2}]=h\mathbb{E}\bigg[\frac{K_{h}^{2}(\pi(X)-A)}{f_{0}^{2}(A|X)}e^{-\frac{2Y}{\alpha}}\bigg]=h\mathbb{E}\bigg[\mathbb{E}\bigg[\frac{K_{h}^{2}(\pi(X)-A)}{f_{0}^{2}(A|X)}e^{-\frac{2Y}{\alpha}}\bigg|X\bigg]\bigg]
\end{aligned}\\
&=h\mathbb{E}\bigg[\int K_{h}^{2}(\pi(X)-a)e^{-\frac{2y}{\alpha}}\tilde{g}(y,a;X)dyda\bigg]\\
&=\bigg(\int K^{2}(u)du\bigg)\mathbb{E}\bigg[\mathbb{E}\bigg[e^{-\frac{2Y}{\alpha}}\frac{\tilde{g}(Y,\pi(X);X)}{f_{0}(Y|\pi(X),X)}\bigg|A=\pi(X),X\bigg]\bigg]+O(h^{2})\\
&\rightarrow \bigg(\int K^{2}(u)du\bigg)\mathbb{E}\bigg[\mathbb{E}\bigg[\frac{e^{-\frac{2Y}{\alpha}}}{f_{0}(\pi(X)|X)}\bigg|A=\pi(X),X\bigg]\bigg].
\end{aligned}
\end{equation*}
Here, $\tilde{g}(Y,A;X)=\frac{f_{0}(Y|A,X)}{f_{0}(A|X)}$. For the quantity \text{II}, we consider $h\mathbb{E}\biggl[W^{h}(\pi;\alpha)\frac{K_{h}(\pi(X)-A)}{f_{0}(A|X)}\biggl]$ and $\mathbb{E}[W^{h}(\pi;\alpha)]$ separately. First, for the quantity $h\mathbb{E}\biggl[W^{h}(\pi;\alpha)\frac{K_{h}(\pi(X)-A)}{f_{0}(A|X)}\biggl]$, we have
{\small
\begin{equation*}
\begin{aligned}
&h\mathbb{E}\biggl[W^{h}(\pi;\alpha)\frac{K_{h}(\pi(X)-A)}{f_{0}(A|X)}\biggl]=h\mathbb{E}\bigg[\frac{K_{h}^{2}(\pi(X)-A)}{f_{0}^{2}(A|X)}e^{-\frac{Y}{\alpha}}\bigg]\\
=&\bigg(\int K^{2}(u)du\bigg)\mathbb{E}\bigg[\mathbb{E}\bigg[\frac{e^{-\frac{Y}{\alpha}}}{f_{0}(\pi(X)|X)}\bigg|A=\pi(X),X\bigg]\bigg]+O(h^{2}).
\end{aligned}
\end{equation*}
}\noindent
Eqn. \eqref{eqt:score IPW result} investigates the quantity $\mathbb{E}[W^{h}(\pi;\alpha)]$. Recall that
\begin{equation*}
\begin{aligned}
\mathbb{E}[W^{h}(\pi;\alpha)]&=\mathbb{E}[\mathbb{E}[e^{-\frac{Y}{\alpha}}|A=\pi(X),X]]+O(h^{2})=\mathbb{E}[e^{-\frac{Y(\pi(X))}{\alpha}}]+O(h^{2}).
\end{aligned}
\end{equation*}
As such, $\text{II}\rightarrow \big(\int K^{2}(u)du\big)\mathbb{E}\bigg[\mathbb{E}\bigg[\frac{e^{-\frac{Y}{\alpha}}}{f_{0}(\pi(X)|X)}\bigg|A=\pi(X),X\bigg]\bigg]\mathbb{E}[e^{-\frac{Y(\pi(X))}{\alpha}}]$.

\noindent For the quantity $\text{III}$, we compute $h\mathbb{E}\biggl[\frac{K_{h}^{2}(\pi(X)-A)}{f_{0}^{2}(A|X)}\biggl]$ since $(\mathbb{E}[W^{h}(\pi;\alpha)])^{2}$ equals $\big(\mathbb{E}[e^{-\frac{Y(\pi(X))}{\alpha}}]\big)^{2}+O(h^{2})$. In the followings, we compute $h\mathbb{E}\biggl[\frac{K_{h}^{2}(\pi(X)-A)}{f_{0}^{2}(A|X)}\biggl]$:
\begin{equation*}
\begin{aligned}
&h\mathbb{E}\biggl[\frac{K_{h}^{2}(\pi(X)-A)}{f_{0}^{2}(A|X)}\biggl]=h\mathbb{E}\biggl[\int\mathbb{E}\biggl[\frac{K_{h}^{2}(\pi(X)-A)}{f_{0}^{2}(A|X)}\biggl|Y=y,A=a,X\biggl]f_{0}(y,a|X)dyda\biggl]\\
=&\bigg(\int K^{2}(u) du\bigg)\mathbb{E}\bigg[\frac{1}{f_{0}(\pi(X)|X)}\bigg]+O(h^{2}).
\end{aligned}
\end{equation*}
Consequently, the quantity $\text{III}$ would converge to {\small $\bigg(\int K^{2}(u)du\bigg)\mathbb{E}\bigg[\frac{1}{f_{0}(\pi(X)|X)}\bigg]\big(\mathbb{E}[e^{-\frac{Y(\pi(X))}{\alpha}}]\big)^{2}$}. Therefore, we conclude that
{\small
\begin{equation*}
\begin{aligned}
&h\mathbb{E}\biggl[\biggl(W^{h}(\pi;\alpha)-\frac{K_{h}(\pi(X)-A)}{f_{0}(A|X)}\mathbb{E}[W^{h}(\pi;\alpha)]\biggl)^{2}\biggl]\\
&\rightarrow\big(\int K^{2}(u)du\big)\times\\
&\quad\quad\bigg\{\mathbb{E}\bigg[\mathbb{E}\bigg[\frac{e^{-\frac{2Y}{\alpha}}}{f_{0}(\pi(X)|X)}\bigg|A=\pi(X),X\bigg]\bigg]\\
&\quad\quad\quad+\mathbb{E}\bigg[\frac{1}{f_{0}(\pi(X)|X)}\bigg](\mathbb{E}[e^{-\frac{Y(\pi(X))}{\alpha}}])^{2}\\
&\quad\quad\quad-2\mathbb{E}\bigg[\mathbb{E}\bigg[\frac{e^{-\frac{Y}{\alpha}}}{f_{0}(\pi(X)|X)}\bigg|A=\pi(X),X\bigg]\bigg]\mathbb{E}[e^{-\frac{Y(\pi(X))}{\alpha}}]\bigg\}\\
&\quad\quad\coloneqq\mathbb{V}_{\pi}(\alpha).
\end{aligned}
\end{equation*}
}\noindent
The proof is now completed.
\end{proof}
\subsubsection{Proof of Theorem \ref{eqt:Q_IPW hat asy result}}\label{sec:Theorem main added}
We restate the Theorem here:
\begin{theorem*}
Suppose that $N\rightarrow\infty$, $h\rightarrow 0$ such that $Nh\rightarrow\infty$ and $Nh^{5}\rightarrow C\in[0,\infty)$. Further, denote $\alpha_{\ast}(\pi)$ s.t.  $\phi(\pi,\alpha_{\ast}(\pi))\geq \phi(\pi,\alpha)$ $\forall\;\alpha\geq 0$.
Then we have
\begin{equation*}
\begin{aligned}
&\sqrt{Nh}\Biggl(\hat{Q}_{\mathrm{DRO}}^{h}(\pi)-Q_{\mathrm{DRO}}(\pi)+\frac{\alpha_{\ast}(\pi)B_{\pi}(\alpha_{\ast}(\pi))}{\mathbb{E}\bigg[e^{-\frac{Y(\pi(X))}{\alpha_{\ast}(\pi)}}\bigg]}h^{2}\Biggl)\\
&\overset{d}{\rightarrow}\mathcal{N}\left(0,\frac{\alpha_{\ast}^{2}(\pi)\mathbb{V}_{\pi}(\alpha_{\ast}(\pi))}{\bigg(\mathbb{E}\bigg[e^{-\frac{Y(\pi(X))}{\alpha_{\ast}(\pi)}}\bigg]\bigg)^{2}}\right).
\end{aligned}
\end{equation*}
\end{theorem*}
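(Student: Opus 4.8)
The plan is an argmax/delta-method argument that reduces the asymptotics of $\hat{Q}_{\mathrm{DRO}}^{h}(\pi)=\max_{\alpha\ge 0}\hat{\phi}_{N}^{h}(\pi,\alpha)$ to the already-established asymptotics of $\hat{W}_{N}^{h}$ at the single point $\alpha_{\ast}(\pi)$. Abbreviate $m(\alpha):=\mathbb{E}[e^{-Y(\pi(X))/\alpha}]$, $\hat{W}(\alpha):=\hat{W}_{N}^{h}(\pi,\alpha)$, $\phi(\alpha):=\phi(\pi,\alpha)=-\alpha\log m(\alpha)-\alpha\eta$, $\hat{\phi}(\alpha):=\hat{\phi}_{N}^{h}(\pi,\alpha)=-\alpha\log\hat{W}(\alpha)-\alpha\eta$, $\alpha_{\ast}:=\alpha_{\ast}(\pi)$, and let $\hat{\alpha}$ be a maximizer of $\hat{\phi}$. \textbf{Well-posedness.} By Proposition \ref{lemma:convexity lemma} the maps whose negatives are $\phi$ and $\hat{\phi}$ are strictly convex, so $\phi$ and $\hat{\phi}$ are strictly concave with unique maximizers; by Proposition \ref{lemma:finite optimal study} (and its argument that the objective is steep at $\alpha\downarrow 0$ and increasing at $\alpha\uparrow\infty$) the maximizers are finite and positive, so $\alpha_{\ast}$ and $\hat{\alpha}$ lie in the interior of a fixed compact interval $I=[\underline{\alpha},\bar{\alpha}]\subset(0,\infty)$; on $I$ we have $m(\alpha)\ge e^{-M/\underline{\alpha}}>0$, and $\phi\in C^{2}(I)$ with $\phi''(\alpha_{\ast})<0$.

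\textbf{Consistency of $\hat{\alpha}$.} For each fixed $\alpha\in I$, Theorem \ref{lemma:asy_res_normalized and thm:asy result normalized} gives $\hat{W}(\alpha)\overset{p}{\rightarrow}m(\alpha)$ (its bias $B_{\pi}(\alpha)h^{2}$ and scaled variance $\mathbb{V}_{\pi}(\alpha)/(Nh)$ both vanish), hence $\hat{\phi}(\alpha)\overset{p}{\rightarrow}\phi(\alpha)$ pointwise on $I$; since these functions are concave, pointwise convergence upgrades to $\sup_{\alpha\in I}|\hat{\phi}(\alpha)-\phi(\alpha)|\overset{p}{\rightarrow}0$, and uniqueness of $\alpha_{\ast}$ then gives $\hat{\alpha}\overset{p}{\rightarrow}\alpha_{\ast}$ by the standard argmax-continuity argument.

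\textbf{Linearization plus a uniform CLT.} Because $m$ is bounded below on $I$, a second-order Taylor expansion of $\log$ yields, uniformly in $\alpha\in I$,
\begin{equation*}
\hat{\phi}(\alpha)-\phi(\alpha)=-\alpha\,\frac{\hat{W}(\alpha)-m(\alpha)}{m(\alpha)}+R(\alpha),\qquad \sup_{\alpha\in I}|R(\alpha)|\le C_{0}\,\sup_{\alpha\in I}\big(\hat{W}(\alpha)-m(\alpha)\big)^{2}.
\end{equation*}
The crucial technical input is a uniform-in-$\alpha$ strengthening of Theorem \ref{lemma:asy_res_normalized and thm:asy result normalized}: the process $\alpha\mapsto\sqrt{Nh}\big(\hat{W}(\alpha)-m(\alpha)-B_{\pi}(\alpha)h^{2}\big)$ on $I$ is asymptotically tight/equicontinuous, with Gaussian finite-dimensional distributions. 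This holds because the indexing functions $(x,a,y)\mapsto\frac{K_{h}(\pi(x)-a)}{f_{0}(a|x)}e^{-y/\alpha}$, $\alpha\in I$, differ only through the factor $e^{-y/\alpha}$, which is Lipschitz in $\alpha$ uniformly over the bounded range $y\in[0,M]$; the increments therefore have variance of order $h^{-1}|\alpha-\alpha'|^{2}$, so after the $\sqrt{h}$ normalization the modulus of continuity is controlled (the entropy integral is finite even though the kernel envelope grows like $1/h$). Two consequences: (i) $\sup_{\alpha\in I}|\hat{W}(\alpha)-m(\alpha)|=O_{p}\big(h^{2}+(Nh)^{-1/2}\big)=O_{p}\big((Nh)^{-1/2}\big)$, using $Nh^{5}\to C<\infty$; and (ii) $\sqrt{Nh}\big(\hat{W}(\hat{\alpha})-m(\hat{\alpha})-B_{\pi}(\hat{\alpha})h^{2}\big)=\sqrt{Nh}\big(\hat{W}(\alpha_{\ast})-m(\alpha_{\ast})-B_{\pi}(\alpha_{\ast})h^{2}\big)+o_{p}(1)$, combining equicontinuity with $\hat{\alpha}\overset{p}{\rightarrow}\alpha_{\ast}$ and continuity of $B_{\pi}$.

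\textbf{Rate of $\hat{\alpha}$ and assembly.} From $\hat{\phi}(\hat{\alpha})\ge\hat{\phi}(\alpha_{\ast})$ and $\phi(\alpha_{\ast})\ge\phi(\hat{\alpha})$ we get $0\le\phi(\alpha_{\ast})-\phi(\hat{\alpha})\le[\hat{\phi}(\hat{\alpha})-\phi(\hat{\alpha})]-[\hat{\phi}(\alpha_{\ast})-\phi(\alpha_{\ast})]$; by the linearization together with consequences (i)--(ii) the right-hand side is $o_{p}((Nh)^{-1/2})$ (the quadratic remainders are $O_{p}((Nh)^{-1})$ and the linear parts differ by $o_{p}((Nh)^{-1/2})$), while local strong concavity gives $c(\hat{\alpha}-\alpha_{\ast})^{2}\le\phi(\alpha_{\ast})-\phi(\hat{\alpha})$, so $\hat{\alpha}-\alpha_{\ast}=o_{p}((Nh)^{-1/4})$. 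Now write $\hat{Q}_{\mathrm{DRO}}^{h}(\pi)-Q_{\mathrm{DRO}}(\pi)=[\hat{\phi}(\hat{\alpha})-\phi(\hat{\alpha})]+[\phi(\hat{\alpha})-\phi(\alpha_{\ast})]$: the second bracket equals $\frac12\phi''(\alpha_{\ast})(\hat{\alpha}-\alpha_{\ast})^{2}(1+o_{p}(1))=o_{p}((Nh)^{-1/2})$, and in the first bracket the remainder contributes $\sqrt{Nh}\,R(\hat{\alpha})=O_{p}((Nh)^{-1/2})\to 0$, so using (ii), $\hat{\alpha}\overset{p}{\rightarrow}\alpha_{\ast}$, $m(\hat{\alpha})\overset{p}{\rightarrow}m(\alpha_{\ast})$ and $\sqrt{Nh}\,h^{2}\to C^{1/2}$,
\begin{equation*}
\sqrt{Nh}\Big(\hat{Q}_{\mathrm{DRO}}^{h}(\pi)-Q_{\mathrm{DRO}}(\pi)+\frac{\alpha_{\ast}B_{\pi}(\alpha_{\ast})}{m(\alpha_{\ast})}h^{2}\Big)=-\frac{\alpha_{\ast}}{m(\alpha_{\ast})}\,\sqrt{Nh}\big(\hat{W}(\alpha_{\ast})-m(\alpha_{\ast})-B_{\pi}(\alpha_{\ast})h^{2}\big)+o_{p}(1).
\end{equation*}
Theorem \ref{lemma:asy_res_normalized and thm:asy result normalized} at $\alpha=\alpha_{\ast}$ and Slutsky's theorem then give the claimed $\mathcal{N}\big(0,\alpha_{\ast}^{2}\mathbb{V}_{\pi}(\alpha_{\ast})/m(\alpha_{\ast})^{2}\big)$ limit. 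The one genuinely non-routine ingredient is the uniform-in-$\alpha$ upgrade of Theorem \ref{lemma:asy_res_normalized and thm:asy result normalized}, which must handle the $1/h$-growing kernel envelope; the remaining steps are bookkeeping on top of Propositions \ref{lemma:convexity lemma} and \ref{lemma:finite optimal study} and Theorem \ref{lemma:asy_res_normalized and thm:asy result normalized}.
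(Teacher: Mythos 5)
Your proposal is correct, and it reaches the result by a genuinely different route from the paper. The paper works at the level of the whole process $\alpha\mapsto\hat{W}_{N}^{h}(\pi,\alpha)$: it establishes weak convergence of $\sqrt{Nh}(\hat{W}_{N}^{h}-\mathbb{E}[W^{h}(\pi;\cdot)])$ in $\mathcal{C}([\underline{\alpha},\bar{\alpha}])$ (via a Lipschitz-in-$\alpha$ bound and a CLT in Banach space), defines the functional $V(\psi)=\inf_{\alpha}\{\alpha\log\psi(\alpha)+\alpha\eta\}$, computes its Hadamard directional derivative by Danskin's theorem, and applies the functional delta theorem twice — once to the stochastic fluctuation and once to the deterministic bias $B_{\pi}(\cdot)h^{2}$ — before checking that $\hat{Q}_{\mathrm{DRO}}^{h}(\pi)=-V(\hat{W}^{h}(\pi,\cdot))$ with probability tending to one. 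You instead run a classical M-estimation argument: consistency of $\hat{\alpha}$ from concavity, a second-order Taylor expansion of $\log$, an explicit rate $\hat{\alpha}-\alpha_{\ast}=o_{p}((Nh)^{-1/4})$ from local strong concavity (which holds with $\phi''(\alpha_{\ast})<0$ since $\mathbb{V}(Y)>0$), and the observation that the quadratic term $\phi(\hat{\alpha})-\phi(\alpha_{\ast})$ is then $o_{p}((Nh)^{-1/2})$, reducing everything to the pointwise CLT at $\alpha_{\ast}$. Your route is more elementary and self-contained — it avoids Danskin and the functional delta theorem and makes the "first-order effect of $\hat{\alpha}-\alpha_{\ast}$ vanishes at the optimum" mechanism explicit — while the paper's route packages that mechanism abstractly and generalizes more readily to non-unique minimizers. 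Both proofs stand or fall on the same technical input, namely the uniform-in-$\alpha$ tightness of the kernel-weighted empirical process despite its $O(1/h)$ envelope: the paper asserts it from Lipschitz continuity plus a citation to a Banach-space CLT, and you sketch it via an entropy/equicontinuity bound at a comparable level of rigor, correctly flagging it as the one non-routine ingredient. The variance and bias constants you obtain match the statement.
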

\begin{proof}[\textbf{Proof of Theorem \ref{eqt:Q_IPW hat asy result}}]
Denote 
\begin{equation*}
\begin{aligned}
W^{h}(\pi;\alpha)&=\frac{e^{\frac{-Y}{\alpha}}K_{h}(\pi(X)-A)}{f_{0}(A|X)}\quad\text{and}\\ W_{i}^{h}(\pi;\alpha)&=\frac{e^{\frac{-Y_{i}}{\alpha}}K_{h}(\pi(X_{i})-A_{i})}{f_{0}(A_{i}|X_{i})}.
\end{aligned}
\end{equation*}
We have the following prevalent result:
\begin{equation*}
\begin{aligned}
&\sqrt{Nh}(\bar{W}_{N}^{h}-\mathbb{E}[W^{h}(\pi;\alpha)])\\
=&\sqrt{Nh}\bigg(\frac{1}{N}\underset{i=1}{\overset{N}{\sum}}\{W_{i}^{h}(\pi;\alpha)-\frac{K_{h}(\pi(X_{i})-A_{i})}{f_{0}(A_{i}|X_{i})}\mathbb{E}[W^{h}(\pi;\alpha)]\}\bigg)\\
\overset{d}{\rightarrow}&\mathcal{N}(\text{Mean},\text{Variance}):=Z^{h}(\alpha),
\end{aligned}
\end{equation*}
where 
\begin{equation*}
\begin{aligned}
\text{Mean}&=\mathbb{E}\Bigg[W^{h}(\pi;\alpha)-\frac{K_{h}(\pi(X)-A)}{f_{0}(A|X)}\mathbb{E}[W^{h}(\pi;\alpha)]\Bigg]\sqrt{h},\\
\text{Variance}&=h\mathbb{V}\bigg(W^{h}(\pi;\alpha)-\frac{K_{h}(\pi(X)-A)}{f_{0}(A|X)}\mathbb{E}[W^{h}(\pi;\alpha)]\bigg).
\end{aligned}
\end{equation*}
Since $\frac{1}{N}\underset{i=1}{\overset{N}{\sum}}\frac{K_{h}(\pi(X_{i})-A_{i})}{f_{0}(A_{i}|X_{i})}\overset{p}{\rightarrow}1$, by Slutsky's Theorem, we conclude that
\begin{equation*}
\begin{aligned}
&\sqrt{Nh}(\hat{W}_{N}^{h}-\mathbb{E}[W^{h}(\pi;\alpha)])=\frac{\sqrt{Nh}\bigg(\frac{1}{N}\underset{i=1}{\overset{N}{\sum}}\{W_{i}^{h}(\pi;\alpha)-\frac{K_{h}(\pi(X_{i})-A_{i})}{f_{0}(A_{i}|X_{i})}\mathbb{E}[W^{h}(\pi;\alpha)]\}\bigg)}{\frac{1}{N}\underset{i=1}{\overset{N}{\sum}}\frac{K_{h}(\pi(X_{i})-A_{i})}{f_{0}(A_{i}|X_{i})}}\\
\overset{d}{\rightarrow}&\mathcal{N}(\text{Mean}, \text{Variance}):=Z^{h}(\alpha).
\end{aligned}
\end{equation*}
Denote
\begin{equation*}
\begin{gathered}
\begin{aligned}
\alpha_{\ast}(\pi)&=\underset{\alpha\geq 0}{\arg\max}\;\big\{-\alpha\log\mathbb{E}\big[e^{-\frac{Y(\pi(X))}{\alpha}}\big]-\alpha\eta\big\}\\
&=\underset{\alpha\geq 0}{\arg\min}\;\big\{\alpha\log\mathbb{E}\big[e^{-\frac{Y(\pi(X))}{\alpha}}\big]+\alpha\eta\big\}\\
\end{aligned}\\
\begin{aligned}
\alpha_{\ast}^{h;\mathrm{IPW}}(\pi)&=\underset{\alpha\geq 0}{\arg\max}\;\big\{-\alpha\log\mathbb{E}\big[W^{h}(\pi;\alpha)\big]-\alpha\eta\big\}\\
&=\underset{\alpha\geq 0}{\arg\min}\;\big\{\alpha\log\mathbb{E}\big[W^{h}(\pi;\alpha)\big]+\alpha\eta\big\}
\end{aligned}\\
\begin{aligned}
\bar{\alpha}_{\ast}^{N,h;\mathrm{IPW}}(\pi)&=\underset{\alpha\geq 0}{\arg\max}\;\big\{-\alpha\log \bar{W}^{h}(\pi,\alpha)-\alpha\eta\big\}\\
&=\underset{\alpha\geq 0}{\arg\min}\;\big\{\alpha\log \bar{W}^{h}(\pi,\alpha)+\alpha\eta\big\}
\end{aligned}.
\end{gathered}
\end{equation*}
According to Claim \ref{lemma:prop of fun phi} of Proposition \ref{lemma:finite optimal study}, $\alpha_{\ast}(\pi)$ is finite and $\alpha_{\ast}(\pi)>0$. 

\noindent Note also that $-\eta\alpha\leq \big\{-\alpha\log\mathbb{E}\big[e^{-\frac{Y(\pi(X))}{\alpha}}\big]-\alpha\eta\big\}\leq M-\alpha\eta$. Due to the concavity and continuity of $\big\{-\alpha\log\mathbb{E}\big[e^{-\frac{Y(\pi(X))}{\alpha}}\big]-\alpha\eta\big\}\leq M-\alpha\eta$, its optimal value is positive. 

Since for sufficiently large $N$ and small $h$, $\bar{\alpha}_{\ast}^{h;\mathrm{IPW}}(\pi)$ and $\bar{\alpha}_{\ast}^{N,h;\mathrm{IPW}}(\pi)$ are closed to $\bar{\alpha}_{\ast}(\pi)$. Thus, for sufficiently large $N$ and small $h$, we can choose $\underline{\alpha},\;\bar{\alpha}>0$ such that
\begin{equation*}
\begin{gathered}
0<\underline{\alpha}<\alpha_{\ast}(\pi),\bar{\alpha}_{\ast}^{h;\mathrm{IPW}}(\pi),\bar{\alpha}_{\ast}^{N,h;\mathrm{IPW}}(\pi)<\bar{\alpha},\quad\text{and}\\
\quad-\underline{\alpha}\log \mathcal{E}(\pi,\underline{\alpha})-\underline{\alpha}\eta>0,\\-\bar{\alpha}\log \mathcal{E}(\pi,\bar{\alpha})-\bar{\alpha}\eta>0,\\
\underline{\alpha}\log \mathcal{E}(\pi,\underline{\alpha})+\underline{\alpha}\eta\neq \bar{\alpha}\log \mathcal{E}(\pi,\bar{\alpha})+\bar{\alpha}\eta,
\end{gathered}
\end{equation*}
where $\mathcal{E}(\pi,\alpha)\in\{\mathbb{E}\big[e^{-\frac{Y(\pi(X))}{\alpha}}\big],\;\mathbb{E}\big[W^{h}(\pi,\alpha)\big],\;\bar{W}^{h}(\pi,\alpha)\}$.
Next, we aim to show that {\small$\hat{W}_{N}^{h}(\pi,\alpha)=\frac{\frac{1}{N}\underset{i=1}{\overset{N}{\sum}}W_{i}^{h}(\pi,\alpha)}{\frac{1}{N}\underset{j=1}{\overset{N}{\sum}}\frac{K_{h}(\pi(X_{j})-A_{j})}{f_{0}(A_{j}|X_{j})}}$} is Lipschitz continuous on $[\underline{\alpha},\bar{\alpha}]$. From Claim \ref{lemma:useful asy result} of Proposition \ref{result:convergence S_N_h}, we conclude that for $0<\gamma<1$, there exists $\tilde{N}>0$ and $\tilde{h}>0$ such that $0<1-\gamma<\frac{1}{N}\underset{j=1}{\overset{N}{\sum}}\frac{K_{h}(\pi(X_{j})-A_{j})}{f_{0}(A_{j}|X_{j})}<1+\gamma$ for $N>\tilde{N}$ and $0<h<\tilde{h}$. In fact, when $N>\tilde{N}$ and $h<\tilde{h}$, $\frac{1}{\frac{1}{N}\underset{j=1}{\overset{N}{\sum}}\frac{K_{h}(\pi(X_{j})-A_{j})}{f_{0}(A_{j}|X_{j})}}\leq\frac{1}{1-\gamma}$. Note that $W_{i}^{h}(\pi,\alpha)$ is Lipschitz continuous on $[\underline{\alpha},\bar{\alpha}]$. The derivations are as follows: for any $\alpha_{1},\;\alpha_{2}\in[\underline{\alpha},\bar{\alpha}]$, we have
\begin{equation*}
\begin{aligned}
&|W^{h}(\pi,\alpha_{1})-W^{h}(\pi,\alpha_{2})|=\bigg|\frac{K_{h}(\pi(X)-A)}{f_{0}(A|X)}[e^{-\frac{Y}{\alpha_{1}}}-e^{-\frac{Y}{\alpha_{2}}}]\bigg|\\
&
\leq \frac{K_{h}(\pi(X)-A)}{f_{0}(A|X)}\biggl[\frac{Ye^{-\frac{Y}{\alpha_{\theta}}}}{\alpha_{\theta}^{2}}\biggl]|\alpha_{1}-\alpha_{2}|\leq L_{h}|\alpha_{1}-\alpha_{2}|.
\end{aligned}
\end{equation*}
Here, $\alpha_{\theta}$ lies between $\alpha_{1}$ and $\alpha_{2}$ and we assume $|K(\cdot)|\leq M_{K}$ to be such that $L_{h}=\frac{M M_{K}}{h\epsilon\underline{\alpha}^{2}}$. As such, we can conclude that $\hat{W}_{N}^{h}(\pi,\alpha)$ is Lipschitz continuous on $[\underline{\alpha},\bar{\alpha}]$: indeed, for $N>\tilde{N}$, $h<\tilde{h}$ and $\alpha_{1},\alpha_{2}\in[\underline{\alpha},\bar{\alpha}]$, we have
\begin{equation*}
\begin{aligned}
&|\hat{W}_{N}^{h}(\pi,\alpha_{1})-\hat{W}_{N}^{h}(\pi,\alpha_{2})|\\
=&\left|\frac{\frac{1}{N}\underset{i=1}{\overset{N}{\sum}}W_{i}^{h}(\pi,\alpha_{1})}{\frac{1}{N}\underset{j=1}{\overset{N}{\sum}}\frac{K_{h}(\pi(X_{j})-A_{j})}{f_{0}(A_{j}|X_{j})}}-\frac{\frac{1}{N}\underset{i=1}{\overset{N}{\sum}}W_{i}^{h}(\pi,\alpha_{2})}{\frac{1}{N}\underset{j=1}{\overset{N}{\sum}}\frac{K_{h}(\pi(X_{j})-A_{j})}{f_{0}(A_{j}|X_{j})}}\right|\\
\leq&\frac{1}{1-\gamma}L_{h}|\alpha_{1}-\alpha_{2}|,\quad\text{where $L_{h}=\bigg(1+\frac{2M_{K}}{h\epsilon}\bigg)\frac{M M_{K}}{h\epsilon\underline{\alpha}^{2}}$.}
\end{aligned}
\end{equation*}
Consequently, we have the following result:
\begin{equation}
\begin{aligned}\label{eqt:uniform convergence result}
&\sqrt{Nh}(\hat{W}_{N}^{h}-\mathbb{E}[W^{h}(\pi;\cdot)])=\sqrt{Nh}\Bigg(\frac{\frac{1}{N}\underset{i=1}{\overset{N}{\sum}}W_{i}^{h}(\pi,\cdot)}{\frac{1}{N}\underset{j=1}{\overset{N}{\sum}}\frac{K_{h}(\pi(X_{j})-A_{j})}{f_{0}(A_{j}|X_{j})}}-\mathbb{E}[W^{h}(\pi;\cdot)]\Bigg)\overset{d}{\rightarrow}Z^{h}(\cdot)
\end{aligned}
\end{equation}
uniformly in Banach space $\mathcal{C}([\underline{\alpha},\bar{\alpha}])$ of continuous function $\psi:[\underline{\alpha},\bar{\alpha}]\rightarrow\mathbb{R}$ equipped with the sup norm $\|\psi\|:=\underset{x\in[\underline{\alpha},\bar{\alpha}]}{\sup}|\psi(x)|$ (see \cite{araujo1980central}). 

\noindent Define the functional
\begin{equation*}
\begin{aligned}
G(\psi,\alpha)=\alpha\log\psi(\alpha)+\alpha\eta\quad\text{and}\quad V(\psi)=\underset{\alpha\in[\underline{\alpha},\bar{\alpha}]}{\inf}G(\psi,\alpha).
\end{aligned}
\end{equation*}
By the Danskin's Theorem, $V(\cdot)$ is the directional differentiable at any $\mu\in\mathcal{C}([\underline{\alpha},\bar{\alpha}])$, (which is denoted as $V_{\mu}^{'}(\cdot)$), such that
\begin{equation*}
\begin{aligned}
V_{\mu}^{'}(\nu)=\underset{\alpha\in \bar{X}(\mu)}{\inf}\;\frac{\alpha}{\mu(\alpha)}\nu(\alpha),\quad\forall\nu\in\mathcal{C}([\underline{\alpha},\bar{\alpha}])
\end{aligned}
\end{equation*}
where $\bar{X}(\mu)=\underset{\alpha\in[\underline{\alpha},\bar{\alpha}]}{\arg\min}\;\{\alpha\log(\mu(\alpha))+\alpha\eta\}$. Here, $V_{\mu}^{'}(\nu)$ is the directional derivative of $V(\mu)$ at $\mu$ in the direction of $\nu$. Notice that for those $\psi\in\mathcal{C}[\underline{\alpha},\bar{\alpha}]$ such that $\underset{x\in[\underline{\alpha},\bar{\alpha}]}{\min}\psi(x)\geq m>0$, $V(\cdot)$ is a Lipschitz continuous function. The corresponding proofs are given as follows: for $\psi$ and  $\bar{\psi}$ lies $\mathcal{A}=\{\psi\in\mathcal{C}([\underline{\alpha},\bar{\alpha}]):\underset{x\in[\underline{\alpha},\bar{\alpha}]}{\min}\psi(x)\geq m>0\}$, we have
\begin{equation*}
\begin{aligned}
V(\psi)&=\underset{\alpha\in[\underline{\alpha},\bar{\alpha}]}{\inf}\{\alpha\log\psi(\alpha)+\alpha\eta\}\\
&=\underset{\alpha\in[\underline{\alpha},\bar{\alpha}]}{\inf}\{\alpha\log\bar{\psi}(\alpha)+\alpha\eta+\alpha\log\psi(\alpha)-\alpha\log\bar{\psi}(\alpha)\}\\
&\geq \underset{\alpha\in[\underline{\alpha},\bar{\alpha}]}{\inf}\{\alpha\log\bar{\psi}(\alpha)+\alpha\eta\}+\underset{\alpha\in[\underline{\alpha},\bar{\alpha}]}{\inf}\{\alpha\log\psi(\alpha)-\alpha\log\bar{\psi}(\alpha)\}.
\end{aligned}
\end{equation*}
Hence, we have
\begin{equation*}
\begin{aligned}
&V(\bar{\psi})-V(\psi)\leq \underset{\alpha\in[\underline{\alpha},\bar{\alpha}]}{\sup}\{\alpha\log\bar{\psi}(\alpha)-\alpha\log\psi(\alpha)\}\\
&\leq \underset{\alpha\in[\underline{\alpha},\bar{\alpha}]}{\sup}\bigg\{\frac{\alpha}{\beta(\alpha)}[\bar{\psi}(\alpha)-\psi(\alpha)]\bigg\}\leq \frac{\bar{\alpha}}{m}\underset{\alpha\in[\underline{\alpha},\bar{\alpha}]}{\sup}[\bar{\psi}(\alpha)-\psi(\alpha)]\leq\frac{\bar{\alpha}}{m}\|\bar{\psi}-\psi\|.
\end{aligned}
\end{equation*}
Here, $\beta(\alpha)$ lies between $\bar{\psi}(\alpha)$ and $\psi(\alpha)$. Similarly, we can also show that $V(\psi)-V(\bar{\psi})\leq\frac{\bar{\alpha}}{m}\|\psi-\bar{\psi}\|$. As such, we conclude that $|V(\psi)-V(\bar{\psi})|\leq\frac{\bar{\alpha}}{m}\|\psi-\bar{\psi}\|$ and $V(\cdot)$ is a Lipschitz continuous function for those $\psi\in\mathcal{C}([\underline{\alpha},\bar{\alpha}])$ such that $\underset{x\in[\underline{\alpha},\bar{\alpha}]}{\min}\psi(x)\geq m>0$.

\noindent Note that $\mathbb{E}\bigg[e^{-\frac{Y(\pi(X))}{\alpha}}\bigg]\geq e^{-\frac{M}{\underline{\alpha}}}>0$.
Besides, we have $0\leq\mathbb{E}[W^{h}(\pi,\alpha)]=\mathbb{E}\bigg[e^{-\frac{Y(\pi(X))}{\alpha}}\bigg]+O(h^{2})$.
When $h\rightarrow 0$, $\mathbb{E}[W^{h}(\pi,\alpha)]\rightarrow\mathbb{E}\big[e^{-\frac{Y(\pi(X))}{\alpha}}\big]>0$. Hence, for sufficiently small $h$, $\mathbb{E}[W^{h}(\pi,\alpha)]>0$ and we can conclude that $V(\cdot)$ is Hadamard differentiable at $\mathbb{E}\bigg[e^{-\frac{Y(\pi(X))}{\alpha}}\bigg]$ and $\mathbb{E}[W^{h}(\pi,\alpha)]$. By the Functional Delta Theorem, we have
\begin{equation*}
\begin{aligned}
\sqrt{Nh}(V(\hat{W}^{h}(\pi,\cdot))-V(\mathbb{E}[W^{h}(\pi,\cdot)]))\overset{d}{\rightarrow}V_{\mathbb{E}[W^{h}(\pi,\cdot)]}^{'}(Z^{h}).
\end{aligned}
\end{equation*}
According to Claim \ref{lemma:prop of fun phi2} of Proposition \ref{lemma:finite optimal study}, we see that $\alpha\log(\mathbb{E}[W^{h}(\pi,\alpha)])+\alpha\eta$ is a strictly convex function for $\alpha>0$. Further, when $h\rightarrow 0$, we have
{\small
\begin{equation*}
\begin{aligned}
&V_{\mathbb{E}[W^{h}(\pi,\cdot)]}^{'}(Z^{h})\rightarrow\\
&\frac{\alpha_{\ast}(\pi)Z^{0}(\alpha_{\ast}(\pi))}{\mathbb{E}\bigg[\exp\bigg(-\frac{Y(\pi(X))}{\alpha_{\ast}(\pi)}\bigg)\bigg]}=\mathcal{N}\bigg(0,\frac{\alpha_{\ast}^{2}(\pi)\mathbb{V}_{\pi}(\alpha_{\ast}(\pi))}{\bigg(\mathbb{E}\bigg[\exp\bigg(-\frac{Y(\pi(X))}{\alpha_{\ast}(\pi)}\bigg)\bigg]\bigg)^{2}}\bigg),
\end{aligned}
\end{equation*}
}\noindent
where
\begin{equation*}
\begin{aligned}
\mathbb{V}_{\pi}(\alpha_{\ast}(\pi))&=\big(\int K^{2}(u)du\big)\\
&\quad\times\bigg\{\mathbb{E}\bigg[\mathbb{E}\bigg[\frac{e^{-\frac{2Y}{\alpha_{\ast}(\pi)}}}{f_{0}(\pi(X)|X)}\bigg|A=\pi(X),X\bigg]\bigg]\\
&\quad\quad\quad+\mathbb{E}\bigg[\frac{1}{f_{0}(\pi(X)|X)}\bigg](\mathbb{E}[e^{-\frac{Y(\pi(X))}{\alpha_{\ast}(\pi)}}])^{2}\\
&\quad\quad\quad-2\mathbb{E}\bigg[\mathbb{E}\bigg[\frac{e^{-\frac{Y}{\alpha_{\ast}(\pi)}}}{f_{0}(\pi(X)|X)}\bigg|A=\pi(X),X\bigg]\bigg]\mathbb{E}[e^{-\frac{Y(\pi(X))}{\alpha_{\ast}(\pi)}}]\bigg\}.
\end{aligned}
\end{equation*}
Besides, we simplify the quantity $\sqrt{Nh}(\mathbb{E}[W^{h}(\pi,\alpha)]-\mathbb{E}[e^{-\frac{Y(\pi(X))}{\alpha}}])$, the derivations are presented as follows: since
\begin{equation*}
\begin{aligned}
&\mathbb{E}[W^{h}(\pi,\alpha)]=\mathbb{E}\bigg[\mathbb{E}\bigg[\frac{K_{h}(\pi(X)-A)e^{-\frac{Y}{\alpha}}}{f_{0}(A|X)}\bigg|X\bigg]\bigg]\\
=&\mathbb{E}\bigg[\mathbb{E}\bigg[e^{-\frac{Y}{\alpha}}\bigg|A=\pi(X),X\bigg]\bigg]+h^{2}\times\\
&\underbrace{\frac{\big(\int u^{2}K(u)du\big)}{2}\mathbb{E}\bigg[\mathbb{E}\bigg[ e^{-\frac{Y}{\alpha}}\frac{\partial_{aa}^{2}f_{0}(y|\pi(X),X)}{f_{0}(y|\pi(X),X)}\bigg|A=\pi(X),X\bigg]\bigg]}_{\coloneqq B_{\pi}(\alpha)}+O(h^{3})\\
=&\mathbb{E}\bigg[e^{-\frac{Y(\pi(X))}{\alpha}}\bigg]+B_{\pi}(\alpha)h^{2}+O(h^{3}),
\end{aligned}
\end{equation*}
we conclude that
\begin{equation*}
\begin{aligned}
\sqrt{Nh}(\mathbb{E}[W^{h}(\pi,\alpha)]-\mathbb{E}[e^{-\frac{Y(\pi(X))}{\alpha}}])=\sqrt{Nh}(B_{\pi}^{w}(\alpha)h^{2}+O(h^{3})).
\end{aligned}
\end{equation*}
Under the given convergence assumptions (i.e., $N\rightarrow \infty$, $h\rightarrow 0$, $Nh\rightarrow \infty$ and $Nh^{5}\rightarrow C\in[0,\infty)$), we notice that $\sqrt{Nh}(\mathbb{E}[W^{h}(\pi,\alpha)]-\mathbb{E}[e^{-\frac{Y(\pi(X))}{\alpha}}])$ is Lipschitz continuous w.r.t. $\alpha$ when $\alpha\in[\underline{\alpha},\bar{\alpha}]$, and we can thus conclude that
\begin{equation*}
\begin{aligned}
\sqrt{Nh}(\mathbb{E}[W^{h}(\pi,\cdot)]-\mathbb{E}[e^{-\frac{Y(\pi(X))}{\cdot}}])&\overset{d}{\rightarrow}\sqrt{Nh}B_{\pi}^{w}(\cdot)h^{2}.
\end{aligned}
\end{equation*}
Hence, by applying the Functional Delta Theorem again, we have
\begin{equation*}
\begin{aligned}
&\sqrt{Nh}(V(\mathbb{E}[W^{h}(\pi,\cdot)])-V(\mathbb{E}[e^{-\frac{Y(\pi(X))}{\cdot}}]))\overset{d}{\rightarrow}V_{\mathbb{E}\big[e^{-\frac{Y(\pi(X))}{\cdot}}\big]}^{'}(\sqrt{Nh}(B_{\pi}(\cdot)h^{2}),
\end{aligned}
\end{equation*}
where
\begin{equation*}
\begin{aligned}
&V_{\mathbb{E}\big[e^{-\frac{Y(\pi(X))}{\cdot}}\big]}^{'}(\sqrt{Nh}(B_{\pi}(\cdot)h^{2})=\sqrt{Nh}\frac{\alpha_{\ast}(\pi)}{\mathbb{E}\bigg[\exp\bigg(-\frac{Y(\pi(X))}{\alpha_{\ast}(\pi)}\bigg)\bigg]}B_{\pi}(\alpha_{\ast}(\pi))h^{2}\\
\end{aligned}
\end{equation*}
We rewrite $\hat{Q}_{\text{DRO}}^{h}(\pi)$ as follows:
\begin{equation*}
\begin{aligned}
\hat{Q}_{\text{DRO}}^{h}(\pi)&=\underset{\alpha\geq 0}{\max}\;\Bigg\{-\alpha\log\frac{\frac{1}{N}\underset{i=1}{\overset{N}{\sum}}W_{i}^{h}(\pi,\alpha)}{\frac{1}{N}\underset{i=1}{\overset{N}{\sum}}\frac{K_{h}(\pi(X_{i})-A_{i})}{f_{0}(A_{i}|X_{i})}}-\alpha\eta\Bigg\}=-\underset{\alpha\geq 0}{\min}\;\Bigg\{\alpha\log\frac{\frac{1}{N}\underset{i=1}{\overset{N}{\sum}}W_{i}^{h}(\pi,\alpha)}{\frac{1}{N}\underset{i=1}{\overset{N}{\sum}}\frac{K_{h}(\pi(X_{i})-A_{i})}{f_{0}(A_{i}|X_{i})}}+\alpha\eta\Bigg\}.
\end{aligned}
\end{equation*}
Similarly, we also rewrite $Q_{\text{DRO}}(\pi)$ as follows:
\begin{equation*}
\begin{aligned}
Q_{\text{DRO}}(\pi)&=\underset{\alpha\geq 0}{\max}\;\bigg\{-\alpha\log \mathbb{E}\bigg[e^{-\frac{Y(\pi(X))}{\alpha}}\bigg]-\alpha\eta\bigg\}=-\underset{\alpha\geq 0}{\min}\;\bigg\{\alpha\log \mathbb{E}\bigg[e^{-\frac{Y(\pi(X))}{\alpha}}\bigg]+\alpha\eta\bigg\}.
\end{aligned}
\end{equation*}
The optimal solution of $Q_{\text{DRO}}(\pi)$ is $\alpha_{\ast}(\pi)$ which is finite according to Claim \ref{lemma:prop of fun phi} of Proposition \ref{lemma:finite optimal study}. According to our definition of $V(\psi)$, we have $Q_{\text{DRO}}(\pi)=-V\bigg(\mathbb{E}\bigg[e^{-\frac{Y(\pi(X))}{\alpha}}\bigg]\bigg)$.

\noindent Our objective is showing that 
\begin{equation*}
\begin{aligned}
\mathbb{P}\{\hat{Q}_{\text{DRO}}^{h}(\pi)\neq -V(\hat{W}^{h}(\pi,\alpha))\}\rightarrow 0
\end{aligned}
\end{equation*}
under the convergence conditions (i.e., $N\rightarrow\infty$, $h\rightarrow 0$, $Nh\rightarrow\infty$, and $Nh^{5}\rightarrow C\in[0,\infty)$). First, the convergence of 
\begin{equation*}
\begin{aligned}
&\sqrt{Nh}(\hat{W}_{N}^{h}-\mathbb{E}[W^{h}(\pi;\alpha)])=\sqrt{Nh}\bigg(\frac{\frac{1}{N}\underset{i=1}{\overset{N}{\sum}}W_{i}^{h}(\pi,\alpha)}{\frac{1}{N}\underset{j=1}{\overset{N}{\sum}}\frac{K_{h}(\pi(X_{j})-A_{j})}{f_{0}(A_{j}|X_{j})}}-\mathbb{E}[W^{h}(\pi;\alpha)]\bigg)\overset{d}{\rightarrow}Z^{h}(\alpha)
\end{aligned}
\end{equation*}
also implies the uniform convergence
\begin{equation*}
\begin{aligned}
&\underset{\alpha\in[\underline{\alpha},\bar{\alpha}]}{\sup}\bigg|\hat{W}^{h}(\pi;\alpha)-\mathbb{E}[W^{h}(\pi;\alpha)]\bigg|=\underset{\alpha\in[\underline{\alpha},\bar{\alpha}]}{\sup}\bigg|\frac{\frac{1}{N}\underset{i=1}{\overset{N}{\sum}}W_{i}^{h}(\pi,\alpha)}{\frac{1}{N}\underset{j=1}{\overset{N}{\sum}}\frac{K_{h}(\pi(X_{j})-A_{j})}{f_{0}(A_{j}|X_{j})}}-\mathbb{E}[W^{h}(\pi;\alpha)]\bigg|\rightarrow 0\quad \text{a.s..}
\end{aligned}
\end{equation*}
As a result, we can show that
\begin{equation*}
\begin{aligned}
\underset{\alpha\in[\underline{\alpha},\bar{\alpha}]}{\sup}\big|\big(\alpha\log\hat{W}^{h}(\pi;\alpha)+\alpha\eta\big)-\big(\alpha\log\mathbb{E}[W^{h}(\pi;\alpha)]+\alpha\eta\big)\big|\rightarrow 0
\end{aligned}
\end{equation*}
almost surely. The derivations are as follows:
\begin{equation*}
\begin{aligned}
&\bigg|\bigg(\alpha\log\hat{W}^{h}(\pi;\alpha)+\alpha\eta\bigg)-\bigg(\alpha\log\mathbb{E}[W^{h}(\pi;\alpha)]+\alpha\eta\bigg)\bigg|\\
=&\bigg|\alpha\log\hat{W}^{h}(\pi;\alpha)-\alpha\log\mathbb{E}[W^{h}(\pi;\alpha)]\bigg|=|\alpha|\bigg|\log\hat{W}^{h}(\pi;\alpha)-\log\mathbb{E}[W^{h}(\pi;\alpha)]\bigg|\\
\leq&|\alpha|\bigg|\log\bar{W}^{h}(\pi;\alpha)-\log\mathbb{E}[W^{h}(\pi;\alpha)]\bigg|+|\alpha|\bigg|\log\frac{1}{N}\underset{j=1}{\overset{N}{\sum}}\frac{K_{h}(\pi(X_{j})-A_{j})}{f_{0}(A_{j}|X_{j})}\bigg|\\
\leq&|\alpha|\bigg|\log\frac{1}{N}\underset{j=1}{\overset{N}{\sum}}\frac{K_{h}(\pi(X_{j})-A_{j})}{f_{0}(A_{j}|X_{j})}\bigg|+|\alpha|\bigg|\frac{1}{\beta(\alpha)}\bigg||\bar{W}^{h}(\pi;\alpha)-\mathbb{E}[W^{h}(\pi;\alpha)]|\\
\leq&|\alpha|\bigg|\log\frac{1}{N}\underset{j=1}{\overset{N}{\sum}}\frac{K_{h}(\pi(X_{j})-A_{j})}{f_{0}(A_{j}|X_{j})}\bigg|+\frac{|\bar{\alpha}|}{\tilde{M}}|\bar{W}^{h}(\pi;\alpha)-\mathbb{E}[W^{h}(\pi;\alpha)]|\quad \text{for some $\tilde{M}>0$.}
\end{aligned}
\end{equation*}
Here, $\beta(\alpha)$ lies in between $\bar{W}^{h}(\pi;\alpha)$ and $\mathbb{E}[W^{h}(\pi;\alpha)]$. To justify the last inequality, note that 
\begin{equation*}
\begin{aligned}
0<\tilde{M}=&\underset{\alpha\in[\underline{\alpha},\bar{\alpha}]}{\inf}\min\{\bar{W}^{h}(\pi;\alpha),\mathbb{E}[W^{h}(\pi;\alpha)]\}\leq\min\{\bar{W}^{h}(\pi;\alpha),\mathbb{E}[W^{h}(\pi;\alpha)]\}\leq \beta(\alpha)
\end{aligned}
\end{equation*}
and
\begin{equation*}
\begin{aligned}
\beta(\alpha)&\leq\max\{\bar{W}^{h}(\pi;\alpha),\mathbb{E}[W^{h}(\pi;\alpha)]\}\leq\underset{\alpha\in[\underline{\alpha},\bar{\alpha}]}{\sup}\max\{\bar{W}^{h}(\pi;\alpha),\mathbb{E}[W^{h}(\pi;\alpha)]\}.
\end{aligned}
\end{equation*}
Together with the result from Claim \ref{lemma:useful asy result} of Proposition \ref{result:convergence S_N_h} where $\frac{1}{N}\underset{j=1}{\overset{N}{\sum}}\frac{K_{h}(\pi(X_{j})-A_{j})}{f_{0}(A_{j}|X_{j})}\overset{a.s.}{\rightarrow} 1$ when $N\rightarrow\infty$ and $h\rightarrow 0$, we conclude that
\begin{equation*}
\begin{aligned}
&\underset{\alpha\in[\underline{\alpha},\bar{\alpha}]}{\sup}\big|\big(\alpha\log\hat{W}^{h}(\pi;\alpha)+\alpha\eta\big)-\big(\alpha\log\mathbb{E}[W^{h}(\pi;\alpha)]+\alpha\eta\big)\big|\\
\leq &\frac{|\bar{\alpha}|}{\tilde{M}}\underset{\alpha\in[\underline{\alpha},\bar{\alpha}]}{\sup}|\bar{W}^{h}(\pi;\alpha)-\mathbb{E}[W^{w;h}(\pi;\alpha)]|+\underset{\alpha\in[\underline{\alpha},\bar{\alpha}]}{\sup}|\alpha|\bigg|\log\frac{1}{N}\underset{j=1}{\overset{N}{\sum}}\frac{K_{h}(\pi(X_{j})-A_{j})}{f_{0}(A_{j}|X_{j})}\bigg|\rightarrow 0\quad\textit{a.s..}
\end{aligned}
\end{equation*}
The above result means that, for arbitrary $\epsilon>0$, given the event
{\small
\begin{equation*}
\begin{aligned}
&\bigg\{\underset{\alpha\in[\underline{\alpha},\bar{\alpha}]}{\sup}\big|\big(\alpha\log\hat{W}^{h}(\pi;\alpha)+\alpha\eta\big)-\big(\alpha\log\mathbb{E}[W^{h}(\pi;\alpha)]+\alpha\eta\big)\big|\leq \epsilon\bigg\},
\end{aligned}
\end{equation*}
}\noindent
we have
\begin{align}
\begin{aligned}\label{eqt:W_hat and W_exp}
&\bigg(\alpha\log\mathbb{E}[W^{h}(\pi;\alpha)]+\alpha\eta\bigg)-\epsilon\leq \bigg(\alpha\log\hat{W}^{h}(\pi;\alpha)+\alpha\eta\bigg)\leq \bigg(\alpha\log\mathbb{E}[W^{h}(\pi;\alpha)]+\alpha\eta\bigg)+\epsilon.
\end{aligned}\raisetag{15pt}
\end{align}
Suppose that
\begin{equation*}
\begin{aligned}
\alpha_{\ast}^{h;\mathrm{IPW}}(\pi)=\underset{\alpha\geq 0}{\arg\min}\;\bigg\{\alpha\log\mathbb{E}[W^{h}(\pi,\alpha)]+\alpha\eta\bigg\},
\end{aligned}
\end{equation*}
we therefore have that 
\begin{equation*}
\begin{aligned}
&\alpha_{\ast}^{h;\mathrm{IPW}}(\pi)\log\mathbb{E}[W^{h}(\pi,\alpha_{\ast}^{h;\mathrm{IPW}}(\pi))]+\alpha_{\ast}^{h;\mathrm{IPW}}(\pi)\eta\\
&<\min\bigg\{\underline{\alpha}\log\mathbb{E}[W^{h}(\pi,\underline{\alpha})]+\underline{\alpha}\eta,\;\bar{\alpha}\log\mathbb{E}[W^{h}(\pi,\bar{\alpha})]+\bar{\alpha}\eta\bigg\}.
\end{aligned}
\end{equation*}
From Eqn. \eqref{eqt:W_hat and W_exp}, for sufficiently small $\epsilon>0$ such that $\epsilon$ is negligible, we can treat
{\small
\begin{equation*}
\begin{aligned}
\bigg(\alpha\log\hat{W}^{h}(\pi;\alpha)+\alpha\eta\bigg)\approx \bigg(\alpha\log\mathbb{E}[W^{h}(\pi;\alpha)]+\alpha\eta\bigg).
\end{aligned}
\end{equation*}
}\noindent
In addition, for small $h$ such that the convergence conditions still hold, $\alpha_{\ast}^{h;\mathrm{IPW}}(\pi)$ should lie in $[\underline{\alpha},\bar{\alpha}]$ since $\alpha_{\ast}^{h;\mathrm{IPW}}(\pi)\rightarrow\alpha_{\ast}(\pi)$. As a consequence, we also have
\begin{equation*}
\begin{aligned}
&\alpha_{\ast}^{h;\mathrm{IPW}}(\pi)\log \hat{W}^{h}(\pi,\alpha_{\ast}^{h;\mathrm{IPW}}(\pi))+\alpha_{\ast}^{h;\mathrm{IPW}}(\pi)\eta\\
&<\min\bigg\{\underline{\alpha}\log \hat{W}^{h}(\pi,\underline{\alpha})+\underline{\alpha}\eta,\;\bar{\alpha}\log \hat{W}^{h}(\pi,\bar{\alpha})+\bar{\alpha}\eta\bigg\}.
\end{aligned}
\end{equation*}
Due to the fact that $\alpha\log \hat{W}^{h}(\pi,\alpha)+\alpha\eta$ is a convex function, we can conclude that $\hat{Q}_{\text{DRO}}^{h}(\pi)=-V(\hat{W}^{h}(\pi,\alpha))$ under the convergence conditions:
\begin{equation*}
\begin{aligned}
&\sqrt{Nh}(\hat{Q}_{\text{DRO}}^{h}(\pi)-Q_{\text{DRO}}(\pi))=\sqrt{Nh}(\hat{Q}_{\text{DRO}}^{h}(\pi)+V\bigg(\mathbb{E}\bigg[e^{-\frac{Y(\pi(X))}{\alpha}}\bigg]\bigg))\\
&=\sqrt{Nh}(\hat{Q}_{\text{DRO}}^{h}(\pi)+V(\hat{W}^{h}(\pi,\alpha)))\\
&\quad-\sqrt{Nh}(V(\hat{W}^{h}(\pi,\alpha))-V(\mathbb{E}[\hat{W}^{h}(\pi,\alpha)]))\\
&\quad-\sqrt{Nh}(V(\mathbb{E}[\hat{W}^{h}(\pi,\alpha)])-V\bigg(\mathbb{E}\bigg[e^{-\frac{Y(\pi(X))}{\alpha}}\bigg]\bigg))\\
\Rightarrow &\sqrt{Nh}(\hat{Q}_{\text{DRO}}^{h}(\pi)-Q_{\text{DRO}}(\pi)+V(\mathbb{E}[\hat{W}^{h}(\pi,\alpha)])-V\bigg(\mathbb{E}\bigg[e^{-\frac{Y(\pi(X))}{\alpha}}\bigg]\bigg))\\
&=\sqrt{Nh}(\hat{Q}_{\text{DRO}}^{h}(\pi)+V(\hat{W}^{h}(\pi,\alpha)))-\sqrt{Nh}(V(\hat{W}^{h}(\pi,\alpha))-V(\mathbb{E}[\hat{W}^{h}(\pi,\alpha)])).
\end{aligned}
\end{equation*}
Ultimately, using the Slutsky's Theorem, we have
\begin{equation*}
\begin{aligned}
&\sqrt{Nh}\Biggl(\hat{Q}_{\text{DRO}}^{h}(\pi)-Q_{\text{DRO}}(\pi)+\frac{\alpha_{\ast}(\pi)B_{\pi}(\alpha_{\ast}(\pi))h^{2}}{\mathbb{E}\big[\exp\big(-\frac{Y(\pi(X))}{\alpha_{\ast}(\pi)}\big)\big]}\Biggl)\overset{d}{\rightarrow}\mathcal{N}\left(0,\frac{\alpha_{\ast}^{2}(\pi)\mathbb{V}_{\pi}(\alpha_{\ast}(\pi))}{\bigg(\mathbb{E}\bigg[\exp\bigg(-\frac{Y(\pi(X))}{\alpha_{\ast}(\pi)}\bigg)\bigg]\bigg)^{2}}\right).
\end{aligned}
\end{equation*}
\end{proof}
\subsubsection{Proof of Theorem \ref{thm:statistical performance normalized}}\label{sec:Theorem main 2}
\begin{theorem*}\label{thm:statistical performance normalized appendix}
Suppose that the kernel function $K(x)$ is bounded where $|K(x)|\leq M_{K}$. Given $\delta>0$, $h>0$, and a policy class $\Pi$, denote 
\begin{equation*}
\begin{gathered}
\mathcal{F}_{\Pi}\coloneqq\Bigg\{\frac{K_{h}(\pi(X)-A)}{f_{0}(A|X)}:\pi\in\Pi\Bigg\},\\
\mathcal{F}_{\Pi,x}\coloneqq\Bigg\{\frac{K_{h}(\pi(X)-A)\mathbf{1}_{\{Y(\pi(X))\leq x\}}}{f_{0}(A|X)}:\pi\in\Pi,\;x\in[0,M]\Bigg\}.
\end{gathered}
\end{equation*}
Denote $\alpha_{\cdot}^{\ast}(\pi;h)=\underset{\alpha\geq 0}{\sup}\{-\alpha\log\mathbb{E}_{\mathbb{P}_{N}^{h}}[e^{-\frac{\cdot}{\alpha}}]-\alpha\eta\}$ where $\mathbb{E}_{\mathbb{P}_{N}^{h}}[\mathcal{Z}]=\underset{i=1}{\overset{N}{\sum}}\frac{\frac{K_{h}(\pi(X_{i})-A_{i})}{f_{0}(A_{i}|X_{i})}}{\underset{j=1}{\overset{N}{\sum}}\frac{K_{h}(\pi(X_{j})-A_{j})}{f_{0}(A_{j}|X_{j})}}\mathcal{Z}_{i}$. 
If $\underset{\pi\in\Pi}{\sup}|\alpha_{Y}^{\ast}(\pi;h)-\alpha_{Y(\pi(X))}^{\ast}(\pi;h)|=o(h)$, then
with probability $1-\delta$, we have
\begin{gather*}
\begin{aligned}
R_{\mathrm{DRO}}(\hat{\pi}_{\mathrm{DRO}}^{h})\leq &\frac{4}{\epsilon}\mathcal{R}_{N}(\mathcal{F}_{\Pi,x})+\frac{4}{\epsilon}\mathcal{R}_{N}(\mathcal{F}_{\Pi})+\frac{4\sqrt{2}M_{K}\sqrt{\ln\big(\frac{2}{\delta}\big)}}{h\epsilon^{2}\sqrt{N}}+O(h^{2}).
\end{aligned}
\end{gather*}
\end{theorem*}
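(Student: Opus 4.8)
The plan is to start from the standard policy‑learning decomposition. Writing $\pi^{\ast}=\pi_{\mathrm{DRO}}^{\ast}$ and $\hat{\pi}=\hat{\pi}_{\mathrm{DRO}}^{h}$, I would insert and remove $\hat{Q}_{\mathrm{DRO}}^{h}(\pi^{\ast})$ and $\hat{Q}_{\mathrm{DRO}}^{h}(\hat{\pi})$ in $Q_{\mathrm{DRO}}(\pi^{\ast})-Q_{\mathrm{DRO}}(\hat{\pi})$; since $\hat{\pi}$ maximises $\hat{Q}_{\mathrm{DRO}}^{h}$ the cross term is nonpositive, leaving $R_{\mathrm{DRO}}(\hat{\pi})\le 2\sup_{\pi\in\Pi}\bigl|\hat{Q}_{\mathrm{DRO}}^{h}(\pi)-Q_{\mathrm{DRO}}(\pi)\bigr|$. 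Everything then reduces to bounding this supremum, and the factor $2$ here will combine with a factor $2$ from symmetrisation to give the $4$'s in the statement.

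\textbf{Step 2: pass to a CDF deviation.} For fixed $\pi$, both $\hat{Q}_{\mathrm{DRO}}^{h}(\pi)$ and $Q_{\mathrm{DRO}}(\pi)$ have the form $\sup_{\alpha\ge 0}\{-\alpha\log\mathbb{E}_{\mathbb{P}}[e^{-Y/\alpha}]-\alpha\eta\}$ — the former for the $\pi$‑weighted self‑normalised empirical law $\mathbb{P}_{N}^{h}$ on the observed $Y_{i}$, the latter for the law $\mathbb{P}_{0}^{\pi}$ of $Y(\pi(X))$. First I would use the hypothesis $\sup_{\pi}|\alpha_{Y}^{\ast}(\pi;h)-\alpha_{Y(\pi(X))}^{\ast}(\pi;h)|=o(h)$ to replace the observed outcomes by the potential outcomes $Y_{i}(\pi(X_{i}))$ inside the empirical DRO value at uniform cost $o(h)$, and then apply Proposition~\ref{lemma:QDRO and quantile}: its first inequality bounds the value gap by $\sup_{t}|\mathcal{Q}_{\mathbb{P}_{0}^{\pi}}(t)-\mathcal{Q}_{\tilde{\mathbb{P}}_{N}^{h}}(t)|$, and its second, using the density lower bound $\epsilon$ for $\mathbb{P}_{0}^{\pi}$, by $\epsilon^{-1}\sup_{x\in[0,M]}|F_{\mathbb{P}_{0}^{\pi}}(x)-\hat{F}_{\pi}^{h}(x)|$, where $\hat{F}_{\pi}^{h}(x)=\hat{G}_{\pi}(x)/S_{N}^{h}$ and $\hat{G}_{\pi}(x)=\frac1N\sum_{i}\frac{K_{h}(\pi(X_{i})-A_{i})}{f_{0}(A_{i}|X_{i})}\mathbf{1}_{\{Y_{i}(\pi(X_{i}))\le x\}}$. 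This is where the constants $\epsilon^{-1}$ in front of the Rademacher terms come from.

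\textbf{Step 3: decompose, symmetrise, concentrate.} Next I would write $\hat{F}_{\pi}^{h}(x)-F_{\mathbb{P}_{0}^{\pi}}(x)$ with centred numerator and denominator. Using unconfoundedness and $\int K_{h}=1$ (the computation behind Lemma~\ref{lemma:expectation equal IPW} and Proposition~\ref{result:convergence S_N_h}) one gets $\mathbb{E}[\hat{G}_{\pi}(x)]=F_{\mathbb{P}_{0}^{\pi}}(x)+O(h^{2})$ and $\mathbb{E}[S_{N}^{h}]=1+O(h^{2})$ uniformly in $(\pi,x)$, and on the probability‑$\to 1$ event $S_{N}^{h}\ge\frac12$ (Proposition~\ref{result:convergence S_N_h}) the ratio only costs a bounded factor; hence $\sup_{\pi,x}|\hat{F}_{\pi}^{h}(x)-F_{\mathbb{P}_{0}^{\pi}}(x)|\lesssim \sup_{\pi,x}|\hat{G}_{\pi}(x)-\mathbb{E}\hat{G}_{\pi}(x)|+\sup_{\pi}|S_{N}^{h}-\mathbb{E}S_{N}^{h}|+O(h^{2})$. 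The first supremum is the empirical process indexed by $\mathcal{F}_{\Pi,x}$, the second by $\mathcal{F}_{\Pi}$; symmetrisation bounds their expectations by $2\mathcal{R}_{N}(\mathcal{F}_{\Pi,x})$ and $2\mathcal{R}_{N}(\mathcal{F}_{\Pi})$. Since $|K|\le M_{K}$ and positivity force each summand into $[0,M_{K}/(h\epsilon)]$, McDiarmid's bounded‑difference inequality shows each supremum exceeds its mean by at most $\frac{M_{K}}{h\epsilon}\sqrt{\ln(2/\delta)/(2N)}$ with probability $1-\delta/2$; a union bound over the two events and back‑substitution through Steps~1--2 give exactly the stated inequality, the $o(h)$ of the outcome‑swap being absorbed into the $O(h^{2})$ remainder.

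\textbf{Main obstacle.} The hard part is the uniform (over $\pi\in\Pi$ and $x\in[0,M]$) empirical‑process control of Step~3 — in particular carrying the random self‑normaliser $S_{N}^{h}(\pi)$ through the ratio uniformly and cleanly isolating the two suprema as the Rademacher complexities of $\mathcal{F}_{\Pi}$ and $\mathcal{F}_{\Pi,x}$. A secondary delicate point is the observed‑to‑potential‑outcome passage in Step~2, which is precisely what the hypothesis on $\alpha_{Y}^{\ast}-\alpha_{Y(\pi(X))}^{\ast}$ is designed to absorb, together with checking that the density hypothesis of Proposition~\ref{lemma:QDRO and quantile} applies with constant $\epsilon$.
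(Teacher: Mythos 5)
Your proposal follows essentially the same route as the paper's proof: the same reduction $R_{\mathrm{DRO}}(\hat{\pi}_{\mathrm{DRO}}^{h})\leq 2\sup_{\pi}|\hat{Q}_{\mathrm{DRO}}^{h}(\pi)-Q_{\mathrm{DRO}}(\pi)|$, the same use of the $o(h)$ hypothesis to swap $Y$ for $Y(\pi(X))$ under $\mathbb{P}_{N}^{h}$, the same passage through Proposition~\ref{lemma:QDRO and quantile} to a uniform CDF deviation, and the same split into the two empirical processes indexed by $\mathcal{F}_{\Pi,x}$ and $\mathcal{F}_{\Pi}$ controlled by symmetrisation plus bounded-difference concentration. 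The only cosmetic differences (explicit McDiarmid plus union bound versus the paper's citation of the standard uniform-law bound, and your slightly more careful handling of the self-normaliser) do not change the argument.
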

\begin{proof}
Recall that $R_{\mathrm{DRO}}(\pi)=Q_{\mathrm{DRO}}(\pi_{\mathrm{DRO}}^{\ast})-Q_{\mathrm{DRO}}(\pi)$. Now, denote
\begin{equation*}
\begin{aligned}
\hat{Q}_{\mathrm{DRO}}^{h}(\pi)=\underset{\alpha\geq 0}{\sup}\Bigg\{-\alpha\log\frac{\frac{1}{N}\underset{i=1}{\overset{N}{\sum}}W_{i}^{h}(\pi,\alpha)}{\frac{1}{N}\underset{j=1}{\overset{N}{\sum}}\frac{K_{h}(\pi(X_{j})-A_{j})}{f_{0}(A_{j}|X_{j})}}-\alpha\eta\Bigg\}.
\end{aligned}
\end{equation*}
As {\small $\hat{\pi}_{\mathrm{DRO}}^{h}\in\underset{\pi\in\Pi}{\arg\max}\;\hat{Q}_{\mathrm{DRO}}^{h}(\pi)$}, we have {\small $\hat{Q}_{\mathrm{DRO}}^{h}(\hat{\pi}_{\mathrm{DRO}}^{h})\geq\hat{Q}_{\mathrm{DRO}}^{h}(\pi)$} for any $\pi\in\Pi$. In particular, $\hat{Q}_{\mathrm{DRO}}^{h}(\hat{\pi}_{\mathrm{DRO}}^{h})\geq\hat{Q}_{\mathrm{DRO}}^{h}(\pi_{\mathrm{DRO}}^{\ast})$. As a consequence, we can reformulate $R_{\mathrm{DRO}}(\hat{\pi}_{\mathrm{DRO}}^{h})$ as follows: 
\begin{align}
&R_{\mathrm{DRO}}(\hat{\pi}_{\mathrm{DRO}}^{h})=Q_{\mathrm{DRO}}(\pi_{\mathrm{DRO}}^{\ast})-Q_{\mathrm{DRO}}(\hat{\pi}_{\mathrm{DRO}}^{h})\nonumber\\
&=Q_{\mathrm{DRO}}(\pi_{\mathrm{DRO}}^{\ast})-\hat{Q}_{\mathrm{DRO}}^{h}(\hat{\pi}_{\mathrm{DRO}}^{h})+\hat{Q}_{\mathrm{DRO}}^{h}(\hat{\pi}_{\mathrm{DRO}}^{h})-Q_{\mathrm{DRO}}(\hat{\pi}_{\mathrm{DRO}}^{h})\nonumber\\
&\leq Q_{\mathrm{DRO}}(\pi_{\mathrm{DRO}}^{\ast})-\hat{Q}_{\mathrm{DRO}}^{h}(\pi_{\mathrm{DRO}}^{\ast})+\hat{Q}_{\mathrm{DRO}}^{h}(\hat{\pi}_{\mathrm{DRO}}^{h})-Q_{\mathrm{DRO}}(\hat{\pi}_{\mathrm{DRO}}^{h})\nonumber\\
&\leq 2\underset{\pi\in\Pi}{\sup}\;|Q_{\mathrm{DRO}}(\pi)-\hat{Q}_{\mathrm{DRO}}^{h}(\pi)|\nonumber\\
&= 2\underset{\pi\in\Pi}{\sup}\;\bigg|\underset{\alpha\geq 0}{\sup}\Bigg\{-\alpha\log\frac{\frac{1}{N}\underset{i=1}{\overset{N}{\sum}}W_{i}^{h}(\pi,\alpha)}{\frac{1}{N}\underset{i=1}{\overset{N}{\sum}}\frac{K_{h}(\pi(X_{j})-A_{j})}{f_{0}(A_{j}|X_{j})}}-\alpha\eta\Bigg\}-\underset{\alpha\geq 0}{\sup}\{-\alpha\log\mathbb{E}[e^{-\frac{Y(\pi(X))}{\alpha}}]-\alpha\eta\}\bigg|.\label{eqt:RDO normalized}
\end{align}
Denote the empirical measure $\mathbb{P}_{N}$ and the empirical weighted measure $\mathbb{P}_{N}^{h}$ such that 
\begin{equation*}
\begin{aligned}
\mathbb{E}_{\mathbb{P}_{N}}[\mathcal{Z}]=\frac{1}{N}\underset{i=1}{\overset{N}{\sum}}\mathcal{Z}_{i}\quad \text{and}\quad\mathbb{E}_{\mathbb{P}_{N}^{h}}[\mathcal{Z}]=\underset{i=1}{\overset{N}{\sum}}\frac{\frac{K_{h}(\pi(X_{i})-A_{i})}{f_{0}(A_{i}|X_{i})}}{\underset{j=1}{\overset{N}{\sum}}\frac{K_{h}(\pi(X_{j})-A_{j})}{f_{0}(A_{j}|X_{j})}}\mathcal{Z}_{i}.
\end{aligned}
\end{equation*}
Eqn. \eqref{eqt:RDO normalized} can be bounded as follows:
\begin{subequations}
\begin{align}
&\begin{aligned}
\text{Eqn. \eqref{eqt:RDO normalized}}&=2\underset{\pi\in\Pi}{\sup}\;\bigg|\underset{\alpha\geq 0}{\sup}\{-\alpha\log\frac{\frac{1}{N}\underset{i=1}{\overset{N}{\sum}}\frac{K_{h}(\pi(X_{i})-A_{i})}{f_{0}(A_{i}|X_{i})}e^{-\frac{Y_{i}}{\alpha}}}{\frac{1}{N}\underset{j=1}{\overset{N}{\sum}}\frac{K_{h}(\pi(X_{j})-A_{j})}{f_{0}(A_{j}|X_{j})}}-\alpha\eta\}-\underset{\alpha\geq 0}{\sup}\{-\alpha\log\mathbb{E}[e^{-\frac{Y(\pi(X))}{\alpha}}]-\alpha\eta\}\bigg|
\end{aligned}\nonumber\\
&\begin{aligned}
\begin{split}
\leq&2\underset{\pi\in\Pi}{\sup}\;\bigg|\underset{\alpha\geq 0}{\sup}\{-\alpha\log\mathbb{E}_{\mathbb{P}_{N}^{h}}[e^{-\frac{Y}{\alpha}}]-\alpha\eta\}-\underset{\alpha\geq 0}{\sup}\{-\alpha\log\mathbb{E}_{\mathbb{P}_{N}^{h}}[e^{-\frac{Y(\pi(X))}{\alpha}}]-\alpha\eta\}\bigg|
\end{split}
\end{aligned}\label{eqt:IPW_bound_1 normalized}\\
&\begin{aligned}
\begin{split}
&+2\underset{\pi\in\Pi}{\sup}\;\bigg|\underset{\alpha\geq 0}{\sup}\{-\alpha\log\mathbb{E}_{\mathbb{P}_{N}^{h}}[e^{-\frac{Y(\pi(X))}{\alpha}}]-\alpha\eta\}-\underset{\alpha\geq 0}{\sup}\{-\alpha\log\mathbb{E}[e^{-\frac{Y(\pi(X))}{\alpha}}]-\alpha\eta\}\bigg|.
\end{split}
\end{aligned}\label{eqt:IPW_bound_2 normalized}
\end{align}
\end{subequations}
We consider Eqns. \eqref{eqt:IPW_bound_1 normalized} - \eqref{eqt:IPW_bound_2 normalized} sequentially.
\paragraph{\underline{\textbf{Eqn. \eqref{eqt:IPW_bound_1 normalized}}:}} 
According to the given conditions, we know that
\begin{equation*}
\begin{aligned}
&\text{Eqn. \eqref{eqt:IPW_bound_1 normalized}}\leq 2o(h)=O(h^{2}).
\end{aligned}
\end{equation*}
\paragraph{\underline{\textbf{Eqn. \eqref{eqt:IPW_bound_2 normalized}}:}} Since $|\underset{x}{\sup}\;f(x)-\underset{x}{\sup}\;g(x)|\leq\underset{x}{\sup}|f(x)-g(x)|$, we have
\begin{equation*}
\begin{aligned}
\text{Eqn. \eqref{eqt:IPW_bound_2 normalized}}&\leq2\underset{\pi\in\Pi}{\sup}\;\underset{\alpha\geq 0}{\sup}\;\alpha\big|\log\mathbb{E}_{\mathbb{P}_{N}^{h}}[e^{-\frac{Y(\pi(X))}{\alpha}}]-\log\mathbb{E}[e^{-\frac{Y(\pi(X))}{\alpha}}]\big|.
\end{aligned}
\end{equation*}
Combining the resulting bounds of Eqns. \eqref{eqt:IPW_bound_1 normalized} and \eqref{eqt:IPW_bound_2 normalized}, we have
\begin{equation*}
\begin{aligned}
\text{Eqn. \eqref{eqt:RDO normalized}}&\leq\;O(h^{2})+2\underset{\pi\in\Pi}{\sup}\;\underset{\alpha\geq 0}{\sup}\;\alpha\big|\log\mathbb{E}_{\mathbb{P}_{N}^{h}}[e^{-\frac{Y(\pi(X))}{\alpha}}]-\log\mathbb{E}[e^{-\frac{Y(\pi(X))}{\alpha}}]\big|.
\end{aligned}
\end{equation*}
It remains to bound the term 
\begin{equation*}
\begin{aligned}
2\underset{\pi\in\Pi}{\sup}\;\underset{\alpha\geq 0}{\sup}\;\alpha\big|\log\mathbb{E}_{\mathbb{P}_{N}^{h}}[e^{-\frac{Y(\pi(X))}{\alpha}}]-\log\mathbb{E}[e^{-\frac{Y(\pi(X))}{\alpha}}]\big|.
\end{aligned}
\end{equation*}
We now show that the claim holds. The derivations are as follows: first, undergoing usual derivations gives that
\begin{equation*}
\begin{aligned}
\mathbb{E}\bigg[\frac{K_{h}(\pi(X)-A)}{f_{0}(A|X)}\mathbf{1}_{\{Y(\pi(X))\leq x\}}\bigg]=\mathbb{E}[\mathbf{1}_{\{Y(\pi(X))\leq x\}}]+O(h^{2}).
\end{aligned}
\end{equation*}
Hence, we have
{\small
\begin{equation*}
\begin{aligned}
&2\underset{\pi\in\Pi}{\sup}\;\underset{\alpha\geq 0}{\sup}\;\alpha\big|\log\mathbb{E}_{\mathbb{P}_{N}^{h}}[e^{-\frac{Y(\pi(X))}{\alpha}}]-\log\mathbb{E}[e^{-\frac{Y(\pi(X))}{\alpha}}]\big|\\
&\overset{\ddagger}{\leq}2\underset{\pi\in\Pi}{\sup}\;\underset{x\in[0,M]}{\sup}\;\frac{1}{\epsilon}\big|\mathbb{E}_{\mathbb{P}_{N}^{h}}[\mathbf{1}_{\{Y(\pi(X))\leq x\}}]-\mathbb{E}[\mathbf{1}_{\{Y(\pi(X))\leq x\}}]\big|\\
&\begin{aligned}
\leq2\underset{\pi\in\Pi}{\sup}\;\underset{x\in[0,M]}{\sup}\;&\frac{1}{\epsilon}\left|\frac{1}{NS_{N}^{h}}\underset{i=1}{\overset{N}{\sum}}\frac{K_{h}(\pi(X_{i})-A_{i})\mathbf{1}_{\{Y_{i}(\pi(X_{i}))\leq x\}}}{f_{0}(A_{i}|X_{i})}-\mathbb{E}\bigg[\frac{K_{h}(\pi(X)-A)}{f_{0}(A|X)}\mathbf{1}_{\{Y(\pi(X))\leq x\}}\bigg]\right|+O(h^{2})
\end{aligned}\\
&\begin{aligned}
\leq2\underset{\pi\in\Pi,x\in[0,M]}{\sup}\;&\frac{1}{\epsilon}\left|\frac{1}{N}\underset{i=1}{\overset{N}{\sum}}\frac{K_{h}(\pi(X_{i})-A_{i})\mathbf{1}_{\{Y_{i}(\pi(X_{i}))\leq x\}}}{f_{0}(A_{i}|X_{i})}-\mathbb{E}\bigg[\frac{K_{h}(\pi(X)-A)}{f_{0}(A|X)}\mathbf{1}_{\{Y(\pi(X))\leq x\}}\bigg]\right|+O(h^{2})
\end{aligned}\\
&+2\underset{\pi\in\Pi,x\in[0,M]}{\sup}\;\frac{1}{\epsilon}\bigg|\frac{(S_{N}^{h}-1)}{NS_{N}^{h}}\underset{i=1}{\overset{N}{\sum}}\frac{K_{h}(\pi(X_{i})-A_{i})\mathbf{1}_{\{Y_{i}(\pi(X_{i}))\leq x\}}}{f_{0}(A_{i}|X_{i})}\bigg|.
\end{aligned}
\end{equation*}
}\noindent
$\ddagger$ is due to Proposition \ref{lemma:QDRO and quantile}. By \cite{wainwright2019high}, we have with probability at least $1-e^{-\frac{Nh^{2}\epsilon^{2}\gamma^{2}}{2M_{K}^{2}}}$ that
\begin{align}
&\begin{aligned}
\underset{\pi\in\Pi,x\in[0,M]}{\sup}\;\frac{1}{\epsilon}\left|\frac{1}{N}\underset{i=1}{\overset{N}{\sum}}\frac{K_{h}(\pi(X_{i})-A_{i})\mathbf{1}_{\{Y_{i}(\pi(X_{i}))\leq x\}}}{f_{0}(A_{i}|X_{i})}-\mathbb{E}\bigg[\frac{K_{h}(\pi(X)-A)}{f_{0}(A|X)}\mathbf{1}_{\{Y(\pi(X))\leq x\}}\bigg]\right|
\end{aligned}\nonumber\\
&\leq\frac{1}{\epsilon}(2\mathcal{R}_{N}(\mathcal{F}_{\Pi,x})+\gamma),\label{eqt:Rademacher bound prob 1}\raisetag{20pt}
\end{align}
where the function class $\mathcal{F}_{\Pi,x}$ is defined such that
\begin{equation*}
\begin{aligned}
\mathcal{F}_{\Pi,x}:=\Bigg\{\frac{K_{h}(\pi(X)-A)\mathbf{1}_{\{Y(\pi(X))\leq x\}}}{f_{0}(A|X)}:\pi\in\Pi,\;x\in[0,M]\Bigg\}.
\end{aligned}
\end{equation*}
Additionally, we have
\begin{equation*}
\begin{aligned}
&\underset{\pi\in\Pi,x\in[0,M]}{\sup}\;\frac{1}{\epsilon}\bigg|\frac{(S_{N}^{h}-1)}{NS_{N}^{h}}\underset{i=1}{\overset{N}{\sum}}\frac{K_{h}(\pi(X_{i})-A_{i})\mathbf{1}_{\{Y_{i}(\pi(X_{i}))\leq x\}}}{f_{0}(A_{i}|X_{i})}\bigg|\leq\underset{\pi\in\Pi}{\sup}\;\frac{1}{\epsilon}|S_{N}^{h}-1|.
\end{aligned}
\end{equation*}
Again, by \cite{wainwright2019high}, we have with probability at least $1-e^{-\frac{Nh^{2}\epsilon^{2}\gamma^{2}}{2M_{K}^{2}}}$ that
\begin{equation}
\begin{aligned}\label{eqt:Rademacher bound prob 2}
\underset{\pi\in\Pi}{\sup}|S_{N}^{h}-1|\leq 2\mathcal{R}_{N}(\mathcal{F}_{\Pi})+\gamma+O(h^{2})
\end{aligned}
\end{equation}
where the function class $\mathcal{F}_{\Pi}$ is defined such that
\begin{equation*}
\begin{aligned}
\mathcal{F}_{\Pi}:=\Bigg\{f_{\pi}(X,Y,A)=\frac{K_{h}(\pi(X)-A)}{f_{0}(A|X)}:\pi\in\Pi\Bigg\}.
\end{aligned}
\end{equation*}
Hence, combining Eqns. \eqref{eqt:Rademacher bound prob 1} and \eqref{eqt:Rademacher bound prob 2}, the following result hold: with probability $1-2e^{-\frac{Nh^{2}\epsilon^{2}\gamma^{2}}{2M_{K}^{2}}}$, we have
\begin{equation}
\begin{aligned}\label{eqt:Rademacher bound result}
&2\underset{\pi\in\Pi}{\sup}\;\underset{\alpha\geq 0}{\sup}\;\alpha\big|\log\mathbb{E}_{\mathbb{P}_{N}^{h}}[e^{-\frac{Y(\pi(X))}{\alpha}}]-\log\mathbb{E}[e^{-\frac{Y(\pi(X))}{\alpha}}]\big|\\
&\leq\frac{2}{\epsilon}(2\mathcal{R}_{N}(\mathcal{F}_{\Pi,x})+\gamma)+\frac{2}{\epsilon}(2\mathcal{R}_{N}(\mathcal{F}_{\Pi})+\gamma+O(h^{2})).
\end{aligned}
\end{equation}
\end{proof}
\subsubsection{Proof of Corollary \ref{cor:RDRO bound finite covering}}\label{sec:Corollary main}
We restate the corollary here.
\begin{corollary*}
If the kernel function $K(x)$ is Lipschitz continuous with constant $L_{K}>0$ (i.e., $|K(x)-K(y)|\leq L_{K}|x-y|$) and there exists a finite value $\kappa$ such that
\begin{equation*}
\begin{aligned}
\kappa\coloneqq\mathbb{E}\Bigg[\int_{0}^{\frac{2M_{K}h}{L_{K}}}\sqrt{\log\;\mathfrak{N}\big(t,\Pi(\{X_{1},\cdots,X_{N}\}),\|\cdot\|_{\mathcal{L}_{2}(\mathbb{P}_{N})}\big)}dt\Bigg].
\end{aligned}
\end{equation*}
Then for some constant $\mathcal{K}$, Eqn. \eqref{eqt:statistical performance normalized Rademacher results} becomes
\begin{gather*}
\begin{aligned}
&R_{\mathrm{DRO}}(\hat{\pi}_{\mathrm{DRO}}^{h})\leq \frac{288L_{K}\kappa}{\sqrt{N}h^{2}\epsilon^{2}}+\frac{192M_{K}(\sqrt{\log\mathcal{K}}+2\sqrt{2})}{\sqrt{N}h\epsilon^{2}}+\frac{4M_{K}\sqrt{2\log\big(\frac{2}{\delta}\big)}}{\sqrt{N}h\epsilon^{2}}+O(h^{2}).
\end{aligned}
\end{gather*}
\end{corollary*}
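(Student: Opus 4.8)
The plan is to obtain the corollary as a direct specialization of Theorem~\ref{thm:statistical performance normalized}: its bound \eqref{eqt:statistical performance normalized Rademacher results} already isolates the two empirical–process quantities $\mathcal{R}_{N}(\mathcal{F}_{\Pi,x})$ and $\mathcal{R}_{N}(\mathcal{F}_{\Pi})$, so it suffices to bound each of them by the covering number of $\Pi$ and to choose the slack parameter $\gamma$ so that the resulting event has probability $1-\delta$. The engine is Dudley's entropy (chaining) integral, which for a class $\mathcal{F}$ of sample–functions gives
\[
\mathcal{R}_{N}(\mathcal{F})\;\le\;\frac{12}{\sqrt{N}}\,\mathbb{E}\!\left[\int_{0}^{\mathrm{diam}_{2}(\mathcal{F})}\sqrt{\log\mathfrak{N}\big(t,\mathcal{F}(\{X_{1},\dots,X_{N}\}),\|\cdot\|_{\mathcal{L}_{2}(\mathbb{P}_{N})}\big)}\,dt\right],
\]
so the two things to control are the $\mathcal{L}_{2}(\mathbb{P}_{N})$–diameter of these classes and the translation of their metric entropy into that of $\Pi$.

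For the diameter, boundedness of the kernel gives $|K_{h}(u)|=h^{-1}|K(u/h)|\le M_{K}/h$, and Assumption~\ref{ass:positivity} gives $f_{0}(a|x)\ge\epsilon$, so every element of $\mathcal{F}_{\Pi}$ and $\mathcal{F}_{\Pi,x}$ is bounded by $M_{K}/(h\epsilon)$ and hence $\mathrm{diam}_{2}\le 2M_{K}/(h\epsilon)$. For the entropy, the scaled kernel $u\mapsto K_{h}(u)$ is Lipschitz with constant $L_{K}/h^{2}$ (differentiate/rescale $|K(x)-K(y)|\le L_{K}|x-y|$), and dividing by $f_{0}(A|X)\ge\epsilon$ costs a further factor $1/\epsilon$; thus $\pi\mapsto K_{h}(\pi(X)-A)/f_{0}(A|X)$ is $\tfrac{L_{K}}{h^{2}\epsilon}$–Lipschitz from $(\Pi,\|\cdot\|_{\mathcal{L}_{2}(\mathbb{P}_{N})})$ to $(\mathcal{F}_{\Pi},\|\cdot\|_{\mathcal{L}_{2}(\mathbb{P}_{N})})$, so that a $\tfrac{h^{2}\epsilon}{L_{K}}t$–cover of $\Pi$ induces a $t$–cover of $\mathcal{F}_{\Pi}$, i.e.
\[
\mathfrak{N}\big(t,\mathcal{F}_{\Pi}(\{X_{i}\}),\|\cdot\|_{\mathcal{L}_{2}(\mathbb{P}_{N})}\big)\;\le\;\mathfrak{N}\big(\tfrac{h^{2}\epsilon}{L_{K}}t,\Pi(\{X_{i}\}),\|\cdot\|_{\mathcal{L}_{2}(\mathbb{P}_{N})}\big).
\]
For $\mathcal{F}_{\Pi,x}$ one additionally multiplies by $\mathbf{1}_{\{Y(\pi(X))\le x\}}$, which is absorbed by a product–of–classes covering estimate using that the threshold family $\{\mathbf{1}_{\{\cdot\le x\}}:x\in[0,M]\}$ is VC of index one, whose own metric entropy is only logarithmic and does not change the leading order (it only inflates the constants).

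To assemble: substituting the entropy bound into the chaining integral and changing variables $t\mapsto\tfrac{h^{2}\epsilon}{L_{K}}t$ converts the integral over $[0,2M_{K}/(h\epsilon)]$ into $\tfrac{L_{K}}{h^{2}\epsilon}\int_{0}^{2M_{K}h/L_{K}}\sqrt{\log\mathfrak{N}(t,\Pi(\{X_{i}\}),\|\cdot\|_{\mathcal{L}_{2}(\mathbb{P}_{N})})}\,dt$, whose expectation is $\tfrac{L_{K}\kappa}{h^{2}\epsilon}$ up to the product–class constant; this gives $\mathcal{R}_{N}(\mathcal{F}_{\Pi,x})=O\!\big(\tfrac{L_{K}\kappa}{\sqrt{N}\,h^{2}\epsilon}\big)$, hence the first term of \eqref{eqt:statistical performance normalized}. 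For $\mathcal{R}_{N}(\mathcal{F}_{\Pi})$ I would not use full chaining but a single–resolution (Massart finite–class) step: a minimal cover of $\mathcal{F}_{\Pi}$ at a fixed fraction of its diameter has some cardinality $\mathcal{K}$, and Massart's lemma gives $\mathcal{R}_{N}(\mathcal{F}_{\Pi})\le\tfrac{M_{K}}{h\epsilon}\cdot\tfrac{c(\sqrt{\log\mathcal{K}}+c')}{\sqrt{N}}$, which is the source of the $\sqrt{\log\mathcal{K}}$ term carrying the $1/h$ (rather than $1/h^{2}$) scaling. Finally, in the proof of Theorem~\ref{thm:statistical performance normalized} each of the two concentration events holds with probability at least $1-e^{-Nh^{2}\epsilon^{2}\gamma^{2}/(2M_{K}^{2})}$, so taking $\gamma=M_{K}\sqrt{2\log(2/\delta)}/(h\epsilon\sqrt{N})$ makes the union bound $1-\delta$ and produces the term $\tfrac{4M_{K}\sqrt{2\log(2/\delta)}}{\sqrt{N}\,h\epsilon^{2}}$; collecting all multiplicative constants yields \eqref{eqt:statistical performance normalized}.

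The hard part will be the entropy translation for $\mathcal{F}_{\Pi,x}$: the indicator $\mathbf{1}_{\{Y(\pi(X))\le x\}}$ depends on the potential outcome, which is not assumed Lipschitz or even continuous in the treatment, so the Lipschitz–in–$\pi$ argument that works cleanly for $\mathcal{F}_{\Pi}$ does not transfer verbatim. I expect to handle this by exploiting that the weight $K_{h}(\pi(X)-A)$ concentrates near $A=\pi(X)$, so that — up to the $O(h^{2})$ bias already booked in Theorem~\ref{thm:statistical performance normalized} — the effective indicator is the $\pi$–free $\mathbf{1}_{\{Y\le x\}}$, a VC–$1$ class, and then bounding the product class. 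A secondary, purely clerical difficulty is carrying the chaining constant, the product–class factors, and the powers of $1/\epsilon$ and $1/h$ through the estimates so that they aggregate to the explicit $288$, $192$, and $4$ in the statement.
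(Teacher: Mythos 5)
Your overall architecture (Dudley chaining on both classes, a Lipschitz translation of the entropy of $\mathcal{F}_{\Pi}$ into that of $\Pi$ with the $L_{K}/(h^{2}\epsilon)$ factor and the change of variables that produces the upper limit $2M_{K}h/L_{K}$, a product-of-classes covering bound for $\mathcal{F}_{\Pi,x}$, and the choice $\gamma=M_{K}\sqrt{2\log(2/\delta)}/(\sqrt{N}h\epsilon)$) matches the paper's proof. However, you misattribute the origin of the $\sqrt{\log\mathcal{K}}$ term, and the substitute you propose does not close. In the paper, $\mathcal{K}$ is the universal constant in the VC-type covering bound $\sup_{\mathbb{P}}\mathfrak{N}\big(t,\mathcal{F}_{\mathbf{I}},\|\cdot\|_{\mathcal{L}_{2}(\mathbb{P})}\big)\leq\mathcal{K}(1/t)^{2}$ for the threshold class $\mathcal{F}_{\mathbf{I}}=\{\mathbf{1}_{\{\cdot\leq x\}}:x\in[0,M]\}$; this bound is fed through Dudley's integral for the indicator factor of $\mathcal{F}_{\Pi,x}$, and $\int_{0}^{1}\sqrt{2\log(1/t)}\,dt\leq 2\sqrt{2}$ produces the additive $2\sqrt{2}$. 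So the entire term $\frac{192M_{K}(\sqrt{\log\mathcal{K}}+2\sqrt{2})}{\sqrt{N}h\epsilon^{2}}$ comes from $\mathcal{F}_{\Pi,x}$, while $\mathcal{F}_{\Pi}$ is handled by the same full chaining argument as the $\Pi$-part of $\mathcal{F}_{\Pi,x}$ and contributes only to the $\kappa$ term: the coefficient $288=192+96$ arises precisely because the $\Pi$-entropy integral appears once with constant $4\times 48/\epsilon^{2}$ (from $\mathcal{F}_{\Pi,x}$ after splitting $\sqrt{a+b}\leq\sqrt{a}+\sqrt{b}$) and once with constant $4\times 24/\epsilon^{2}$ (from $\mathcal{F}_{\Pi}$).

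Your alternative for $\mathcal{F}_{\Pi}$ --- a single-resolution Massart step over a cover of cardinality $\mathcal{K}$ --- has two problems. First, Massart's lemma bounds the Rademacher complexity of the finite cover, not of $\mathcal{F}_{\Pi}$ itself; without chaining you must add back the discretization error at the chosen resolution, which reintroduces a covering-number-dependent term and cannot be hidden in a constant $c'$. Second, with this route $\mathcal{K}$ would be the cardinality of a cover of $\mathcal{F}_{\Pi}$ (hence dependent on $\Pi$, $h$, $\epsilon$, $N$), whereas in the statement $\mathcal{K}$ is a fixed universal constant; and you would lose the $96L_{K}\kappa/(\sqrt{N}h^{2}\epsilon^{2})$ contribution needed to reach $288$. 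On the point you flag as hard --- covering the composed indicators $\mathbf{1}_{\{Y(\pi(X))\leq x\}}$ --- the paper does not use your kernel-localization idea; it directly asserts that a $\frac{h\epsilon t}{2M_{K}}$-cover of the threshold class induces the needed cover of the composed indicators and combines it with the $\Pi$-cover via the elementary inequality $(a+b)^{2}\leq 2a^{2}+2b^{2}$ applied to the kernel and indicator factors. That step is itself stated tersely in the paper, so your concern is legitimate, but your proposed fix (absorbing the indicator's $\pi$-dependence into the $O(h^{2})$ bias) is not what is done and would require its own justification.
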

\begin{proof}
We consider bounding $\mathcal{R}_{N}(\mathcal{F}_{\Pi,x})$ and $\mathcal{R}_{N}(\mathcal{F}_{\Pi})$ in Eqn. \eqref{eqt:Rademacher bound result}.
Consider the class $\mathcal{F}_{\Pi}$. Since $K(\cdot)$ is Lipschitz, we have
\begin{equation*}
\begin{aligned}
&\sqrt{\frac{1}{N}\underset{i=1}{\overset{N}{\sum}}\bigg(\frac{K_{h}(\pi_{1}(X_{i})-A_{i})}{f_{0}(A_{i}|X_{i})}-\frac{K_{h}(\pi_{2}(X_{i})-A_{i})}{f_{0}(A_{i}|X_{i})}\bigg)^{2}}\\
\leq&\sqrt{\frac{L_{K}^{2}}{Nh^{4}\epsilon^{2}}\underset{i=1}{\overset{N}{\sum}}(\pi_{1}(X_{i})-\pi_{2}(X_{i}))^{2}}=\frac{L_{K}}{h^{2}\epsilon}\sqrt{\frac{1}{N}\underset{i=1}{\overset{N}{\sum}}(\pi_{1}(X_{i})-\pi_{2}(X_{i}))^{2}}.
\end{aligned}
\end{equation*}
Thus, we can conclude the following result for the coverage number:
\begin{equation}
\begin{aligned}\label{eqt:covering number bound 1}
&\mathfrak{N}(t,\mathcal{F}_{\Pi}(X_{1},\cdots,X_{N}),\|\cdot\|_{\mathcal{L}_{2}(\mathbb{P}_{N})})\\
&\leq \mathfrak{N}\bigg(t,\Pi(X_{1},\cdots,X_{N}),\frac{L_{K}}{h^{2}\epsilon}\|\cdot\|_{\mathcal{L}_{2}(\mathbb{P}_{N})}\bigg)=\mathfrak{N}\bigg(\frac{h^{2}\epsilon}{L_{K}}t,\Pi(X_{1},\cdots,X_{N}),\|\cdot\|_{\mathcal{L}_{2}(\mathbb{P}_{N})}\bigg).
\end{aligned}
\end{equation}
Here, $\mathfrak{N}(t,\Pi(X_{1},\cdots,X_{N}),\|\cdot\|_{\mathcal{L}_{2}(\mathbb{P}_{N})})$ is the covering number of $\Pi$ under the $\mathcal{L}_{2}$ norm with the probability measure $\mathbb{P}_{N}$. Mathematically, we can find a cover $\mathcal{A}$ for $\Pi$ such that for any $y\in\Pi$, there exists $\tilde{y}\in\mathcal{A}$ such that $\|\tilde{y}-y\|_{\mathcal{L}_{2}(\mathbb{P}_{N})} \leq t$. We then consider the class $\mathcal{F}_{\Pi,x}$, and we claim that
\begin{equation}
\begin{aligned}\label{eqt:covering number bound 2}
&\mathfrak{N}(t,\mathcal{F}_{\Pi,x}(X_{1},\cdots,X_{N}),\|\cdot\|_{\mathcal{L}_{2}(\mathbb{P}_{N})})\\
\leq&\mathfrak{N}\bigg(\frac{h^{2}\epsilon t}{2L_{K}},\Pi(X_{1},\cdots,X_{N}),\|\cdot\|_{\mathcal{L}_{2}(\mathbb{P}_{N})}\bigg)\times \underset{\mathbb{P}}{\sup}\;\mathfrak{N}\bigg(\frac{h\epsilon t}{2M_{K}},\mathcal{F}_{\mathbf{I}}(X_{1},\cdots,X_{N}),\|\cdot\|_{\mathcal{L}_{2}(\mathbb{P})}\bigg)\\
\coloneqq&\mathfrak{N}_{\Pi}(t)\times \mathfrak{N}_{\mathbf{I}}(t),
\end{aligned}
\end{equation}
where $\mathcal{F}_{\mathbf{I}}=\{f(t)=\mathbf{1}_{\{t\leq x\}}:x\in[0,M]\}$. Suppose now $\{\pi_{1},\cdots,\pi_{\mathfrak{N}_{\Pi}(t)}\}$ is a cover of $\Pi$ and $\{\mathbf{1}_{\{t\leq x_{1}\}},\cdots,\mathbf{1}_{\{t\leq x_{\mathfrak{N}_{\mathbf{I}}(t)}\}}\}$ is a cover of $\mathcal{F}_{\mathbf{I}}$ under the distance $\|\cdot\|_{\mathcal{L}_{2}(\mathbb{P}_{N})}$
We aim to show that $\mathcal{F}_{\Pi,x}^{t}$ is $t$-cover set of $\mathcal{F}_{\Pi,x}$, where
\begin{equation*}
\begin{aligned}
\mathcal{F}_{\Pi,x}^{t}=\Bigg\{\frac{K_{h}(\pi_{i}(X)-A)\mathbf{1}_{\{Y(\pi_{i}(X))\leq x_{j}\}}}{f_{0}(A|X)}:i\leq \mathfrak{N}_{\Pi}(t),j\leq \mathfrak{N}_{\mathbf{I}}(t)\Bigg\}.
\end{aligned}
\end{equation*}
Indeed, for any $f_{\pi,x}(X,Y,A)\in\mathcal{F}_{\Pi,x}$, we can pick $\tilde{\pi},\tilde{x}$ such that $f_{\tilde{\pi},\tilde{x}}(X,Y,A)\in\mathcal{F}_{\Pi,x}^{t}$ such that $\|\mathbf{1}_{\{Y(\pi(X))\leq x\}}-\mathbf{1}_{\{Y(\tilde{\pi}(X))\leq \tilde{x}\}}\|_{\mathcal{L}_{2}(\mathbb{P}_{N})}\leq \frac{h\epsilon t}{2M_{K}}$ and $\|\pi-\tilde{\pi}\|_{\mathcal{L}_{2}(\mathbb{P}_{N})}\leq \frac{h^{2}\epsilon t}{2L_{K}}$. Denote
\begin{equation*}
\begin{aligned}
\text{Diff}=\frac{1}{N}\underset{i=1}{\overset{N}{\sum}}&\left(\frac{K_{h}(\pi(X_{i})-A_{i})\mathbf{1}_{\{Y(\pi(X))\leq x\}}}{f_{0}(A_{i}|X_{i})}-\frac{K_{h}(\tilde{\pi}(X_{i})-A_{i})\mathbf{1}_{\{Y(\tilde{\pi}(X))\leq \tilde{x}\}}}{f_{0}(A_{i}|X_{i})}\right)^{2}
\end{aligned}
\end{equation*}
Then we have
\begin{equation*}
\begin{aligned}
&\text{Diff}\\
\leq&\frac{1}{h^{2}\epsilon^{2}}\times\frac{1}{N}\underset{i=1}{\overset{N}{\sum}}\left(K\bigg(\frac{\pi(X_{i})-A_{i}}{h}\bigg)\mathbf{1}_{\{Y_{i}(\pi(X_{i}))\leq x\}}-K\bigg(\frac{\tilde{\pi}(X_{i})-A_{i}}{h}\bigg)\mathbf{1}_{\{Y_{i}(\tilde{\pi}(X_{i}))\leq \tilde{x}\}}\right)^{2}\\
\leq&\frac{1}{h^{2}\epsilon^{2}}\times\Bigg\{\frac{2}{N}\underset{i=1}{\overset{N}{\sum}}\bigg[K\bigg(\frac{\pi(X_{i})-A_{i}}{h}\bigg)\times[\mathbf{1}_{\{Y_{i}(\pi(X_{i}))\leq x\}}-\mathbf{1}_{\{Y_{i}(\tilde{\pi}(X_{i}))\leq \tilde{x}\}}]\bigg]^{2}\\
&\quad\quad\quad\quad+\frac{2}{N}\underset{i=1}{\overset{N}{\sum}}\bigg(K\bigg(\frac{\pi(X_{i})-A_{i}}{h}\bigg)-K\bigg(\frac{\tilde{\pi}(X_{i})-A_{i}}{h}\bigg)\bigg)^{2}\times\mathbf{1}_{\{Y(\tilde{\pi}(X_{i}))\leq \tilde{x}\}}\Bigg\}\\
\leq&\frac{1}{h^{2}\epsilon^{2}}\Bigg\{\frac{2}{N}\underset{i=1}{\overset{N}{\sum}}M_{K}^{2}[\mathbf{1}_{\{Y_{i}(\pi(X_{i}))\leq x\}}-\mathbf{1}_{\{Y_{i}(\tilde{\pi}(X_{i}))\leq \tilde{x}\}}]^{2}+\frac{2}{N}\underset{i=1}{\overset{N}{\sum}}\frac{L_{K}^{2}}{h^{2}}(\pi(X_{i})-\tilde{\pi}(X_{i}))^{2}\Bigg\}\\
\leq&\frac{1}{h^{2}\epsilon^{2}}\bigg(2M_{K}^{2}\times\frac{h^{2}\epsilon^{2}t^{2}}{4M_{K}^{2}}+\frac{2L_{K}^{2}}{h^{2}}\times\frac{h^{4}\epsilon^{2}t^{2}}{4L_{K}^{2}}\bigg)\leq t^{2}.
\end{aligned}
\end{equation*}
Hence, we have
\begin{equation*}
\begin{aligned}
&\sqrt{\text{Diff}}\leq t.
\end{aligned}
\end{equation*}
To further proceed with the proof, we need the result of the Dudley's integral formula given in \cite{wainwright2019high}. We state the result here as a Proposition:
\begin{proposition}
Given that a function class $\mathcal{F}$ and Rademacher variables $(\sigma_{i})_{i=1}^{N}$ where $\mathbb{P}\{\sigma_{i}=1\}=\mathbb{P}\{\sigma_{i}=-1\}=\frac{1}{2}$, then we have
\begin{equation*}
\begin{aligned}
&\hat{\mathcal{R}}_{N}(\mathcal{F})=\mathbb{E}_{\sigma}\bigg[\underset{f\in\mathcal{F}}{\sup}\bigg|\frac{1}{N}\underset{i=1}{\overset{N}{\sum}}\sigma_{i}f(X_{i})\bigg|\bigg|X_{1},\cdots,X_{N}\bigg]\\
&\leq \frac{24}{\sqrt{N}}\int_{0}^{2b}\sqrt{\log\;\mathfrak{N}(t,\mathcal{F}(\{X_{1},\cdots,X_{N}\}),\|\cdot\|_{\mathcal{L}_{2}(\mathbb{P}_{N})})}dt,
\end{aligned}
\end{equation*}
where $b$ is chosen such that $\underset{f,g\in\mathcal{F}}{\sup}\|f-g\|_{\mathcal{L}_{2}(\mathbb{P}_{N})}\leq 2b$ and $\mathbb{E}_{\sigma}[\cdot]$ is the expectation over the Rademacher variables.
\end{proposition} 
\noindent Now, we apply Dudley's integral formula on $\mathcal{F}_{\Pi}$. Since $\underset{f,g\in\mathcal{F}_{\Pi}}{\sup}\|f-g\|_{\mathcal{L}_{2}(\mathbb{P}_{N})}\leq \frac{2M_{K}}{\epsilon h}$ and we have that $\mathcal{R}_{N}(\mathcal{F})=\mathbb{E}[\hat{\mathcal{R}}_{N}(\mathcal{F})]$ where the expectation $\mathbb{E}[\cdot]$ is taken over $X_{1},\cdots,X_{N}$ according to Definition \ref{def:Rademacher}, we obtain
\begin{equation*}
\begin{aligned}
&\mathcal{R}_{N}(\mathcal{F}_{\Pi})\leq\frac{24}{\sqrt{N}}\mathbb{E}\bigg[\int_{0}^{\frac{2M_{K}}{\epsilon h}}\sqrt{\log\mathfrak{N}(t,\mathcal{F}_{\Pi}(\{X_{1},\cdots,X_{N}\}),\|\cdot\|_{\mathcal{L}_{2}(\mathbb{P}_{N})})}dt\bigg]\\
\leq&\frac{24}{\sqrt{N}}\mathbb{E}\bigg[\int_{0}^{\frac{2M_{K}}{\epsilon h}}\sqrt{\log\mathfrak{N}\bigg(\frac{h^{2}\epsilon t}{L_{K}},\Pi(\{X_{1},\cdots,X_{N}\}),\|\cdot\|_{\mathcal{L}_{2}(\mathbb{P}_{N})}\bigg)}dt\bigg]\\
=&\frac{24L_{K}}{\sqrt{N}h^{2}\epsilon}\mathbb{E}\bigg[\int_{0}^{\frac{2M_{K}h}{L_{K}}}\sqrt{\log\mathfrak{N}_{\Pi}(s)}ds\bigg].
\end{aligned}
\end{equation*}
where 
\begin{equation*}
\begin{gathered}
\mathfrak{N}_{\Pi}\bigg(\frac{h^{2}\epsilon t}{L_{K}}\bigg)=\mathfrak{N}\bigg(\frac{h^{2}\epsilon t}{L_{K}},\Pi(\{X_{1},\cdots,X_{N}\}),\|\cdot\|_{\mathcal{L}_{2}(\mathbb{P}_{N})}\bigg).
\end{gathered}
\end{equation*}
Simultaneously, since $\underset{f,g\in\mathcal{F}_{\Pi,x}}{\sup}\|f-g\|\leq\frac{2M_{K}}{\epsilon h}$, applying the Dudley's Integral formula on the $\mathcal{F}_{\Pi,x}$, together with the result given in Eqn. \eqref{eqt:covering number bound 2}, gives
\begin{equation*}
\begin{aligned}
&\mathcal{R}_{N}(\mathcal{F}_{\Pi,x})\leq \frac{24}{\sqrt{N}}\mathbb{E}\bigg[\int_{0}^{\frac{2M_{K}}{\epsilon h}}\sqrt{\log \mathfrak{N}(t,\mathcal{F}_{\Pi,x}(X_{1},\cdots,X_{N}),\|\cdot\|_{\mathcal{L}_{2}(\mathbb{P}_{N})})}dt\bigg]\\
&\leq \frac{24}{\sqrt{N}}\mathbb{E}\left[\int_{0}^{\frac{2M_{K}}{\epsilon h}}\sqrt{\log\mathfrak{N}_{\Pi}\bigg(\frac{h^{2}\epsilon t}{2L_{K}}\bigg)+\log \mathfrak{N}_{\mathcal{F}_{\mathbf{I}}}\bigg(\frac{h\epsilon t}{2M_{K}}\bigg)}dt\right],
\end{aligned}
\end{equation*}
where 
\begin{equation*}
\begin{gathered}
\mathfrak{N}_{\mathcal{F}_{\mathbf{I}}}\bigg(\frac{h\epsilon t}{2M_{K}}\bigg)=\underset{\mathbb{P}}{\sup}\;\mathfrak{N}\bigg(\frac{h\epsilon t}{2M_{K}},\mathcal{F}_{\mathbf{I}}(\{X_{1},\cdots,X_{N}\}),\|\cdot\|_{\mathcal{L}_{2}(\mathbb{P}_{N})}\bigg).
\end{gathered}
\end{equation*}
Further, according to \cite{van2000asymptotic}, we have
\begin{equation}
\begin{aligned}\label{eqt:covering number indicator function}
\underset{\mathbb{P}}{\sup}\;\mathfrak{N}(t,\mathcal{F}_{\mathbf{I}}(\{X_{1},\cdots,X_{N}\}),\mathbb{P})\leq\mathcal{K}\bigg(\frac{1}{t}\bigg)^{2}
\end{aligned}
\end{equation}
for any arbitrary $(X_{1},\cdots,X_{N})$, $N$, and some universal constant $\mathcal{K}$. Together with the fact that $\sqrt{a+b}\leq\sqrt{a}+\sqrt{b}$ for $a,\;b>0$ and the fact that $\int_{0}^{1}\sqrt{\log\big(\frac{1}{t}\big)}dt\leq \int_{0}^{1}\sqrt{\frac{1}{t}}dt=2$, we therefore conclude that
\begin{equation*}
\begin{aligned}
&\mathcal{R}_{N}(\mathcal{F}_{\Pi,x})\\
&\leq \frac{24}{\sqrt{N}}\mathbb{E}\Bigg[\int_{0}^{\frac{2M_{K}}{\epsilon h}}\sqrt{\log\;\mathfrak{N}_{\Pi}\bigg(\frac{h^{2}\epsilon t}{2L_{K}}\bigg)}dt\Bigg]+\frac{24}{\sqrt{N}}\mathbb{E}\Bigg[\int_{0}^{\frac{2M_{K}}{\epsilon h}} \sqrt{\log\mathfrak{N}_{\mathcal{F}_{\mathbf{I}}}\bigg(\frac{h\epsilon t}{2M_{K}}\bigg)}dt\Bigg]\\
&\leq \frac{48L_{K}}{\sqrt{N}h^{2}\epsilon}\mathbb{E}\Bigg[\int_{0}^{\frac{M_{K}h}{L_{K}}}\sqrt{\log\;\mathfrak{N}_{\Pi}(t)}dt\Bigg]+\frac{48M_{K}}{\sqrt{N}h\epsilon}\mathbb{E}\Bigg[\int_{0}^{1} \sqrt{\log\mathfrak{N}_{\mathcal{F}_{\mathbf{I}}}(t)}dt\Bigg]\\
&\leq \frac{48L_{K}}{\sqrt{N}h^{2}\epsilon}\mathbb{E}\Bigg[\int_{0}^{\frac{M_{K}h}{L_{K}}}\sqrt{\log\;\mathfrak{N}_{\Pi}(t)}dt\Bigg]+\frac{48M_{K}}{\sqrt{N}h\epsilon}\Bigg[\sqrt{\log\mathcal{K}}+\int_{0}^{1} \sqrt{2\log\bigg(\frac{1}{t}\bigg)} dt\Bigg]\\
&\leq \frac{48L_{K}}{\sqrt{N}h^{2}\epsilon}\mathbb{E}\Bigg[\int_{0}^{\frac{M_{K}h}{L_{K}}}\sqrt{\log\;\mathfrak{N}_{\Pi}(t)}dt\Bigg]+\frac{48M_{K}}{\sqrt{N}h\epsilon}[\sqrt{\log\mathcal{K}}+2\sqrt{2}].
\end{aligned}
\end{equation*}
As a result, we have
\begin{equation*}
\begin{aligned}
&2\underset{\pi\in\Pi}{\sup}\;\underset{\alpha\geq 0}{\sup}\;\alpha\big|\log\mathbb{E}_{\mathbb{P}_{N}^{h}}[e^{-\frac{Y(\pi(X))}{\alpha}}]-\log\mathbb{E}[e^{-\frac{Y(\pi(X))}{\alpha}}]\big|\\
&\leq\frac{2}{\epsilon}(2\mathcal{R}_{N}(\mathcal{F}_{\Pi,x})+\gamma)+\frac{2}{\epsilon}(2\mathcal{R}_{N}(\mathcal{F}_{\Pi})+\gamma+O(h^{2}))\\
&\begin{aligned}
\leq\frac{2}{\epsilon}&\left\{\frac{96L_{K}}{\sqrt{N}h^{2}\epsilon}\mathbb{E}\Bigg[\int_{0}^{\frac{M_{K}h}{L_{K}}}\sqrt{\log\;\mathfrak{N}_{\Pi}(t)}dt\Bigg]+\frac{96M_{K}}{\sqrt{N}h\epsilon}[\sqrt{\log\mathcal{K}}+2\sqrt{2}]+\gamma\right\}\\
\end{aligned}\\
&\quad+\frac{2}{\epsilon}\Bigg\{\frac{48L_{K}}{\sqrt{N}h^{2}\epsilon}\mathbb{E}\bigg[\int_{0}^{\frac{2M_{K}h}{L_{K}}}\sqrt{\log\;\mathfrak{N}_{\Pi}(s)}ds\bigg]+\gamma\Bigg\}+O(h^{2})\\
&=\frac{192L_{K}}{\sqrt{N}h^{2}\epsilon^{2}}\mathbb{E}\Bigg[\int_{0}^{\frac{M_{K}h}{L_{K}}}\sqrt{\log\;\mathfrak{N}_{\Pi}(t)}dt\Bigg]+\frac{192M_{K}(\sqrt{\log\mathcal{K}}+2\sqrt{2})}{\sqrt{N}h\epsilon^{2}}\\
&\quad+\Bigg\{\frac{96L_{K}}{\sqrt{N}h^{2}\epsilon^{2}}\mathbb{E}\bigg[\int_{0}^{\frac{2M_{K}h}{L_{K}}}\sqrt{\log\;\mathfrak{N}_{\Pi}(s)}ds\bigg]\Bigg\}+\frac{4\gamma}{\epsilon}+O(h^{2})\\
&\leq\Bigg\{\frac{288L_{K}}{\sqrt{N}h^{2}\epsilon^{2}}\mathbb{E}\bigg[\int_{0}^{\frac{2M_{K}h}{L_{K}}}\sqrt{\log\;\mathfrak{N}_{\Pi}(t)}dt\bigg]\Bigg\}+\frac{192M_{K}(\sqrt{\log\mathcal{K}}+2\sqrt{2})}{\sqrt{N}h\epsilon^{2}}+\frac{4\gamma}{\epsilon}+O(h^{2}).
\end{aligned}
\end{equation*}
To conclude, we set $\delta=2e^{-\frac{Nh^{2}\epsilon^{2}\gamma^{2}}{2M_{K}^{2}}}$ such that $\gamma=\frac{M_{K}\sqrt{2\log \big(\frac{2}{\delta}\big)}}{\sqrt{N}h\epsilon}$, then with probability $1-\delta$, we have
\begin{equation*}
\begin{aligned}
R_{\mathrm{DRO}}(\hat{\pi}_{\mathrm{DRO}}^{h})&\leq2M+\frac{288L_{K}}{\sqrt{N}h^{2}\epsilon^{2}}\mathbb{E}\Bigg[\int_{0}^{\frac{2M_{K}h}{L_{K}}}\sqrt{\log\;\mathfrak{N}_{\Pi}(t)}dt\Bigg]\\
&\quad+\frac{192M_{K}(\sqrt{\log\mathcal{K}}+2\sqrt{2})}{\sqrt{N}h\epsilon^{2}}+\frac{4M_{K}\sqrt{2\log\big(\frac{2}{\delta}\big)}}{\sqrt{N}h\epsilon^{2}}+O(h^{2}).
\end{aligned}
\end{equation*}
\end{proof}
\end{document}